\documentclass{amsart}

\usepackage{amsfonts}
\usepackage{amsmath}
\usepackage{enumerate}
\usepackage{tcolorbox}
\usepackage{amsmath,amssymb}

\usepackage{mathrsfs} 
\usepackage{subfigure}

\usepackage{algorithm,algpseudocode}
\usepackage{setspace} 

\usepackage{mathtools} 

\usepackage{tikz}
\usetikzlibrary{trees}

\usepackage{booktabs} 
\usepackage{multirow}
\usepackage{array} 
\usepackage{hyperref}
\hypersetup{
    colorlinks,
    citecolor=black,
    filecolor=black,
    linkcolor=black,
    urlcolor=black
}

\NewEnviron{mblock}[1][colback=yellow]{%
\begin{tcolorbox}
\BODY
\end{tcolorbox}
}

\tcbuselibrary{most}
\newtcolorbox{mybox}[2][]
{colback = red!5!white, colframe = red!75!black, fonttitle = \bfseries,
colbacktitle = red!85!black, enhanced,
attach boxed title to top center={yshift=-2mm},
title=#2,#1}

\usepackage{multirow}

\newtheorem{theorem}{Theorem}[section]
\newtheorem{lemma}[theorem]{Lemma}
\newtheorem{proposition}[theorem]{Proposition}
\newtheorem{definition}[theorem]{Definition}
\newtheorem{corollary}[theorem]{Corollary}
\newtheorem{example}[theorem]{Example}
\newtheorem{assumption}[theorem]{Assumption}
\newtheorem{remark}[theorem]{Remark}

\numberwithin{equation}{section}

 \def\M{\mathcal{M}} 
\def\Z{\mathcal{Z}} 
\def\E{\mathcal{E}} 
 
\def\innormd{\mathcal{Z}} 
\def\exnormd{\mathcal{X}}

\newcommand{\parallelslant}{\mathbin{\!/\mkern-5mu/}}

\title{Accelerate Vector Diffusion Maps by Landmarks}

\author[M.-P. Tsui]{Mao-Pei Tsui}
\address{Department of Mathematics, National Taiwan University, Taipei, 106, Taiwan;
National Center for Theoretical Sciences, Mathematics Division, Taipei, 106, Taiwan}
\email{maopei@math.ntu.edu.tw}

\author[H.-T. Wu]{Hau-Tieng Wu}
\address{Courant Institute of Mathematical Sciences, New York University, New York, NY, 10012 USA}
\email{hauwu@cims.nyu.edu}

\author[Y.-A. Wu]{Yi-An Wu}
\address{Data Science Degree Program, National Taiwan University and Academia Sinica, Taipei, 106, Taiwan}
\email{d07946009@ntu.edu.tw}

\author[S.-Y. Yeh]{Sing-Yuan Yeh}
\address{National Central University, Taoyuan, 320, Taiwan}
\email{syyeh@math.ncu.edu.tw}

\begin{document}

\begin{abstract}
We propose a landmark-constrained algorithm, LA-VDM (Landmark Accelerated Vector Diffusion Maps), to accelerate the Vector Diffusion Maps (VDM) framework built upon the Graph Connection Laplacian (GCL), which captures pairwise connection relationships within complex datasets. LA-VDM introduces a novel two-stage normalization that effectively address nonuniform sampling densities in both the data and the landmark sets. Under a manifold model with the frame bundle structure, we show that we can accurately recover the parallel transport with landmark-constrained diffusion from a point cloud, and hence asymptotically LA-VDM converges to the connection Laplacian. The performance and accuracy of LA-VDM are demonstrated through experiments on simulated datasets and an application to nonlocal image denoising.
\end{abstract}

\maketitle


\section{Introduction}

Modern datasets typically consist of multiple attributes, and the relationships among data points are often highly complex. Such complexity presents significant challenges in defining meaningful notions of distance or metric, and even in quantifying the underlying nonlinear relationships. The ultimate goal of data analysis is to exploit not only the metric structure but also the underlying nonlinear relationships, with the aim of designing algorithms that are both theoretically sound and interpretable for practical applications.

As an illustrative example, consider a dataset of clean images, $I_i\in L^2(\mathbb{R}^2)$, $i=1,\ldots,n$, where one image is a rotated version of another; that is, $I_i=R_{ij}\circ I_j$, where $R_{ij}\in O(2)$ and  $R_{ij}\circ I_j(x)=I_j(R_{ij}(x))$ for all $x\in \mathbb{R}^2$. The standard Euclidean, or $L^2$, distance between $I_i$ and $I_j$ may be large, despite their intrinsic similarity. To address this discrepancy, particularly when intrinsic similarity is the focus, it is natural to account for the nonlinear rotational relationship by introducing the {\em rotationally invariant distance} (RID), defined as $d_{\texttt{RID}}(I_i,I_j):=\min_{O\in O(2)}\|I_i-O\circ I_j\|_{L^2}$. By construction, the RID between any two such images is zero, which more faithfully captures the equivalence of image content regardless of rotation. Similarly, for $f(x)=\sin(x)\in L^2([0,2\pi))$, if we define $T_\theta f(x)=f(x+\theta)$, where $\theta\in [0,2\pi)$, we have $\|T_\theta f-f\|_{L^2}=2\pi$ when $\theta\neq 0$, while their phase-invariant distance (PID), defined as $d_{\texttt{PID}}(f_1,f_2):=\min_{\theta\in [0,2\pi)}\|f_1-T_\theta f_2\|_{L^2}$, is zero.  
By a careful rethink about the RID and PID, we can conclude that the nonlinear relationship could be obtained by the same procedure. For example, the rotational relationship between $I_i$ and $I_j$ can be obtained by $O_{ij}:=\arg\min_{O\in O(2)}\|I_i-O\circ I_j\|_{L^2}$, and similarly for PID. It is therefore natural to ask whether incorporating such latent nonlinear relationships into the analysis can improve data analysis. The short answer is yes, and this perspective has already led to fruitful results \cite{zhao2014rotationally,huroyan2020solving,lin2018,mcerlean2024unsupervised,borik2024graph,ye2017cohomology}.

Mathematically, such latent nonlinear relationships can be modeled through the notion of a connection, which extends the idea of pairwise affinity by incorporating transformation or alignment information between data points. The term ``connection'' is borrowed from differential geometry: when the dataset is sampled from a manifold, the connection coincides with the classical notion used to describe the underlying geometric structure. 
To formalize and leverage such nonlinear structures, the graph Laplacian and diffusion maps \cite{coifman2006} frameworks was extended to the Graph Connection Laplacian (GCL) and Vector Diffusion Maps (VDM) \cite{singer2011} by encoding the connection as a vector-valued relationship among points, which combined with affinity captures the complex structure that we have interest. Under a manifold model, it has been shown that, under mild assumptions, the GCL asymptotically converges to the connection Laplacian \cite{berline2003heat} associated with the manifold's vector bundle structure. Likewise, VDM provides a spectral embedding based on this operator and the associated heat kernel \cite{lin2018embeddings,lin2023manifold}. In the degenerate case where the manifold reduces to a single point, a connected 0-dim manifold, the connection Laplacian encodes solution to the synchronization problem \cite{singer2011angular,bandeira2013cheeger}, among many others, which is an interesting problem attracting lots of attention. The GCL and VDM frameworks, as long as the synchronization, have found widespread applications across diverse domains. Examples include but not exclusively determining the orientability of a manifold using the orientation bundle and recovering the double orientable covering of an unorientable manifold \cite{singer2011orientability}, class averaging in cryo-electron microscopy (cryo-EM) using RID and the frame bundle of a manifold diffeomorphic to $S^2$ \cite{zhao2014rotationally}, phase reconstruction in ptychography (a variant of phase retrieval) using the $U(1)$ bundle \cite{marchesini2016alternating}, nonlocal image denoising using RID and $SO(2)$ bundle \cite{lin2018}, solving jigsaw puzzle problem using RID and $C_4$ bundle \cite{huroyan2020solving}, enhancing the electrocardiogram-derived respiration (EDR) using PID and $SO(2)$ bundle \cite{mcerlean2024unsupervised}, and signal quality enhancement for image-based photoplethysmography (PPG) using PID and $SO(2)$ bundle \cite{borik2024graph}.

Despite their theoretical appeal and empirical success, these methods are computationally demanding. Similar to other spectral algorithms, including the diffusion map (DM), which is a special case of VDM corresponding to the trivial line bundle, the GCL and VDM frameworks rely on eigenvalue decomposition, which typically incurs $O(n^{2.81})$ complexity, where $n$ is the number of sampling points. Such complexity is prohibitive for large-scale datasets and thus restricts their applicability. Several approaches have been proposed to mitigate this issue, including sparsification via $k$-nearest neighbors and the Nystr\"om extension \cite{shen2020}. However, sparsification can be highly sensitive to noise, while the Nystr\"om extension may fail to preserve essential geometric information. See \cite{shen2020} for more discussion.
To address this computational bottleneck, a landmark-based approximation method called  Robust and Scalable Embedding via Landmark Diffusion (ROSELAND) \cite{shen2020,shen2022} was proposed. ROSELAND combines the advantages of sparsification and Nystr\"m extension while significantly reducing computational cost and preseving the robustness of DM. The core idea is simple: diffusion from one point to another is split into two stages, first from the source to all landmarks, and then from the landmarks to the target; that is, constrain the diffusion process through a selected set of landmark points. Both theory and practice demonstrate that this landmark-constrained diffusion closely approximates ordinary diffusion, yet with far greater efficiency. In particular, if the number of landmarks scales as $n^\beta$ with $\beta \in (0,1/2)$, then the computational complexity of  becomes $O(n^{1+2\beta})$, substantially more efficient than traditional approaches. This landmark idea has been applied to speed up kernel based sensor fusion algorithm \cite{yeh2024landmark}.

At first glance, extending ROSELAND to the GCL and VDM frameworks may seem straightforward. However, several challenges arise. First, under the landmark constraint, it is not clear whether one can accurately approximate the connection (or parallel transport), even when assuming a manifold model and access to accurate connection information between data points and landmarks, not to mention when we have only point clouds sampled from the manifold and need to estimate parallel transport of the tangent bundle. 
Second, the performance of ROSELAND depends critically on the landmark sampling density. For instance, if we want to recover the Laplace-Beltrami operator, an appropriate landmark selection or design procedure is needed. Yet, such design is nontrivial, and the only available approach currently relies on accurate estimation of the data density \cite{shen2020}, which is in general a difficult task on the manifold setup. The same density issue persists when extending ROSELAND to GCL and VDM, naturally prompting the question of whether alternative strategies can better address the landmark sampling density problem in recovering the connection Laplacian.

Our contributions in this paper are twofolds. First, we introduce a novel algorithm, LA-VDM (Landmark Accelerated Vector Diffusion Maps), which generalizes ROSELAND to accelerate VDM. The core idea, similar to ROSELAND, is to split diffusion from one point to another into two stages, but with an additional ingredient: each step incorporates an approximation of the connection, thereby preserving connection information under the landmark constraint. 
Another algorithmic novelty is a two-stage normalization scheme designed to address sampling density. The first normalization corrects for the data distribution, while the second handles the landmark distribution. Together, these allow LA-VDM to robustly approximate the connection Laplacian, even under non-uniform sampling of dataset and landmarks. 
Second, we provide an asymptotic analysis of LA-VDM under a general principal bundle framework. At first glance, one might expect that the landmark-constrained paths may introduce nontrivial contributions to the connection estimation due to the well-known path-dependence of parallel transport (see Figure \ref{fig:parallel estimation LAVA}). Surprisingly, we prove that despite this concern, the connection can still be accurately approximated under the landmark constraint and the contribution of landmark-constrained paths is asymptotically of higher order. The key technique is to carefully control the deviation in connection estimation induced by landmark-constrained paths. We also rigorously analyze the role of the two-stage normalizations and validate their impact on the final LA-VDM embedding. Collectively, these results establish that LA-VDM yields an accurate and computationally efficient approximation of the connection Laplacian and thus of VDM. 
Finally, to demonstrate the practical utility of LA-VDM, we validate the algorithm on various simulated datasets. Notably, when the connection is trivial and normalization is omitted, LA-VDM reduces to ROSELAND. Therefore, our proposed two-stage normalization procedure can be applied to improve ROSELAND as well.

The remainder of the paper is organized as follows. In Section~\ref{sec:proposed}, we introduce the LA-VDM algorithm and state its relationship with the vanilla VDM. Section~\ref{sec:thm} provides an asymptotic analysis of LA-VDM under a manifold setting, where we establish two main theorems, Theorems~\ref{thm:bias} and~\ref{thm:var}, which also guide the selection of hyperparameters for LA-VDM. In Section~\ref{sec:simu}, we validate the theoretical results and demonstrate the influence of hyperparameters using various simulation datasets. Finally, Section~\ref{sec:proof} contains the detailed proofs of the main theorems presented in Section~\ref{sec:thm}.


\section{Landmark Accelerated Vector Diffusion Maps (LA-VDM)}\label{sec:proposed}

Suppose we have a point cloud $\tilde{\mathcal{X}} = \{s_i\}_{i=1}^n \subseteq \mathbb{R}^p$. While in some settings background knowledge may be incorporated into the metric, in this work we assume access only to the Euclidean distance. Consider a non-negative kernel function $K: \mathbb{R}_{\geq 0} \to \mathbb{R}_{\geq 0}$ with sufficient regularity and decays sufficiently fast, and Gaussian kernel is a common practical choice.

\subsection{Review VDM}

Before introducing the LA-VDM algorithm, we start with a quick review of VDM \cite{singer2011,singer2015}. Consider an un-directed graph $(\tilde{\mathcal{X}},\mathbb{E})$, where the edge set $\mathbb{E}=\{s_i\}_{i=1}^n$ is constructed from $\tilde{\mathcal{X}}$ following some predetermined rules. 
For example, $\mathbb{E}=\{(s_i,s_j)\}_{i,j=1}^n$ if we want a complete graph or $\mathbb{E}=\{(s_i,s_j)|\|s_i-s_j\|_{\mathbb{R}^p}\leq \epsilon\}$ if we use the $\epsilon$-ball scheme, where $\epsilon>0$. 
To simplify the discussion, no matter how the edge set is constructed, we assume it is symmetric; that is, $(s_i,s_j)\in \mathbb{E}$ if and only if $(s_j,s_i)\in \mathbb{E}$. 
Assign an affinity function $w$ on $\mathbb{E}$ so that $w_{ij}>0$ for each edge $(s_i,s_j)\in\mathbb{E}$. Usually, the affinity $w_{ij}$ is defined by the kernel; that is,
\[
w_{ij}=K\left(\frac{\|s_i-s_j\|_{\mathbb{R}^p}}{\sqrt{\epsilon}}\right)\,,
\]
where $\epsilon>0$ is the chosen bandwidth; that is, two points have small affinity if they are far apart. 
Finally, suppose each vertex is associated with $q\in \mathbb{N}$ attributes of interest. We assign a group-valued function $\Omega:\mathbb{E}\to O(q)$, called a {\em connection function}. While in principle there are many group structures we can consider (see \cite{singer2011}), in this work we focus on the rotation group. We call $\Omega_{ij}$ the {\em connection} between $s_i$ and $s_j$ when $(s_i,s_j)\in \mathbb{E}$. We also assume the symmetric relationship $\Omega_{ji}=\Omega_{ij}^\top$. The connection function depends on the application and typically encodes the nonlinear relationship modeled as a rotation between the attributes at the two vertices. We refer to the quadruple $(\tilde{\mathcal{X}},\mathbb{E}, w,\Omega)$ as a {\em connection graph}.

With the connection graph, construct a $n \times n$ affinity matrix $\mathbf{W}$, so that $\mathbf{W}_{ij}=w_{ij}$ when $(s_i,s_j)\in \mathbb{E}$ and $\mathbf{W}_{ij}=0$ otherwise. 
Construct a diagonal matrix $\mathbf{D}$ as
\begin{equation}
\mathbf{D}_{\mathcal{X}}(i,i) = \sum_{j=1}^n w_{ij}.
\end{equation}
Next, apply the $\alpha$-normalization to normalize to reduce the influence of sampling density by setting
\[
\mathbf{W}_\alpha = \mathbf{D}^{-\alpha} \mathbf{W} \mathbf{D}^{-\alpha},
\]
where $\alpha \in [0,1]$ is a hyperparameter. With $\mathbf{W}_\alpha$, construct an $n\times n$ \textit{affinity-connection} block matrix with $q\times q$ entries by setting
\begin{equation}\label{eq:s}
\mathbf{S}_\alpha[i, j]=\left\{\begin{aligned}
\mathbf{W}_\alpha(i,j)\Omega_{ij}, &\text{\quad if }(i,j)\in\mathbb{E}\\
\mathbf{0}_{q\times q}, &\text{\quad if }(i,j)\notin\mathbb{E}
\end{aligned}\right.
\end{equation}
where $i,j=1,\ldots,n$ and we adopt the convention that $[i,j]$ denotes the $(i,j)$-th block of a block matrix, while $(i,j)$ denotes the $(i,j)$-th entry of a scalar (non-block) matrix. Next, construct a diagonal $n\times n$ block matrix with $q\times q$ entries by
\begin{equation}\label{eq:d}
\mathbf{D}_\alpha[i, i]=\sum_{j=1}^n \mathbf{W}_\alpha(i,j) I_q\,,
\end{equation}
where $i=1,\ldots,n$ and $I_q$ is the identity matrix of size $q\times q$. The corresponding Markov transition matrix is then defined as
\[
\mathbf{M}_\alpha = \mathbf{D}_\alpha^{-1} \mathbf{S}_\alpha\,.
\]
$\mathbf{M}_\alpha$ defines a vector-valued diffusion process on the graph. We call $I_{nq}-\mathbf{M}_\alpha$ the {\em graph connection Laplacian} (GCL). 

Since $\mathbf{M}_\alpha$ is similar to a symmetric metrix $\mathbf{D}_\alpha^{-1/2}\mathbf{S}_\alpha\mathbf{D}^{-1/2}_\alpha$, the eige-decomposition (EVD) of $\mathbf{M}_\alpha$ exists. Denote
\begin{equation*}
\mathbf{M}_\alpha=\mathbf{U}\boldsymbol{\Lambda}\mathbf{U}^{-1}
\end{equation*}
where $\mathbf{U}\in Gl(nq)$ is the matrix of eigenvectors and $\boldsymbol{\Lambda}$ is a diagonal matrix with eigenvalues $\lambda_1 \geq \lambda_2 \geq \cdots \geq \lambda_{nq}$. Denote the corresponding eigenvectors as $u_1, u_2, \ldots, u_{nq}$. Note that by a direct calculation, we have $\lambda_1=1$. For an integer $r \geq 1$ and a diffusion time $t > 0$ chosen by the user, the VDM, denoted as $\Psi_{t,\alpha}$, of the $i$-th data point is defined by
\begin{equation}\label{eq:vdm}
\Psi_{\alpha,t}^{\text{VDM}}:s_i \mapsto \left((\lambda_\ell\lambda_s)^t\langle u_\ell[i],\,u_s[i]\rangle\right)^{r}_{\ell,s=1}\,.
\end{equation}
We refer readers to \cite{singer2011} for more details of GCL and VDM.

\subsection{Proposed LA-VDM}

We now detail the proposed algorithm LA-VDM. Consider a landmark set $\tilde{\mathcal{Z}} = \{a_k\}_{k=1}^m\subseteq \mathbb{R}^p$, which may or may not be a subset of $\tilde{\mathcal{X}}$. A naive choice is uniformly sampling $m$ points from $\tilde{\mathcal{X}}$, or via a designing process to collect $m$ extra points, where $m$ is much smaller than $n$.

We build a bipartite graph connecting $\tilde{\mathcal{X}}$ and $\tilde{\mathcal{Z}}$ by specifying an edge set
$\mathbb{E}\subset\{1,\dots,n\}\times\{1,\dots,m\}$, where $(i,k)\in\mathbb{E}$ indicates that the data point $s_i$ is connected to the landmark $a_k$.
Note that there are various approaches to construct an edge set. A common approach is leveraging the background knowledge. In this work, we assume that we only have the point cloud and a landmark set in the Euclidean space but no further knowledge; that is, we only have the Euclidean metric information.

Next, construct an {\em affinity function} $\bar{w}:\mathbb{E}\to \mathbb{R}_+$. For each edge $(i,k)$, set a positive scalar weight $\bar{w}_{ik} > 0$ 
\[
\bar{w}_{ik}=K\left(\frac{\|s_i-a_k\|_{\mathbb{R}^p}}{\sqrt{\epsilon}}\right)\,,
\]
where $\epsilon>0$ is the bandwidth. We call $\bar{w}_{i k}$ the affinity between $s_i$ and $a_k$.
Furthermore, construct a {\em connection function} $\bar{\Omega}:\mathbb{E}\to O(q)$. For each edge $(i,k)$, we encode a group element 
\[
\bar{\Omega}_{ik} \in O(q)\,, 
\]
where $q\in \mathbb{N}$ and $O(q)$ denotes the orthogonal group in $q$-dimensional space. 
$\bar{\Omega}_{i k}$ is called the connection between $s_i$ and $a_k$, which offers a nonlinear relationship between vector-valued features in $s_i$ and $a_k$. The construction of the connections $\bar{\Omega}_{ij}$ depends on applications, and a detailed formulation is provided in the last of Section \ref{sec:setup} or \eqref{eq:conn}. We call $(\tilde{\mathcal{X}},\tilde{\mathcal{Z}},\mathbb{E},\bar{w},\bar{\Omega})$ the {\em landmark connection graph}.

With the landmark connection graph, construct a \textit{landmark connection} block matrix $\mathbf{S}^{(r)}$ of size $n\times m$ with $q\times q$ entries in the following way. Set
\begin{equation}
\mathbf{S}^{(r)}[i, k]=\left\{
\begin{array}{ll}
\bar{w}_{i k} \bar{\Omega}_{i k} &\text{\quad if }(i,k)\in\mathbb{E}\\
\mathbf{0}_{q\times q} &\text{\quad if }(i,k)\notin\mathbb{E}
\end{array}
\right.
\end{equation}
where the bracket $[i,k]$ denotes the $(i,k)$-th block of the matrix for $i=1,\ldots, n$ and $k=1,\ldots, m$ and $\mathbf{0}_{q\times q}$ is a $q\times q$ matrix with all entries $0$.

Normalization is the key novelty of LA-VDM, which consists of several sub-steps. Construct a \textit{landmark affinity} matrix $\mathcal{W}^{(r)}\in \mathbb{R}^{n\times m}$ by setting
\begin{equation*}
\mathcal{W}^{(r)}(i,k)=\bar{w}_{ik}\,.
\end{equation*}
where the parentheses $(i,k)$ denotes the $(i,k)$-th entry. Construct a diagonal \textit{landmark normalized} matrix $\mathcal{D}_{\innormd}\in \mathbb{R}^{m\times m}$ by setting
\begin{equation*}
\mathcal{D}_{\innormd}(k,k)=e_k^\top (\mathcal{W}^{(r)})^\top\mathcal{W}^{(r)}\mathbf{1}_m\,,
\end{equation*}
where $k=1,\ldots, m$, $e_k\in \mathbb{R}^m$ is a unit vector with the $k$-th entry $1$ and $\mathbf{1}_m$ is a $m\times 1$ vector with all entries $1$.
Then, construct a diagonal block matrix $\mathbf{D}_{\innormd} $ of size $m\times m$ with $q\times q$ entries
\begin{equation}
\mathbf{D}_{\innormd} [k,k]=\mathcal{D}_{\innormd}(k,k)I_q\,,
\end{equation}
where $k=1,\ldots, m$ and $I_q$ is the $q\times q$ identity matrix. For a user-defined hyperparameter $\beta\in [0, 1]$, the \textit{landmark $\beta$-normalized affinity} matrix is defined as
\begin{equation}
\mathcal{W}_{\beta}:=\mathcal{W}^{(r)}\mathcal{D}_{\innormd}^{-\beta}(\mathcal{W}^{(r)})^{\top}\in \mathbb{R}^{n\times n}\,.
\end{equation}
Note that $\mathcal{D}_{\innormd}^{-\beta}$ plays the role of normalizing the impact of non-uniform sampling density of the landmark set, so the $(i,j)$-th entry is an affinity between $s_i$ and $s_j$ less dependent on the landmark sampling scheme. 
Construct a \textit{dataset normalized} diagonal matrix $\mathcal{D}_{\exnormd,\beta}\in \mathbb{R}^{n\times n}$ by setting
\begin{equation*}
\mathcal{D}_{\exnormd,\beta}(i,i) :=\sum_{j=1}^n\mathcal{W}_\beta(i,j)\,,
\end{equation*}
and a diagonal block matrix $\mathbf{D}_{\exnormd,\beta}$ of size $n\times n$ with $q\times q$ entries
\begin{equation}
\mathbf{D}_{\exnormd,\beta} [i,i]=\mathcal{D}_{\exnormd}(i,i)I_q\,,
\end{equation}
where $i=1,\ldots, n$. For a user-defined hyperparameter $\alpha\in [0, 1]$, define the $(\beta,\alpha)$-\textit{normalized affinity} matrix as 
\begin{equation*}
\mathcal{W}_{\beta,\alpha}:=\mathcal{D}_{\exnormd,\beta}^{-\alpha}\mathcal{W}_{\beta}\mathcal{D}_{\exnormd,\beta}^{-\alpha}\in \mathbb{R}^{n\times n}\,.
\end{equation*}
This normalization is designed to adjust the influence of the sampling density of the point cloud $\mathcal{X}$, which mimics the $\alpha$-normalized affinity matrix in \cite{coifman2006,singer2011,singer2015}. Finally, define a diagonal block degree matrix $\mathbf{D}_{\beta,\alpha}$ of size $n\times n$ with $q\times q$ entries by setting 
\begin{equation}\label{eq:deg}
\mathbf{D}_{\beta,\alpha} [i,i]=\sum_{j=1}^n\mathcal{W}_{\beta,\alpha}(i,j)I_q\,,
\end{equation}
where $i=1,\ldots, n$.

With $\mathbf{D}_{\beta,\alpha}$, $\mathbf{S}^{(r)}$, $\mathbf{D}_{\exnormd,\beta}$ and $\mathbf{D}_{\innormd}$, apply singular value decomposition (SVD) to $\mathbf{D}_{\beta,\alpha}^{-1/2}\mathbf{D}_{\exnormd,\beta}^{-\alpha}\mathbf{S}^{(r)}\mathbf{D}_{\innormd}^{-\beta/2}\in \mathbb{R}^{nq\times mq}$ and obtain:
\begin{equation}\label{eq:lan_evec}
\mathbf{D}_{\beta,\alpha}^{-1/2}\mathbf{D}_{\exnormd,\beta}^{-\alpha}\mathbf{S}^{(r)}\mathbf{D}_{\innormd}^{-\beta/2}=\bar{\mathbf{U}} \bar{\mathbf{\Lambda}}\bar{\mathbf{V}}^\top\in\mathbb{R}^{nq\times mq}\,.
\end{equation}
where $\bar{\mathbf{U}}\in O(nq)$, $\bar{\mathbf{V}}\in O(mq)$, and the diagonal entries of $\bar{\mathbf{\Lambda}}$ are $\bar{\sigma}_1 \geq \bar{\sigma}_2 \geq \cdots \geq \bar{\sigma}_{mq}\geq 0$. Set 
\[
\bar{\mathbf{U}}:=\mathbf{D}_{\beta,\alpha}^{-1/2}\bar{\mathbf{U}}\ \mbox{ and }\ \bar{\mathbf{\Lambda}}:=\bar{\mathbf{\Lambda}}^{2}\,. 
\]
Choose $1 \leq r \leq m$. 
Denote $\bar{\mathbf{u}}_\ell\in \mathbb{R}^{nq}$ to be $\ell$-th column of $\bar{\mathbf{U}}_r$. The {\em $(\beta,\alpha)$-landmark vector diffusion map} ($(\beta,\alpha)$-LA-VDM) $\bar{\Psi}_t:\tilde{\mathcal{X}}\to\mathbb{R}^{r\times r}$ is defined as
\begin{equation}\label{eq:lvdm}
\bar{\Psi}_{\beta,\alpha,t}:s_i \mapsto \left((\bar{\sigma}^2_\ell\bar{\sigma}^2_s)^t\langle\bar{\mathbf{u}}_\ell[i],\,\bar{\mathbf{u}}_s[i]\rangle\right)^{r}_{\ell,s=1}
\end{equation}
where $t>0$ is the diffusion time chosen by the user.

\subsection{Computational complexity}
The computational complexity of LA-VDM consists of four major components. We begin by analyzing the construction of $\mathbf{D}_{\mathcal{Z}}$, which can be divided into two primary steps. First, computing $\mathcal{W}^{(r)}\mathbf{1}_m \in \mathbb{R}^n$ requires $O(nm)$ operations. Then, substituting this result into the expression $e_k^\top (\mathcal{W}^{(r)})^\top \mathcal{W}^{(r)}\mathbf{1}_m$ also takes $O(nm)$ time. 
All other quantities can be evaluated with similar computational effort, noting that any matrix inversions involved are only on diagonal matrices and hence can be computed in linear time. Consequently, the total cost to construct $\mathbf{D}_{\mathcal{Z}}$, $\mathbf{D}_{\mathcal{X},\beta}$, $\mathbf{D}_{\beta,\alpha}$, and the normalized form $\mathbf{D}_{\beta,\alpha}^{-1/2}\mathbf{D}_{\mathcal{X},\beta}^{-\alpha} \mathbf{S}^{(r)} \mathbf{D}_{\mathcal{Z}}^{-\beta/2}$ is $O(nm)$.
The dominant computational cost arises from the SVD of the matrix 
$\mathbf{D}_{\beta,\alpha}^{-1/2} \mathbf{D}_{\mathcal{X},\beta}^{-\alpha} \mathbf{S}^{(r)} \mathbf{D}_{\mathcal{Z}}^{-\beta/2}$,
which takes $O(nm^2)$ time under the assumption that $m < n$.
Therefore, the overall computational complexity of LA-VDM is $O(nm^2)$. Notably, if $m < n^{1/2}$, LA-VDM offers improved efficiency compared to the original VDM, which typically requires $O(n^{2.81})$ operations, or $O(n^{2+\epsilon})$ under sparse graph assumptions.

\subsection{Relation between vanilla vector diffusion maps}\label{sec:relation}

Given a landmark connection matrix $\mathbf{S}^{(r)}$, we define
\begin{align*}
\mathbf{S}_{\beta,\alpha}&:=\mathbf{D}_{\exnormd,\beta}^{-\alpha}\mathbf{S}^{(r)}\mathbf{D}_{\innormd}^{-\beta/2}(\mathbf{D}_{\exnormd,\beta}^{-\alpha}\mathbf{S}^{(r)}\mathbf{D}_{\innormd}^{-\beta/2})^\top\\
&~=\mathbf{D}_{\exnormd,\beta}^{-\alpha}\mathbf{S}^{(r)}\mathbf{D}_{\innormd}^{-\beta}\mathbf{S}^{(r)\top}\mathbf{D}_{\exnormd,\beta}^{-\alpha}\in\mathbb{R}^{nq\times nq}\,,
\end{align*}
which encodes both affinity and connection information of the point cloud $\tilde{\mathcal{X}}$. Since the multiplication of non-negative weights remains non-negative and the product of orthogonal matrices in $O(d)$ remains orthogonal, the matrix $\mathbf{S}_{\beta,\alpha}$ plays a role analogous to $\mathbf{S}$ in the vanilla VDM \eqref{eq:s}. For simplicity, we set $\alpha=\beta=0$. Under this setting, the affinity-connection between two data points $s_i$ and $s_j$ via the landmark set $\tilde{\mathcal{Z}}$ is given by
\[
\mathbf{S}_{\beta,\alpha}(i,j) = \sum_{z_k \in \mathcal{Z}} \bar{w}_{ik} \bar{w}_{jk} \bar{\Omega}_{ik} \bar{\Omega}_{jk}^\top.
\]
Accordingly, the matrix $\mathbf{D}_{\beta,\alpha}$, defined in~\eqref{eq:deg}, serves as the degree matrix associated with $\mathbf{S}_{\beta,\alpha}$, analogous to $\mathbf{D}$ in the vanilla VDM construction \eqref{eq:d}. If we define the transition matrix as
\begin{equation} \label{eq:app_markov}
\mathbf{M}_{\beta,\alpha} :=  \mathbf{D}_{\beta,\alpha}^{-1} \mathbf{S}_{\beta,\alpha},
\end{equation}
we immediately see that it characterizes the diffusion of vector-valued information between $s_i$ and $s_j$ via the landmark set.

To understand the connection between this construction and the singular vectors used in~\eqref{eq:lan_evec}, we observe the identity
\begin{align*}
&\mathbf{D}_{\beta,\alpha}^{-1/2}\mathbf{D}_{\exnormd}^{-\alpha}\mathbf{S}^{(r)}\mathbf{D}_{\innormd}^{-\beta/2}\left[\mathbf{D}_{\beta,\alpha}^{-1/2}\mathbf{D}_{\exnormd}^{-\alpha}\mathbf{S}^{(r)}\mathbf{D}_{\innormd}^{-\beta/2}\right]^\top=\mathbf{D}_{\beta,\alpha}^{-1/2}\mathbf{S}_{\beta,\alpha}\mathbf{D}_{\beta,\alpha}^{-1/2}\,,
\end{align*}
which implies that the left singular vectors of $\mathbf{D}_{\beta,\alpha}^{-1/2}\mathbf{D}_{\exnormd}^{-\alpha}\mathbf{S}^{(r)}\mathbf{D}_{\innormd}^{-\beta/2}$
correspond directly to the eigenvectors of the transition matrix $\mathbf{M}_{\beta,\alpha}$.

The parameters $\beta$ and $\alpha$ serve distinct normalization purposes. The parameter $\beta$ acts as an intrinsic normalizer, mitigating the influence of varying densities of landmark $\tilde{\mathcal{Z}}$ on the embedding. This normalization is similar to that in \cite{yeh2024landmark}, while in this work all data points are assumed to lie on the same space. In contrast, $\alpha$ serves as an extrinsic normalizer that compensates for non-uniform sampling density in the original dataset $\tilde{\mathcal{X}}$. This is analogous to the symmetric normalization in the classical VDM algorithm, where the affinity-connection matrix $\mathbf{S}$ is replaced by $\mathbf{D}^{-\alpha} \mathbf{S} \mathbf{D}^{-\alpha}$ (cf. Equation (4.8) in~\cite{singer2011} or Equation (19) in~\cite{singer2015}), with $\mathbf{S}$ and $\mathbf{D}$ defined in~\eqref{eq:s} and~\eqref{eq:d}.

In the next section, we show that under mild assumptions, the affinity-connection matrix $\mathbf{S}$ from the original VDM can be well approximated by $\mathbf{S}_{\beta,\alpha}$ constructed via landmarks. This leads to an accurate and efficient approximation of the original VDM embeddings, but with significantly reduced computational complexity.

\section{Asymptotic Behavior of LA-VDM under the Manifold Setup}\label{sec:thm}

In this section, we establish theoretical guarantees for the LA-VDM algorithm introduced in Section~\ref{sec:proposed}, within the setting of principal bundles. Beyond accelerating vanilla VDM, we show that the proposed two-stage normalization, the $\alpha$- and $\beta$-normalizers, effectively address the sampling density issue that remains unsolved in ROSELAND. A key challenge arises from the use of parallel transport, where transporting a vector from one to another point can yield different results depending on the chosen path, due to manifold curvature. This path dependence complicates the analysis of LA-VDM. We provide a bound on the discrepancy introduced by such two-step landmark-based transport, and show that the resulting error remains controlled within an $\epsilon$-neighborhood. We refer readers to Section \ref{section: necessary diff geo background} for a quick review of necessary differential geometry background.

\subsection{Mathematical model and sampling scheme}\label{sec:setup}

Consider a compact, smooth, $d$-dimensional Riemannian manifold $(\mathcal{M}, g)$ without boundary, which is embedded isometrically within $\mathbb{R}^p$ through a map $\iota: \mathcal{M} \hookrightarrow \mathbb{R}^p$.
Then, consider a principal bundle $P(\mathcal{M}, O(q))$ over $\M$, where $q\in \mathbb{N}$ and the structure group is the orthogonal group $O(q)$. This bundle is associated with a vector bundle $\mathcal{E}=P(\mathcal{M}, O(q))\times_{O(q)}\mathbb{R}^q$. Note that in general $q\neq d$. The tangent bundle $T\M$ and its associated frame bundle $O(\M)$ is a special case, where $q=d$.

Consider two independent random vectors, $\tilde{X} = \iota \circ X$ and $\tilde{Z} = \iota \circ Z$, where $X, Z: (\Omega, \mathcal{F}, P) \to \mathcal{M}$ are $\mathcal{M}$-valued random variables, that induces a probability measures $\nu := X_*P$ and $\nu_{\mathcal{Z}} := Z_*P$ on $\mathcal{M}$ respectively.
Assume $\nu$ and $\nu_{\mathcal{Z}}$ are both absolutely continuous with respect to the Rimennanian density $dV$, with smooth probability density functions $p$ and $p_{\mathcal{Z}}$ on $\M$ such that $d\nu = p dV$ and $d\nu_{\mathcal{Z}} = p_{\mathcal{Z}} dV$ respectively. We summarize these assumptions here.

\begin{assumption}\label{asp:1}
\begin{enumerate}
\item The $d$-dimensional manifold $(\mathcal{M},g)$ is smooth and compact without boundary and isometrically embedded into $\mathbb{R}^{p}$ via $\iota$.
\item Fix a principal bundle $P(\mathcal{M}, O(q))$, which is associated with a vector bundle $\mathcal{E}:=P(\mathcal{M}, O(q))\times_{O(q)} \mathbb{R}^q$ to be the associated vector bundle with a metric $g^{\mathcal{E}}$ and the metric connection $\nabla^{\mathcal{E}}$.

\item $\nu$ and $\nu_{\mathcal{Z}}$ are absolutely continuous with respect to $dV$. Moreover, $p,p_{\mathcal{Z}}\in C^{4}(\M)$ and are both bounded below from zero.
\end{enumerate}
\end{assumption}

We model the point cloud $\mathcal{X} = \{x_i\}_{i=1}^n$ to be sampled independent and identically distributed (i.i.d.) from $X$ and model the landmark set $\mathcal{Z} = \{z_k\}_{k=1}^m$ to be i.i.d. sampled from $Z$. 
For each data point $x_i$, we randomly assign $u_i \in P(\mathcal{M}, O(q))$ such that $\pi(u_i) = x_i$, and similarly for the landmark point. To simplify the notation, we write $u_i := u_{x_i}$.

Let $V_{\mathcal{X}} := \oplus_{x_i \in \mathcal{X}} \mathbb{R}^q$ and $\mathcal{E}_{\mathcal{X}} := \oplus_{x_i \in \mathcal{X}} \mathcal{E}_{x_i}$ be the $n q$-dimensional Euclidean vector space and the discretized vector bundle, respectively. For $\mathbf{v}\in V_{\mathcal{X}}$, we use $\mathbf{v}[\ell]$ to denote its $\ell$-th component in the direct sum for $\ell = 1, \ldots, n$. Similarly, For $\mathbf{w}\in \mathcal{E}_{\mathcal{X}}$, we use $\mathbf{w}[\ell]$ to denote its $\ell$-th component in the direct sum for $\ell = 1, \ldots, n$. By the construction, there exist invertible linear maps from block entries of $V_{\mathcal{X}}$ to fibers $\mathcal{E}_{\mathcal{X}}$. 
Indeed, we have $B_{\mathcal{X}}: V_{\mathcal{X}} \rightarrow \mathcal{E}_{\mathcal{X}}$ given by
\begin{equation*}
B_{\mathcal{X}} \mathbf{v}:=\left[u_{1} \mathbf{v}[1], \ldots, u_{n} \mathbf{v}[n]\right] \in \mathcal{E}_{\mathcal{X}}, 
\end{equation*}
where $\mathbf{v} \in V_{\mathcal{X}}$.
Denote $B_{\mathcal{X}}^{-1}: \mathcal{E}_{\mathcal{X}} \rightarrow V_{\mathcal{X}}$ to be the inverse of $B_{\mathcal{X}}$ satisfying
\begin{equation*}
B_{\mathcal{X}}^{-1} \mathbf{w}:=\left[u_{1}^{-1} \mathbf{w}[1], \ldots u_{n}^{-1} \mathbf{w}[n]\right] \in V_{\mathcal{X}},
\end{equation*}
where $u_i^{-1}$ is the inverse of $u_i$ and $\mathbf{w} \in \mathcal{E}_{\mathcal{X}}$. Note that $B_{\mathcal{X}}^{-1} B_{\mathcal{X}} \mathbf{v}=\mathbf{v}$ for all $\mathbf{v} \in V_{\mathcal{X}}$.

Next, define $\delta_{\mathcal{X}}: X \in C(\mathcal{E}) \rightarrow \mathcal{E}_{\mathcal{X}}$ by
\begin{equation*}
\delta_{\mathcal{X}} X:=\left[X\left(x_{1}\right), \ldots X\left(x_{n}\right)\right] \in \mathcal{E}_{\mathcal{X}}\,,
\end{equation*}
which is a discretization operator that samples the section $X$ at $\mathcal{X}$. Then, for $X\in C(\mathcal{E})$, 
\begin{equation}\label{eq:disc_amb_vf}
\boldsymbol{X}:=B_{\mathcal{X}}^{-1} \delta_{\mathcal{X}} X \in V_{\mathcal{X}}
\end{equation}
represents the discretization of the vector field $X$ evaluated at the sample points $\left\{x_i\right\}_{i=1}^n$ and expressed in the coordinates associated with bases for $\left\{\mathcal{E}_{x_i}\right\}_{i=1}^n$. Note that $\boldsymbol{X}$ is an $n q$-dimensional vector consisting of $n$ blocks, each of which of dimension $q$. While $\delta_{\mathcal{X}}X$ is an abstract object, $\boldsymbol{X}$ can be saved in computer for further analysis.

With the point cloud $\mathcal{X}$ and landmark $\mathcal{Z}$, we can apply LA-VDM once we decide how to determine affinity and connection information. Given a kernel $K: \mathbb{R}_{\geq 0}\to \mathbb{R}_{\geq 0}$, the affinity between $x_i$ and $z_j$ is defined as
\begin{equation}\label{eq:kernel}
K_\epsilon(x_i,z_j) = K\left( \frac{ \| \iota(x_i) - \iota(z_j) \| }{ \sqrt{\epsilon} } \right),
\end{equation}
where $\epsilon > 0$ is a scale hyperparameter. We impose the following assumption for the kernel function.

\begin{assumption}\label{asp:2}
The kernel $K$ is $C^3$, positive and decay exponentially fast; that is, exist $c_1,c_2>0$ such that
\begin{equation*}
K(t)<c_1e^{-c_2t^2} \quad\text{and}\quad |\partial_t K(t)|<c_1e^{-c_2t^2}\,.
\end{equation*}
Define $\mu_{r,k,\ell}:=\int_{\mathbb{R}^d}\|x\|^r \partial^{(k)} \left(K(\|x\|)\right)^\ell d x$. Assume that $\mu_{0,0,1}=1$.
\end{assumption}

The connection is tricky. In the case we only have point cloud, we need a way to access the connection once we determine the principal bundle. The selected principal bundle, and hence the connection, depends on the application, so there is no universal way to estimate the connection. 
For example, if the goal is to obtain the connection Laplacian associated with the tangent bundle, we take the frame bundle as the principal bundle, and estimate the parallel transport by combining local principal component analysis (PCA) and rotational alignment like that in the original VDM work \cite{singer2011}. If the interest is estimating the orientation of the manifold, then the principal bundle is $\mathbb{Z}_2$-bundle and the connection can be estimated by local PCA and sign matching.
In this work, since the focus is landmark diffusion and speedup, to avoid distraction and simplify the discussion, we assume that we can access or accurately estimate the connection information. In other words, we assume that we can access the parallel transport $\parallelslant_{y}^{x}$ between any pair of close points $x,y\in \M$. This mild assumption can be easily removed in many practical scenarios. 
Under this assumption, the symmetric connection group $\Omega_{ij}$ is then defined by
\begin{equation}\label{eq:conn}
\Omega_{ij} := u_i^{-1} \parallelslant_{x_j}^{x_i} u_j,
\end{equation}
where $\parallelslant_{x_j}^{x_i}$ denotes parallel transport along the geodesic connecting $x_j$ to $x_i$, and $u_i, u_j \in P(\M, O(q))$.

\subsection{``Double'' Parallel Transport via landmark}
A central challenge in analyzing the LA-VDM framework lies in understanding how the landmark constraint influences the estimation of parallel transport. In the original VDM framework \cite{singer2011}, it was shown that, under appropriate construction, the parallel transport between two nearby points can be accurately estimated from the point cloud alone, without requiring additional geometric information. 
In contrast, LA-VDM introduces a detour: even if two points are close in the ambient space or intrinsic geometry, the landmark-based scheme forces transport to occur indirectly; that is, first from one point to a (possibly distant) landmark, and then back to the neighboring point. This deviation from the geodesic path introduces a significant complication. It recalls the well-known fact that parallel transport of a vector along different paths between the same endpoints may produce different results, due to the curvature of the underlying manifold. This naturally raises concerns about the accuracy of parallel transport approximations between nearby points in the presence of landmark constraints, even if we can accurately estimate the parallel transport between any two points.
Figure \ref{fig:parallel estimation LAVA} illustrates this issue schematically. A concrete example in $S^2$ is provided in Example \ref{Example: parallel transport S^2} in the Supplementary. The trade-off between computational efficiency and geometric fidelity must therefore be carefully considered in the design and application of LA-VDM.

\begin{figure}[htb!]
\begin{minipage}{0.48\textwidth}
\centering
\includegraphics[trim=0 0 20 0, clip, width=1\textwidth]{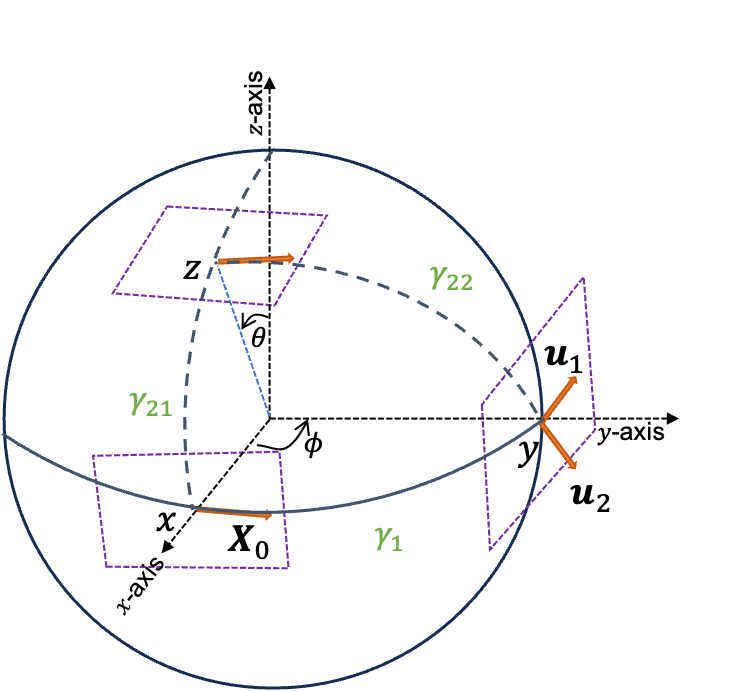} 
\end{minipage}
\begin{minipage}{0.5\textwidth}
\caption{Illustration of a potential issue in estimating parallel transport from $x$ to 
$y$ via landmark diffusion: different paths yield different parallel-transported vectors at $y$.}
\label{fig:parallel estimation LAVA}
\end{minipage}
\end{figure}

To quantify the effect of this procedure and show that we can still accurately estimate the parallel transport when two points are close, we need to estimate the difference between the vector obtained through this accelerated landmark diffusion scheme and the one obtained via a direct single-step transport using the vanilla VDM method. This estimation is provided by the following lemma, which is based on \cite[Equation (3)]{gavrilov2007double}. A full proof is postponted to Section \ref{sec:proof}.
\begin{lemma}\label{lem:double_para}
Consider a vector bundle $X\in C^3(\E)$. Let $x,y,z\in \M$ be mutually contained within each other's $\sqrt{\epsilon}$-neighborhoods. Then, we have
\begin{equation*}
\parallelslant_z^x\parallelslant_y^z X(y)=\parallelslant_y^x X(y) + \mathcal{O}(\epsilon^{3/2})
\end{equation*}
where the implied constant depends on the curvature and $C^3$ norm of $X$.
\end{lemma}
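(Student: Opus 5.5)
The plan is to move everything to normal coordinates centred at $x$ and to use Gavrilov's double exponential expansion \cite[Equation (3)]{gavrilov2007double}. Let $a:=\exp_x^{-1}y$ and $b:=\exp_x^{-1}z$ in $T_x\M$. By hypothesis $|a|,|b|,|a-b|=\mathcal{O}(\sqrt{\epsilon})$, and every point involved lies in a convex normal ball of radius $\mathcal{O}(\sqrt{\epsilon})$ around $x$. In these coordinates $X(y)=\parallelslant_x^y\xi$ for a unique $\xi\in\E_x$. The claim then becomes a statement about the holonomy of the geodesic triangle $x\to y\to z\to x$:
\[
H:=\parallelslant_z^x\,\parallelslant_y^z\,\parallelslant_x^y\in O(\E_x),\qquad \parallelslant_z^x\parallelslant_y^zX(y)-\parallelslant_y^xX(y)=(H-\mathrm{Id})\xi .
\]
So we must show $|(H-\mathrm{Id})\xi|=\mathcal{O}(\epsilon^{3/2})$, with a constant depending on $\|R^{\E}\|_{C^1}$ and on $\|X\|_{C^3}$, the latter bounding $|\xi|$.

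\textbf{Step 1 (expansion of $H$).} Trivialise $\E$ over the ball by radial parallel transport from $x$. In this gauge the connection form satisfies $A_v(w)=\tfrac12R^{\E}_x(v,w)+\mathcal{O}(|v|^2|w|)$. Integrating the transport ODE along the three sides, each parametrised linearly in normal coordinates up to $\mathcal{O}(\epsilon^{3/2})$ corrections, and using Gavrilov's formula for the composition $\parallelslant\circ\parallelslant$, I expect
\[
H=\mathrm{Id}+\tfrac12 R^{\E}_x(a,b)+\mathcal{O}\big((|a|+|b|)^3\big).
\]
The third-order remainder involves $\nabla R^{\E}$ and the cubic terms of the metric. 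The two radial sides contribute nothing in this gauge. Only the side $y\to z$ contributes, through $\int A_{\gamma'}\,dt$, which is exactly the $\tfrac12R(a,b)$ term plus a cubic error.

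\textbf{Step 2 (the order-$\epsilon$ term --- main obstacle).} The remainder is already $\mathcal{O}(\epsilon^{3/2})$. The whole difficulty is the term $\tfrac12R^{\E}_x(a,b)\xi$, whose size is $\le\tfrac12\|R^{\E}\|\,|a|\,|b|\,|\xi|$. For a general nondegenerate triangle of side $\sqrt{\epsilon}$ this is of order $\epsilon$, not $\epsilon^{3/2}$. Hence the lemma cannot hold pointwise for arbitrary $z$ without an additional input, and this step is where the argument must supply one. The estimate holds verbatim when $R^{\E}\equiv0$ near $x$, or when $z$ lies within $\mathcal{O}(\epsilon)$ of the geodesic $xy$, since then $|a\wedge b|=\mathcal{O}(\epsilon^{3/2})$.

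For the use made later, where $z$ ranges over landmarks weighted by $K_\epsilon(x,z)K_\epsilon(y,z)$, the correct reading is the averaged one. The $\epsilon$-order term is linear in $b$ and antisymmetric under the reflection $b\mapsto a-b$ about the midpoint of $xy$. The kernel weight and $p_{\mathcal Z}$ are symmetric under this reflection up to a relative error $\mathcal{O}(\sqrt{\epsilon})$, using $p_{\mathcal Z}\in C^4$ and the moment conditions on $K$. Integrating therefore gives $\int \tfrac12 R^{\E}_x(a,b)\,K K p_{\mathcal Z}\,dV(z)=\mathcal{O}(\epsilon^{3/2})\cdot\int KKp_{\mathcal Z}$. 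I would state the lemma in this form, with the pointwise $\mathcal{O}(\epsilon^{3/2})$ holding under the degeneracy condition above, and use exactly this cancellation in the proof of Theorem~\ref{thm:bias}.

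\textbf{Step 3 (return to $y$).} Rewrite $\xi=\parallelslant_y^xX(y)$ and collect the constants. These are $\sup|\nabla R^{\E}|$ and the second fundamental form, both entering through the normal-coordinate expansion, together with $\|X\|_{C^3}$.
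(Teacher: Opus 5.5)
Your Step~2 diagnosis is correct. As written, the statement is false whenever the bundle curvature $R^{\E}$ is nonzero at $x$, so what your proposal establishes is the correct replacement rather than the stated lemma. Your holonomy reformulation makes this clear. The discrepancy $(H-\mathrm{Id})\parallelslant_y^xX(y)$ is linear in the single vector $X(y)$, so no regularity of $X$ can help. Moreover, $H-\mathrm{Id}=\pm\tfrac12R^{\E}_x(a,b)+\mathcal{O}(\epsilon^{3/2})$ is of exact order $\epsilon$ for a nondegenerate triangle. Concretely, take the unit sphere with $\E=TS^2$. By Gauss--Bonnet the holonomy around a geodesic triangle is rotation by its area. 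For an equilateral geodesic triangle of side close to $\sqrt{\epsilon}$, the two sides of the lemma therefore differ by about $\tfrac{\sqrt3}{4}\,\epsilon\,|X(y)|$. The paper's own $S^2$ example is the macroscopic instance: a triangle of area $\pi/4$, with the vector rotated by $\pi/4$.

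The paper's proof breaks exactly where you predict. It expands $\parallelslant_y^zX(y)$ at $z$, transports to $x$, and expands at $x$ along $u=\exp_x^{-1}z$; the resulting cross term is $\nabla^{\E\,2}_{u,\widetilde w(x)}X(x)$. The proof then invokes ``bilinearity'' to identify $\tfrac12\nabla^{\E\,2}_{u,u}X+\nabla^{\E\,2}_{u,\widetilde w}X+\tfrac12\nabla^{\E\,2}_{\widetilde w,\widetilde w}X$ with $\tfrac12\nabla^{\E\,2}_{v,v}X$. That silently assumes the second covariant derivative is symmetric. In fact $\nabla^{\E\,2}_{u,\widetilde w}X-\nabla^{\E\,2}_{\widetilde w,u}X=R^{\E}(u,\widetilde w)X$, so the dropped term is $\tfrac12R^{\E}(u,v-u)X(x)=\tfrac12R^{\E}(u,v)X(x)+\mathcal{O}(\epsilon^{2})$. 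This is your $\tfrac12R^{\E}_x(a,b)\xi$, with $a=v$, $b=u$, up to sign convention. Lemma~\ref{lem:double_exp} enters only through $\widetilde w(x)=v-u+\mathcal{O}(\epsilon^{3/2})$ and cannot absorb it.

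Your averaged version is the right repair. It is also what Theorem~\ref{thm:bias} and Proposition~\ref{prop:eff} actually use, since there the lemma is applied inside the landmark integral defining $S_{\epsilon,\beta}$. The cancellation works because $R^{\E}_x(a,a-b)=-R^{\E}_x(a,b)$, while the weight $K_\epsilon(x,z)K_\epsilon(z,y)d_{\Z}^{-\beta}(z)p_{\Z}(z)$ is reflection-symmetric up to relative $\mathcal{O}(\sqrt\epsilon)$. Hence the stated bias rate survives. To make this rigorous:
\begin{itemize}
\item Put $d_{\Z}^{-\beta}$ into the weight; it is smooth, so this is harmless.
\item First truncate $y$ and $z$ to $\widetilde B_{\epsilon^{\gamma}}(x)$ using Lemma~\ref{lem:trunc}, because the Gaussian weight is not supported on $\sqrt\epsilon$-balls.
\item Check that the cubic holonomy remainder integrates to $\mathcal{O}(\epsilon^{3/2})$ against the weight.
\end{itemize}
Note also that the proof of Theorem~\ref{thm:var} uses the pointwise form $\Omega_{ik}\Omega_{jk}^\top=\Omega_{ij}+\mathcal{O}(\epsilon^{3/2})$ inside an empirical sum over the $m$ landmarks. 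There the cancellation holds only in expectation, so an additional concentration bound is needed for the landmark average of the order-$\epsilon$ curvature term.
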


\subsection{Pointwise convergence}\label{sec:main_thm}

With the above setup, we adopt the standard analytical framework to study the asymptotic behavior of the proposed LA-VDM algorithm. 
We follow the principal bundle framework and the sampling scheme defined in Section \ref{sec:setup}. We start with introducing some quantities.

\begin{definition}\label{def:int}
Define the landmark normalized function, denoted as $d_{\innormd, \epsilon}: \mathcal{M} \rightarrow \mathbb{R}_{+}$, by

\begin{equation*}
d_{\innormd, \epsilon}(z)=\int_{\mathcal{M}}\int_{\mathcal{M}} K_\epsilon(z,x)K_\epsilon(x,z') d\nu(x)d \nu_{\mathcal{Z}}\left(z^{\prime}\right)\,.
\end{equation*}
\end{definition}

\begin{definition}\label{def:ext}
Take $\beta\geq 0$. Define the landmark $\beta$-normalized kernel function $K_{\epsilon,\beta}: \mathcal{M} \times \mathcal{M} \rightarrow \mathbb{R}_{+}$ by
\begin{equation*}
K_{\epsilon,\beta}\left(x,y\right):=\int_{\mathcal{M}} K_\epsilon(x,z) K_\epsilon\left(z,y\right)d^{-\beta}_{\innormd}(z) d \nu^{\Z}(z)\,.
\end{equation*}
Define the normalized function $d_{\exnormd, \epsilon,\beta}: \mathcal{M} \rightarrow \mathbb{R}_{+}$ by
\begin{equation*}
d_{\exnormd, \epsilon,\beta}(x)=\int_{\mathcal{M}} K_{\epsilon,\beta}\left(x,y\right) d \nu\left(y\right)\,.
\end{equation*}
Define the landmark $\beta$-normalized transport operator $S(x,y):E_y\to E_x$ by
\begin{equation}\label{eq:transp_def}
S_{\epsilon,\beta}\left(x,y\right)u:=\int_{\mathcal{M}} K_\epsilon(x,z) K_\epsilon\left(z,y\right)d^{-\beta}_{\innormd}(z)\parallelslant^x_z\parallelslant^z_y ud \nu^{\Z}(z)\,,
\end{equation}
where $u\in E_y$.
\end{definition}

Next, define the landmark $(\beta,\alpha)$-\textit{normalized} kernel function and the landmark $(\beta,\alpha)$-\textit{normalized} transport operator.

\begin{definition}
Take $\alpha,\beta\geq 0$. Define the landmark $(\beta,\alpha)$-normalized kernel function $K_{\epsilon,\beta,\alpha}:\M\times \M\rightarrow \mathbb{R}_+$ by
\begin{equation*}
K_{\epsilon,\beta,\alpha}\left(x,y\right):=\frac{K_{\epsilon,\beta}(x,y)}{d^{\alpha}_{\exnormd,\epsilon,\beta}(x)d^{\alpha}_{\exnormd,\epsilon,\beta}(y)}\,,
\end{equation*}
and the corresponding degree function $d_{\epsilon,\beta,\alpha}$ by
\begin{equation*}
d_{\epsilon,\beta,\alpha}(x):=\int_{\M} K_{\epsilon,\beta,\alpha}\left(x,y\right)d\nu(y)\,.
\end{equation*}
Define the landmark $(\beta,\alpha)$-normalized transport operator $S_{\epsilon,\beta,\alpha}(x,y):E_y\to E_x$ by
\begin{equation*}
S_{\epsilon,\beta,\alpha}\left(x,y\right):=\frac{S_{\epsilon,\beta}(x,y)}{d^{\alpha}_{\exnormd,\epsilon,\beta}(x)d^{\alpha}_{\exnormd,\epsilon,\beta}(y)}\,,
\end{equation*}
\end{definition}

\begin{definition}
(Landmark $(\beta,\alpha)$-normalized Vector Diffusion Operator) The landmark vector diffusion operator, $T_{\epsilon,\beta,\alpha}: C(\mathcal{E}) \rightarrow C(\mathcal{E})$, is defined as
\begin{equation*}
T_{\epsilon,\beta,\alpha}X(x)=\int_{\mathcal{M}} \frac{S_{\epsilon,\beta,\alpha}(x, y)}{d_{\epsilon,\beta,\alpha}(x)}  X(y) d \nu(y)\,,
\end{equation*}
where $X\in C(\mathcal{E})$.
\end{definition}

Note that the operator $S_{\epsilon,\beta,\alpha}$ does not directly parallel transport $X(y)$ to $x$. Instead, it integrates the values of $X(y)$ that have been parallel transported to $x$ along several two-stage piecewise geodesics, each beginning at $y$ and ending at $x$, with every path passing through a landmark point $z$. 

The asymptotic behavior of the LA-VDM algorithm is analyzed by separating the bias and variance contributions. In the variance analysis, as $n \to \infty$, we characterize how the finite sample size influences the convergence of $\mathbf{M}_{\beta,\alpha}$ to the landmark diffusion integral operator $T_{\epsilon,\beta,\alpha}$. In the bias analysis, as $\epsilon \to 0$, we demonstrate that $T_{\epsilon,\beta,\alpha}$ converges to a perturbed connection Laplacian operator that depends on the proposed two-stage density normalization. Combining these results yields the overall convergence rate.

\subsubsection{Effective landmark kernel}
We could view $S_{\epsilon,\beta}$ or $S_{\epsilon,\beta,\alpha}$ as an {\em effective} transport operator associated with LA-VDM in the sense that it describes the overall behavior of the landmark-constrained dynamics. Under this perspective, we could discuss the associated {\em effective landmark kernel}. We have the following property.

\begin{proposition}\label{prop:eff}
Suppose $K$ is a Gaussian kernel. For $u\in E_y$, when $\epsilon>0$ is sufficiently small, we have 
\begin{align*}
 S_{\epsilon,\beta,\alpha}(x,y)u
=\epsilon^{-d\alpha+d\beta\alpha-d\beta+d/2} c_{\beta,\alpha}(x,y)\exp\!\left(-\frac{\|x-y\|^2}{2\epsilon}\right)\parallelslant^x_y u
+\mathcal{O}(\epsilon^{3/2}),   
\end{align*}
where $c_{\beta,\alpha}$ depends on $x$ and $y$ and is of order 1 since $\M$ is compact.
The effective landmark kernel is thus defined as 
\[
K_{\epsilon,\beta,\alpha}(x,y):=\epsilon^{-d\alpha+d\beta\alpha-d\beta+d/2}c_{\beta,\alpha}(x,y)\exp\!\left(-\frac{\|x-y\|^2}{2\epsilon}\right),
\]
which is no longer isotropic but depends on $x$ and $y$.
\end{proposition}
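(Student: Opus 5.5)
Our approach is to reduce the transport operator to the scalar kernel by means of Lemma~\ref{lem:double_para}, and then handle the scalar part through the Gaussian convolution identity. The first step is to swap the double transport for the direct one. With $K$ Gaussian, $K_\epsilon(x,z)K_\epsilon(z,y)$ is exponentially small unless $z$ lies within a distance of order $\sqrt{\epsilon}$ of both $x$ and $y$, up to a logarithmic factor. We therefore split the $z$-integral in \eqref{eq:transp_def} into a region where $z\in B_{c\sqrt{\epsilon\log(1/\epsilon)}}(x)\cap B_{c\sqrt{\epsilon\log(1/\epsilon)}}(y)$ and its complement. On the complement the contribution is $\mathcal{O}(\epsilon^{N})$ for any $N$. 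On the first region we apply Lemma~\ref{lem:double_para}, pointwise in $z$, to a smooth extension of $u$, which gives $\parallelslant^x_z\parallelslant^z_y u=\parallelslant^x_y u+\mathcal{O}(\epsilon^{3/2})$. (The log factor only costs logarithms, or can be absorbed by slightly enlarging the neighborhood in the lemma.) Since $\parallelslant^x_y u$ does not depend on $z$, it can be pulled out of the integral, leaving
\[
S_{\epsilon,\beta}(x,y)u=K_{\epsilon,\beta}(x,y)\parallelslant^x_y u+\mathcal{O}(\epsilon^{3/2})\int K_\epsilon(x,z)K_\epsilon(z,y)d_{\innormd,\epsilon}^{-\beta}(z)\,d\nu_{\Z}(z).
\]
The second term is the stated error after normalization; we track its $\epsilon$-power together with the prefactor.

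The next step is to evaluate the scalar factors. Using normal coordinates at $x$ and writing $\|\iota(x)-\iota(z)\|^2=d(x,z)^2+\mathcal{O}(d^4)$, we expand $p_{\Z}$ and $d^{-\beta}_{\innormd,\epsilon}$ to zeroth order. The Gaussian product identity
\[
e^{-\|x-z\|^2/\epsilon}e^{-\|z-y\|^2/\epsilon}=e^{-\|x-y\|^2/(2\epsilon)}e^{-2\|z-(x+y)/2\|^2/\epsilon}
\]
then yields
\[
K_{\epsilon,\beta}(x,y)=\epsilon^{d/2}\tilde c(x,y)\,e^{-\|x-y\|^2/(2\epsilon)}\bigl(1+\mathcal{O}(\epsilon)\bigr),
\]
where $\tilde c$ collects $(\pi/2)^{d/2}$ and $p_{\Z}d_{\innormd,\epsilon}^{-\beta}$ evaluated near the midpoint. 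For the normalizer itself, a standard kernel expansion (two Gaussian convolutions, each contributing $\epsilon^{d/2}$) gives
\[
d_{\innormd,\epsilon}(z)=\epsilon^{d}\,p(z)p_{\Z}(z)\,\mu(1+\mathcal{O}(\epsilon)),
\]
so $d^{-\beta}_{\innormd,\epsilon}$ contributes $\epsilon^{-d\beta}$. In the same way, $d_{\exnormd,\epsilon,\beta}(x)=\epsilon^{d-d\beta}\,(\text{order-one function})$, because one further convolution against $d\nu$ adds $\epsilon^{d/2}$. Dividing by $d^{\alpha}_{\exnormd,\epsilon,\beta}(x)d^{\alpha}_{\exnormd,\epsilon,\beta}(y)$ then contributes $\epsilon^{-2\alpha(d-d\beta)}$. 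Collecting powers gives $\epsilon^{d/2-d\beta-2d\alpha+2d\alpha\beta}$. This matches the stated exponent only after the paper's normalization convention for $K_\epsilon$ is taken into account (for instance, if the normalizers are defined with a $\epsilon^{-d/2}$ factor per kernel), so we will fix that convention first and carry the bookkeeping through carefully. All of the order-one densities are gathered into $c_{\beta,\alpha}(x,y)$, which is bounded above and below because $\M$ is compact and $p,p_{\Z}$ are bounded away from zero (Assumption~\ref{asp:1}).

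The main obstacle is the first step: applying Lemma~\ref{lem:double_para} uniformly in $z$, and controlling the $\mathcal{O}(\epsilon^{3/2})$ error after it has been multiplied by the unnormalized kernel mass and divided by the normalizers. We will need the lemma's constant to be uniform over $x,y,z$, which follows from compactness and bounded curvature. We also need to confirm that the prefactor multiplying the error does not blow up, that is, that the error is really $\mathcal{O}(\epsilon^{3/2})$ relative to the leading scaling rather than absolute. If necessary, we will state the error relative to the factor $\epsilon^{-d\alpha+d\beta\alpha-d\beta+d/2}$, which is the form used later in the bias analysis.
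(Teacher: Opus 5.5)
Your route is the paper's own: replace $\parallelslant^x_z\parallelslant^z_y u$ by $\parallelslant^x_y u$ pointwise in $z$ via Lemma~\ref{lem:double_para}, pull the transport out of the $z$-integral, and handle the scalar part with the Gaussian product identity. The gap is in the first step. The pointwise bound fails at order $\epsilon^{3/2}$ whenever the curvature $R^{\E}$ of $\nabla^{\E}$ is nonzero, and this, not uniformity in $z$, is the real obstacle.

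Write $y=\exp_x v$ and $z=\exp_x u_z$. The map $\parallelslant^x_z\parallelslant^z_y\parallelslant^y_x$ is the holonomy around the small geodesic triangle $x,y,z$. It equals $I+\tfrac12 R^{\E}_x(u_z,v)+\mathcal{O}(\epsilon^{3/2})$, up to the sign convention for $R^{\E}$. So the discrepancy is curvature times area, which is generically of exact order $\epsilon$. The $S^2$ example in the supplement shows this at large scale: the rotation angle $\pi/4$ equals the enclosed area.

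In the proof of Lemma~\ref{lem:double_para} this term is lost at one specific step. There, $\tfrac12\nabla^{\E\,2}_{u,u}X+\nabla^{\E\,2}_{u,\widetilde w}X+\tfrac12\nabla^{\E\,2}_{\widetilde w,\widetilde w}X$ is identified with $\tfrac12\nabla^{\E\,2}_{v,v}X$ ``by bilinearity''. That identification needs a symmetric Hessian, but $\nabla^{\E\,2}_{u,\widetilde w}X-\nabla^{\E\,2}_{\widetilde w,u}X=R^{\E}(u,\widetilde w)X$. The step is exact only for flat bundles, such as the trivial line bundle of ROSELAND. As written, your first step therefore gives only $S_{\epsilon,\beta}(x,y)u=K_{\epsilon,\beta}(x,y)\bigl(\parallelslant^x_y u+\mathcal{O}(\epsilon)\bigr)$.

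The missing idea is a cancellation that appears only after integrating in $z$. Keep the leading holonomy term explicitly. Because it is linear in $u_z$, its contribution is $\tfrac12 K_{\epsilon,\beta}(x,y)R^{\E}_x(\bar u,v)\parallelslant^x_y u$. Here $\bar u$ is the mean of $u_z$ under the probability weight proportional to $K_\epsilon(x,z)K_\epsilon(z,y)d^{-\beta}_{\innormd,\epsilon}(z)p_{\Z}(z)\,dV(z)$.

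By the product identity you already use, this weight is a Gaussian of width of order $\sqrt{\epsilon}$ centered at $v/2$ in normal coordinates. It is perturbed only by the first-order variation of $p_{\Z}d^{-\beta}_{\innormd,\epsilon}$ and by $\mathcal{O}(\epsilon)$ metric and embedding corrections. Hence $\bar u=v/2+\mathcal{O}(\epsilon)$. Antisymmetry gives $R^{\E}_x(v/2,v)=0$, so this term is $\mathcal{O}(\epsilon\|v\|)$ relative to $K_{\epsilon,\beta}(x,y)$. Combine this with the cubic holonomy remainder, and let the factor $e^{-\|x-y\|^2/(2\epsilon)}$ absorb the polynomial growth in $\|v\|$. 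This recovers the claimed $\mathcal{O}(\epsilon^{3/2})$, measured relative to the prefactor.

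On your two bookkeeping worries, you are right on both counts. First, your exponent $d/2-d\beta-2d\alpha+2d\alpha\beta$ is the correct one for the definitions as written, so there is no normalization convention to look for. The stated exponent accounts for only one of the two factors $d^{\alpha}_{\exnormd,\epsilon,\beta}$; the proof of Theorem~\ref{thm:bias} compensates the factor $d^{-\alpha}_{\exnormd,\epsilon,\beta}(y)$ by exactly $\epsilon^{\alpha d(1-\beta)}$. Second, the error can only be stated relative to this prefactor. For $\beta=1$, for example, the prefactor is a negative power of $\epsilon$, so an absolute $\mathcal{O}(\epsilon^{3/2})$ does not follow.
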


%
This result is anticipated since within a sufficiently small neighborhood, a two-stage transport can be well-approximated by a single transport, but the weight depends on the landmark. 
Note that when the vector bundle is a trivial line bundle, this effective kernel is reduced to the effective kernel in ROSELAND considered in Equations~(20, 46) of~\cite{shen2022}.
A numerical visualization of this result can be found later in Section \ref{section: effective kernel numerics}.

\subsubsection{Bias analysis}
Based on Lemma \ref{lem:double_para} for double parallel transport, we establish the relation between vector diffusion integral operator $T_{\epsilon,\beta,\alpha}$ and the connection Laplacian operator.
\begin{theorem}\label{thm:bias}
Assume Assumptions \ref{asp:1} and \ref{asp:2} hold.
Fix $x\in\M$. Let $\{E_i\}_{i=1}^d$ be an orthonormal basis of $T_x \mathcal{M}$ associated with the manifold metric $g$. For $X \in C^3(\mathcal{E})$, when $\epsilon>0$ is sufficiently small, we have
\begin{align*}
T_{\epsilon,\beta,\alpha} X(x)=\,&X(x)+\frac{\epsilon \mu_{2,0,1}}{d} \nabla^{2} X(x)\\
&+\frac{\epsilon \mu_{2,0,1}}{d}\sum^{d}_{i=1}\left(\frac{2 \nabla_{E_i} \rho_{\beta,\alpha}(x)}{\rho_{\beta,\alpha}(x)}+\frac{\nabla_{E_i} q_\beta(x)}{q_\beta(x)}\right) \nabla^{\mathcal{E}}_{E_i} X(x)+\mathcal{O}\left(\epsilon^{3/2}\right)\,,
\end{align*}
where $q_\beta(x)=p(x)^{-\beta}p_{\Z}(x)^{1-\beta}$, $\rho_{\beta,\alpha}(x)=q_\beta(x)^{-\alpha}p(x)^{1-\alpha}$ and the implied constant depends on the scalar curvature at $x$, second fundamental form
at $x$, $C^3$ norm of $X$ and $C^2$ norms of $p$ and $p_{\Z}$.
\end{theorem}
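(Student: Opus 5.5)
The plan is to reduce the landmark operator to a vanilla-VDM-type integral operator with an effective, $\epsilon$-dependent but smooth kernel weight, and then run the standard normal-coordinate expansion as in \cite{singer2011}. First I remove the double transport. In \eqref{eq:transp_def}, the exponential decay of $K$ (Assumption \ref{asp:2}) lets me restrict the $z$-integral to a $\sqrt{\epsilon}$-ball (up to a polynomial factor in $\log\epsilon$ that is harmless). On that ball $x,y,z$ are mutually $\sqrt{\epsilon}$-close once $y$ is too, and the $y$-integral in $T_{\epsilon,\beta,\alpha}$ can be localized in the same way. Lemma \ref{lem:double_para} then gives $\parallelslant^x_z\parallelslant^z_y X(y)=\parallelslant^x_y X(y)+\mathcal{O}(\epsilon^{3/2})$. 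After normalization by $d_{\epsilon,\beta,\alpha}(x)$ this yields
\[
T_{\epsilon,\beta,\alpha}X(x)=\frac{\int_\M K_{\epsilon,\beta,\alpha}(x,y)\parallelslant^x_y X(y)\,d\nu(y)}{\int_\M K_{\epsilon,\beta,\alpha}(x,y)\,d\nu(y)}+\mathcal{O}(\epsilon^{3/2}).
\]
The error is uniform because the scalar weights are nonnegative.

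Next I expand the scalar quantities, proceeding through Definitions \ref{def:int}--\ref{def:ext} in order and using the usual lemma $\int K_\epsilon(x,y)f(y)dV(y)=\epsilon^{d/2}(f(x)+\frac{\epsilon\mu_{2,0,1}}{2d}(\Delta f+ \text{curvature terms}\cdot f)(x)+\mathcal{O}(\epsilon^2))$.
\begin{enumerate}
\item For $d_{\innormd,\epsilon}$: the inner integral gives $\epsilon^{d/2}p(x)(1+\mathcal{O}(\epsilon))$ and the outer gives $\epsilon^{d}p(z)p_\Z(z)(1+\mathcal{O}(\epsilon))$.
\item For $K_{\epsilon,\beta}(x,y)$: it is a convolution of two kernels against $p_\Z d_\innormd^{-\beta}\propto p_\Z^{1-\beta}p^{-\beta}=q_\beta$. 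For Gaussian $K$ (or in general, to the order needed) it equals $\epsilon^{d/2-d\beta}$ times a kernel of width $\sqrt{2\epsilon}$ evaluated at $\|x-y\|$, weighted by $q_\beta$ at the midpoint, plus $\mathcal{O}(\epsilon)$ relative corrections. This is essentially Proposition \ref{prop:eff}.
\item For $d_{\exnormd,\epsilon,\beta}$: it is then proportional to $q_\beta p$ up to $(1+\mathcal{O}(\epsilon))$.
\item For the $\alpha$-normalization: the effective weight in the $y$-integral becomes $q_\beta(\text{mid})\,(q_\beta p)^{-\alpha}(y)\,p(y)$, i.e.\ roughly $q_\beta\cdot\rho_{\beta,\alpha}$ up to the fixed factor at $x$.
\end{enumerate}

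Then I perform the standard VDM bias computation in normal coordinates at $x$. I write $y=\exp_x(\sqrt\epsilon\, v)$, expand $\parallelslant^x_y X(y)=X(x)+\sqrt\epsilon\,v^i\nabla_{E_i}X+\frac{\epsilon}{2}v^iv^j\nabla^2_{E_i,E_j}X+\mathcal{O}(\epsilon^{3/2})$, and Taylor expand the weight to first order. The weight is the product of the midpoint factor $q_\beta(\frac{x+y}{2})$, which contributes $\tfrac12\nabla q_\beta$, and the $y$-factor $\rho_{\beta,\alpha}(y)$, which contributes $\nabla\rho_{\beta,\alpha}$. The width $\sqrt{2\epsilon}$ doubles the second moment, and this is precisely why the gradients appear with coefficients $2\nabla\rho/\rho$ and $\nabla q/q$ after matching the overall $\mu_{2,0,1}/d$ factor. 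Odd moments vanish by symmetry, curvature and second-fundamental-form corrections affect only the zeroth-order term, and they cancel between numerator and denominator. This leaves exactly the Laplacian term $\frac{\epsilon\mu_{2,0,1}}{d}\nabla^2X$ and the drift term stated in the theorem.

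I expect the main obstacle to be step two, the careful expansion of $K_{\epsilon,\beta}$ for a general kernel. The composition of two kernels through a density is not itself isotropic, so I must show that its first-order asymmetry is captured by the midpoint/gradient term with the correct constant. I also need the $\mathcal{O}(\epsilon)$ corrections to be either symmetric in $v$ or cancelled by the ratio. I will first do it for the Gaussian, where the convolution is explicit, and then argue for general $K$ by Taylor expanding $q_\beta(z)$ around $x$ inside the double integral and tracking first and second moments directly. A secondary concern is keeping the $\mathcal{O}(\epsilon^{3/2})$ from Lemma \ref{lem:double_para} uniform over the tails, which the exponential decay handles.
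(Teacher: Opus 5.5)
Your route is the paper's: remove the two-step transport with Lemma~\ref{lem:double_para}, expand $d_{\innormd,\epsilon}\approx\epsilon^{d}p\,p_{\Z}$ and $d_{\exnormd,\epsilon,\beta}\propto q_\beta p$ so the landmark weight becomes $q_\beta(z)$ and the $y$-weight becomes $\rho_{\beta,\alpha}(y)$, apply the double-kernel expansion (Lemma~\ref{lem:double_kernel}, which is your midpoint factor $\tfrac12\nabla q_\beta$ plus the doubled second moment), and take the ratio so that $W$ and the $\mathcal{O}(\epsilon)$ normalizer corrections cancel. That bookkeeping is right. The gap is the first step, which you inherit from the paper: Lemma~\ref{lem:double_para} does not hold pointwise. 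The map $\parallelslant^x_z\parallelslant^z_y\parallelslant^y_x$ is the holonomy of $\nabla^{\E}$ around a geodesic triangle of area of order $\epsilon$. With $u=\exp_x^{-1}z$, $v=\exp_x^{-1}y$ and $R^{\E}(a,b)=\nabla^{\E\,2}_{a,b}-\nabla^{\E\,2}_{b,a}$,
\[
\parallelslant^x_z\parallelslant^z_yX(y)=\parallelslant^x_yX(y)+\tfrac12R^{\E}(u,v)X(x)+\mathcal{O}(\epsilon^{3/2}),
\]
and the middle term is generically of exact order $\epsilon$ whenever $R^{\E}\neq0$. On the round $S^2$ it is the rotation by the enclosed area, which is what the paper's own $S^2$ example shows on a large triangle. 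The paper's proof of the lemma loses this term when it uses ``bilinearity'' to collapse $\tfrac12\nabla^{\E\,2}_{u,u}+\nabla^{\E\,2}_{u,\widetilde w}+\tfrac12\nabla^{\E\,2}_{\widetilde w,\widetilde w}$, with $\widetilde w=P^x_zw\approx v-u$, into $\tfrac12\nabla^{\E\,2}_{v,v}$. That collapse requires $\nabla^{\E\,2}$ to be symmetric, and the defect is exactly $\tfrac12R^{\E}(u,v)$. Consequently, your remark that the error is uniform because the weights are nonnegative only gives an $\mathcal{O}(\epsilon)$ error in $T_{\epsilon,\beta,\alpha}X(x)$. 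That is the same order as the Laplacian and drift terms you are trying to identify.

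The repair uses a fact you already rely on: do not invoke the lemma before integrating over the landmark. The offending term is linear in $u$. For radial $K$, the substitution $w\mapsto a-w$ gives $\int_{\mathbb{R}^d}w\,K(\|w\|)K(\|w-a\|)\,dw=\tfrac a2\int_{\mathbb{R}^d}K(\|w\|)K(\|w-a\|)\,dw$. Hence under the weight $K_\epsilon(x,z)K_\epsilon(z,y)\,d_{\innormd}(z)^{-\beta}p_{\Z}(z)$ the conditional mean of $u$ is $v/2+\mathcal{O}(\epsilon)$; the gradient of $q_\beta$ and the extrinsic and volume corrections shift it by at most $\mathcal{O}(\epsilon)$. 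After the $z$-integration the term becomes $\tfrac12R^{\E}(v/2,v)X(x)+\mathcal{O}(\epsilon^{3/2})=\mathcal{O}(\epsilon^{3/2})$ by antisymmetry. That is, $S_{\epsilon,\beta}(x,y)X(y)=K_{\epsilon,\beta}(x,y)\bigl(\parallelslant^x_yX(y)+\mathcal{O}(\epsilon^{3/2})\bigr)$, with the error relative to the nonnegative scalar weight, and from there your argument goes through as written. Relatedly, bound the cubic remainders by $C(\|u\|+\|v\|)^3$ and integrate them against the kernels, rather than truncating to a ball of radius $\sqrt{\epsilon\log(1/\epsilon)}$. The logarithm is not harmless for a claimed $\mathcal{O}(\epsilon^{3/2})$.
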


\begin{remark}
Note that when $\alpha=\beta=0$ and the bundle is the trivial line bundle, LA-VDM reduced to ROSELAND \cite{shen2022}, and Theorem \ref{thm:bias} specializes to the corresponding Theorem 1 in \cite{shen2022}. 
\end{remark}

We have several immediate corollaries.

\begin{corollary}\label{cor:b12}
Grant the notation and assumption in Theorem \ref{thm:bias}.
Further suppose $p_\Z=p$ and $\beta=1/2$. We have
\begin{align*}
T_{\epsilon,1/2,\alpha} X(x)=\,&X(x)+\frac{\epsilon \mu_{2,0,1}}{d} \nabla^{2} X(x)\\
&+2\frac{\epsilon \mu_{2,0,1}}{d}\sum^{d}_{i=1}\frac{ \nabla_{E_i} p(x)^{1-\alpha}}{p(x)^{1-\alpha}} \nabla^{\mathcal{E}}_{E_i}X(x)+\mathcal{O}\left(\epsilon^{3/2}\right)\,.
\end{align*}
If we further assume $p$ is constant, then 
\begin{align*}
T_{\epsilon,1/2,\alpha}X(x)=X(x)+\frac{\epsilon \mu_{2,0,1}}{d}\nabla^2 X(x)+\mathcal{O}(\epsilon^{3/2})\,.
\end{align*}
\end{corollary}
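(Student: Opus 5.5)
This is a direct specialization of Theorem \ref{thm:bias}, so the plan is to substitute $p_\Z=p$ and $\beta=1/2$ into the drift coefficient
\[
\frac{2\nabla_{E_i}\rho_{\beta,\alpha}}{\rho_{\beta,\alpha}}+\frac{\nabla_{E_i}q_\beta}{q_\beta}
\]
and simplify it. With these choices $q_{1/2}(x)=p(x)^{-1/2}p(x)^{1/2}=1$. Hence $\nabla_{E_i}q_{1/2}=0$, and the second term vanishes identically.

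Next, $\rho_{1/2,\alpha}(x)=q_{1/2}(x)^{-\alpha}p(x)^{1-\alpha}=p(x)^{1-\alpha}$. The first term therefore becomes
\[
2\,\nabla_{E_i}p(x)^{1-\alpha}\,/\,p(x)^{1-\alpha}.
\]
Inserting this into the expansion of Theorem \ref{thm:bias} gives exactly the first displayed formula. The hypotheses needed to apply the theorem are met: Assumption \ref{asp:1} requires $p=p_\Z$ to be $C^4$ and bounded below, so $p^{1-\alpha}$ is smooth and positive. The error stays $\mathcal{O}(\epsilon^{3/2})$ with constants depending on the $C^2$ norm of $p$, as stated there.

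For the second claim, if $p$ is constant then $\nabla_{E_i}p^{1-\alpha}=0$ for every $i$, so the whole first-order drift term drops out. What remains is $X(x)+\frac{\epsilon\mu_{2,0,1}}{d}\nabla^2X(x)+\mathcal{O}(\epsilon^{3/2})$.

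There is no genuine obstacle. The only point to check is that the algebraic identity $q_{1/2}\equiv1$ holds pointwise, which requires $p_\Z=p$ exactly rather than only up to a constant factor. Even a constant factor would be harmless, because only logarithmic derivatives appear in the drift.
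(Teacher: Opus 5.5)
Your proposal is correct and matches how the paper obtains this corollary. Substituting $p_\Z=p$ and $\beta=1/2$ into Theorem \ref{thm:bias} gives $q_{1/2}\equiv 1$ and $\rho_{1/2,\alpha}=p^{1-\alpha}$. So the $q$-drift vanishes, the $\rho$-drift produces the stated factor $2$, and the constant-$p$ case follows at once. Your closing aside contradicts itself, but the point is moot: both are probability densities, so $p_\Z=cp$ already forces $c=1$.
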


Landmark design is critical for the LA-VDM performance. This corollary characterizes the behavior of LA-VDM under the naive but powerful landmark design, where a subset of the dataset is uniformly selected as the landmark set so that we have $p_{\mathcal{Z}}=p$.
Corollary \ref{cor:b12} states that with this landmark design, setting $\beta=1/2$ normalizes the landmark sampling distribution $p_\Z$, rendering it inconsequential to LA-VDM. 
Note that the constant of the leading term in the asymptotic operator $\frac{1-T_{\epsilon,1/2,0}}{\epsilon}$ is twice that of the vanilla VDM. This factor of two arises because the landmark-constrained diffusion effectively entails a two-step diffusion process. Note that when $\alpha=0$, both LA-VDM and the vanilla VDM depends on $p$.

\begin{corollary}\label{cor:b1}
Grant the notation and assumption in Theorem \ref{thm:bias}.
If we take $\beta=1$, then $q_1(x)=\frac{1}{p(x)}$, which is independent on $p_\Z$. We have
\begin{align*}
T_{\epsilon,1,\alpha} X(x)=\,&X(x)+\frac{\epsilon \mu_{2,0,1}}{d} \nabla^{2} X(x)\\
&+\frac{\epsilon \mu_{2,0,1}}{d}\sum^{d}_{i=1}\frac{ \nabla_{E_i} p(x)}{p(x)} \nabla^{\mathcal{E}}_{E_i} X(x)+\mathcal{O}\left(\epsilon^{3/2}\right)\,.
\end{align*}
If we further assume $p$ is constant; that is, the sampling is uniform, then we obtain
\begin{align*}
T_{\epsilon,1,\alpha}X(x)=X(x)+\frac{\epsilon \mu_{2,0,1}}{d}\nabla^2 X(x)+\mathcal{O}(\epsilon^{3/2})\,.
\end{align*}
\end{corollary}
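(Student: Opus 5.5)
Corollary \ref{cor:b1} follows by substituting $\beta=1$ into Theorem \ref{thm:bias} and simplifying the drift term; no new analysis is needed.

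\textbf{Step 1: the auxiliary densities.} With $\beta=1$ the definition $q_\beta(x)=p(x)^{-\beta}p_{\Z}(x)^{1-\beta}$ gives $q_1(x)=p(x)^{-1}p_{\Z}(x)^{0}=1/p(x)$. This is where the dependence on the landmark density disappears: the $\beta=1$ normalizer $d_{\innormd,\epsilon}^{-1}$ cancels the factor $p_{\Z}$ contributed by the integral over landmarks. Then
\[
\rho_{1,\alpha}(x)=q_1(x)^{-\alpha}p(x)^{1-\alpha}=p(x)^{\alpha}p(x)^{1-\alpha}=p(x),
\]
which is independent of $\alpha$.

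\textbf{Step 2: logarithmic derivatives.} Since both densities are positive and $C^4$ by Assumption \ref{asp:1}, we can write
\[
\frac{\nabla_{E_i}\rho_{1,\alpha}}{\rho_{1,\alpha}}=\nabla_{E_i}\log p,
\qquad
\frac{\nabla_{E_i}q_1}{q_1}=\nabla_{E_i}\log(p^{-1})=-\nabla_{E_i}\log p.
\]
The coefficient in Theorem \ref{thm:bias} is therefore $2\nabla_{E_i}\log p-\nabla_{E_i}\log p=\nabla_{E_i}p/p$. Substituting gives the first display, with the same $\mathcal{O}(\epsilon^{3/2})$ remainder. Its implied constant now depends only on the $C^2$ norm of $p$, because the $C^2$ norm of $p_{\Z}$ no longer enters.

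\textbf{Step 3: uniform sampling.} If $p$ is constant, then $\nabla_{E_i}p=0$ for every $i$, the first-order term vanishes, and the second display follows.

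I expect no real obstacle. The only point to verify is that the $\mathcal{O}(\epsilon^{3/2})$ remainder in Theorem \ref{thm:bias} is uniform in $\beta\in[0,1]$, so that specializing to $\beta=1$ is legitimate. This is immediate here, since $\beta$ is simply fixed at $1$ before applying the theorem.
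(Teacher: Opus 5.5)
Your substitution argument is correct and is exactly how the paper obtains this corollary. The paper states it as an immediate specialization of Theorem~\ref{thm:bias}, where $q_1=1/p$ and $\rho_{1,\alpha}=p$ turn the drift coefficient into $2\nabla_{E_i}p/p-\nabla_{E_i}p/p=\nabla_{E_i}p/p$.

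One caution: your side remark that the implied constant no longer depends on $p_{\Z}$ does not follow from the theorem as stated. Only the leading-order operator loses its $p_{\Z}$ dependence; the remainder still depends on $p_{\Z}$ through the $\epsilon$-expansion of $d_{\innormd,\epsilon}$. The corollary does not need that remark, so the proof is unaffected.
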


Corollaries \ref{cor:b12} and \ref{cor:b1} show that the normalization factor $\beta$ plays a role analogous to the $\alpha$-normalization in the vanilla VDM or DM algorithms \cite{coifman2006,singer2011,singer2015}, but with the key difference that it is intended to eliminate the influence of the landmark distribution rather than that of the original dataset distribution. In Corollary \ref{cor:b12}, this is achieved by appropriately choosing $p_\Z$ to cancel out the landmark distribution's effect and recover the results of vanilla VDM. In contrast, Corollary \ref{cor:b1} sets $\beta = 1$, making the choice of $p_\Z$ irrelevant to the original dataset distribution. Although under this condition ($\beta=1$) the asymptotic operator becomes independent of the landmark set, it generally differs from the operator utilized in the vanilla VDM.

After eliminating the influence of the landmarks via $\beta$ and the choice of $p_\Z$, one can follow the approach in \cite{coifman2006,singer2011,singer2015} and use $\alpha$ to adjust for the effect of the original dataset's sampling density, yielding embeddings that are independent of $p$ when $\alpha=1$. The result is summarized in the following Corollary. 

\begin{corollary}\label{cor:a1}
Grant the notation and assumption in Theorem \ref{thm:bias}.
If $p_\Z=p$ and we take $\alpha=1$ and $\beta=1/2$, then we have
\begin{align*}
T_{\epsilon,1/2,1} X(x)=X(x)&+\frac{\epsilon \mu_{2,0,1}}{d} \nabla^{2} X(x)+\mathcal{O}\left(\epsilon^{3/2}\right)\,.
\end{align*}
\end{corollary}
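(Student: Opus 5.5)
Corollary \ref{cor:a1} follows from Theorem \ref{thm:bias} by substituting $p_\Z=p$, $\beta=1/2$, $\alpha=1$ into the two density functions $q_\beta$ and $\rho_{\beta,\alpha}$. We then check that the drift (first-order) term vanishes identically. No new analysis is needed beyond that theorem, since its error term $\mathcal{O}(\epsilon^{3/2})$ and the constants it depends on are unchanged by these parameter choices.

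The first step is to compute $q_{1/2}$. With $p_\Z=p$ we get $q_{1/2}(x)=p(x)^{-1/2}p(x)^{1/2}=1$, which is constant, so $\nabla_{E_i}q_{1/2}/q_{1/2}=0$. This is exactly the mechanism already noted in Corollary \ref{cor:b12}. The second step is to compute $\rho_{1/2,1}(x)=q_{1/2}(x)^{-1}p(x)^{1-1}=1$, which is also constant, so $\nabla_{E_i}\rho_{1/2,1}/\rho_{1/2,1}=0$. Equivalently, one can start from the first display of Corollary \ref{cor:b12} and set $\alpha=1$. Then $p^{1-\alpha}\equiv 1$, and its logarithmic gradient vanishes.

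Since both logarithmic derivatives vanish, the sum $\sum_i(\cdot)\nabla^{\mathcal{E}}_{E_i}X(x)$ in Theorem \ref{thm:bias} is zero. What remains is $X(x)+\frac{\epsilon\mu_{2,0,1}}{d}\nabla^2X(x)+\mathcal{O}(\epsilon^{3/2})$, which is the claim.

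There is no real obstacle here. The only point to confirm is that Theorem \ref{thm:bias} holds uniformly for these parameter values, in particular for $\alpha=1$. That requires the normalizing functions $d_{\exnormd,\epsilon,\beta}$ and $d_{\innormd,\epsilon}$ to stay bounded away from zero, which follows from the lower bounds on $p$ and $p_\Z$ in Assumption \ref{asp:1}. Also, the implied constant depends only on $C^2$ norms of $p$ and $p_\Z=p$, and these remain finite.
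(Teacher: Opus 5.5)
Your proposal is correct and follows the paper's argument: you substitute $p_\Z=p$, $\beta=1/2$, $\alpha=1$ into Theorem~\ref{thm:bias}. This gives $q_{1/2}=\rho_{1/2,1}\equiv 1$, so the first-order drift term vanishes and only $X(x)+\frac{\epsilon\mu_{2,0,1}}{d}\nabla^2X(x)+\mathcal{O}(\epsilon^{3/2})$ remains, exactly as the paper observes right after the statement.
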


Note that in the setup of Corollary \ref{cor:a1}, $q_{1/2}=\rho_{1/2,1}=1$, so the asymptotic operator is independent of both $p_\Z$ and $p$. This result indicates that, when the landmarks are uniformly sampled from the dataset, LA-VDM recovers the intrinsic connection Laplacian operator if we choose $\alpha=1$ and $\beta=1/2$.

\subsubsection{Variance Analysis}

As $n\to\infty$, we show that the matrix $I_n-\left(\mathbf{D}_{\beta,\alpha}\right)^{-1} \mathbf{S}_{\beta,\alpha}$ used in LA-VDM asymptotically behaves like the integral operator $ 1-T_{\epsilon,\beta,\alpha} $. Our analysis focuses on the stochastic fluctuations caused by the finite number of sampling points. Since the target quantity is of order $\epsilon$, these fluctuations must be significantly smaller than $\epsilon$ to ensure that they do not interfere with the estimation of the desired result.

\begin{theorem}\label{thm:var}
Take $\mathcal{X}=\left\{x_i\right\}_{i=1}^n$ and $\mathcal{Z}=\left\{z_k\right\}_{i=1}^m$, where $m=\lceil n^\gamma\rceil$ for some $0<\gamma \leq 1$ and $\lceil x\rceil$ is the smallest integer greater than or equal to $x\in\mathbb{R}$. Let $u_i$ is defined in Section \ref{sec:setup}. Take $X \in C^3\left(\E\right)$ and denote $\boldsymbol{X} \in \mathbb{R}^{nq}$ such that $\boldsymbol{X}$ satisfied (\ref{eq:disc_amb_vf}). Let $\epsilon=\epsilon(n)$ so that $\frac{\sqrt{\log n}}{n^{\gamma / 2} \epsilon^{d/4+1}} \rightarrow 0$ and $\epsilon \rightarrow 0$ when $n \rightarrow \infty$. Then with probability higher than $1-\mathcal{O}\left(1 / n^2\right)$, we have
\begin{equation*}
\frac{1}{\epsilon}\left[\left(I_{nq}-\left(\mathbf{D}_{\beta,\alpha}\right)^{-1} \mathbf{S}_{\beta,\alpha}\right) \boldsymbol{X}\right][i]=u^{-1}_i\frac{X\left(x_i\right)-T_{\epsilon,\beta,\alpha}X(x_i)}{\epsilon} +\mathcal{O}\left(\frac{\sqrt{\log n}}{n^{\gamma / 2} \epsilon^{d/4+1}}\right)\,,
\end{equation*}
for all $i=1,2, \ldots, n$.
\end{theorem}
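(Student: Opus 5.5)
We will follow the standard variance argument used for VDM and ROSELAND: write the $i$-th block of $\mathbf{D}_{\beta,\alpha}^{-1}\mathbf{S}_{\beta,\alpha}\boldsymbol{X}$ as a ratio of nested empirical averages, and compare it to $u_i^{-1}T_{\epsilon,\beta,\alpha}X(x_i)$ by replacing each empirical average with its population counterpart in turn.

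The first step is to unwind the definitions. Using \eqref{eq:conn} and $\bar\Omega_{ik}\bar\Omega_{jk}^\top=u_i^{-1}\parallelslant^{x_i}_{z_k}\parallelslant^{z_k}_{x_j}u_j$, we get
\[
[\mathbf{S}_{\beta,\alpha}\boldsymbol{X}][i]=u_i^{-1}\sum_{j,k}\frac{K_\epsilon(x_i,z_k)K_\epsilon(z_k,x_j)\,\mathcal{D}_{\innormd}(k,k)^{-\beta}}{\mathcal{D}_{\exnormd,\beta}(i,i)^{\alpha}\mathcal{D}_{\exnormd,\beta}(j,j)^{\alpha}}\parallelslant^{x_i}_{z_k}\parallelslant^{z_k}_{x_j}X(x_j).
\]
All empirical quantities then have population analogues, up to the normalizations $\frac1{nm}$, $\frac1{n^2m}$ and so on:
\begin{itemize}
\item $\frac{1}{nm}\mathcal{D}_{\innormd}(k,k)\approx d_{\innormd,\epsilon}(z_k)$;
\item $\frac1n\sum_j$ of the kernel against landmarks approximates $K_{\epsilon,\beta}$;
\item $\mathcal{D}_{\exnormd,\beta}(i,i)$ approximates $d_{\exnormd,\epsilon,\beta}(x_i)$;
\item the degree approximates $d_{\epsilon,\beta,\alpha}(x_i)$;
\item the numerator approximates $S_{\epsilon,\beta,\alpha}$ applied to $X$.
\end{itemize}
The powers of $n,m$ cancel in the final ratio $\mathbf{D}_{\beta,\alpha}^{-1}\mathbf{S}_{\beta,\alpha}$. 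Since the $u_i^{-1}$ factor is an isometry, it commutes with all estimates.

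The second step is the concentration bounds. Each population quantity is an integral of a bounded kernel expression with magnitude and variance of order $\epsilon^{-d/2}$ relative to its mean, which is order one after normalization. This holds for the kernel $K_\epsilon(x,z)$ integrated against $p_\Z$ or $p$, which has mass about $\epsilon^{d/2}$. We apply Bernstein's inequality, conditionally on the data points when averaging over landmarks and vice versa, together with a union bound over $i$ (and $k$) and a threshold $\sqrt{\log n}$. This gives relative deviations of order $\sqrt{\log n}/(m^{1/2}\epsilon^{d/4})$ for landmark sums and $\sqrt{\log n}/(n^{1/2}\epsilon^{d/4})$ for data sums, with probability $1-\mathcal{O}(n^{-2})$. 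Since $m=\lceil n^\gamma\rceil\le n$, the landmark averages dominate, which explains the $n^{\gamma/2}$ in the rate. Because $d_{\innormd,\epsilon}$, $d_{\exnormd,\epsilon,\beta}$ and $d_{\epsilon,\beta,\alpha}$ are bounded below, by $p,p_\Z>0$ and Proposition \ref{prop:eff}-type scaling, we can propagate the relative errors through the powers $-\beta$, $-\alpha$ and $-1$ by a first-order Taylor expansion. Each substitution costs only a multiplicative $(1+\mathcal{O}(\text{rate}))$.

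The third step handles the cancellation. An error of relative size $\eta$ in numerator and denominator alone would give $\mathcal{O}(\eta)$, and dividing by $\epsilon$ would produce $\eta/\epsilon$. The key observation is that the leading fluctuation is common to both, because the ratio reproduces $X(x_i)$ when $X$ is parallel. We therefore write the numerator minus $X(x_i)$ times the denominator as a sum of terms $\parallelslant^{x_i}_{z}\parallelslant^{z}_{x_j}X(x_j)-X(x_i)$. By Lemma \ref{lem:double_para} and the smoothness of $X$, these terms are $\mathcal{O}(\sqrt\epsilon)$ on the effective support, so the variance of the combined estimator gains a factor $\epsilon$. This yields the stated error $\sqrt{\log n}/(n^{\gamma/2}\epsilon^{d/4+1})$ after dividing by $\epsilon$, and the condition on $\epsilon(n)$ makes this error vanish.

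The main obstacle is the nested dependence. The $\beta$-normalizer $\mathcal{D}_{\innormd}(k,k)$ and the $\alpha$-normalizers $\mathcal{D}_{\exnormd,\beta}(j,j)$ are themselves random and are reused inside the outer sums, so the summands are not independent. We would handle this by first establishing the uniform high-probability approximations of all normalizers, replacing them by their deterministic limits at multiplicative cost, and only then applying Bernstein's inequality to the resulting U-statistic-like double sums over $(j,k)$, conditioning on one index set at a time.
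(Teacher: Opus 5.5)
Your proposal is correct and follows essentially the paper's route: uniform high-probability approximation of the landmark kernel and of both normalizers (which the paper imports from Lemmas SM 1.8--1.9 and Theorem 4.10 of \cite{yeh2024landmark}), followed by Bernstein applied to the centered summand $X(x_i)-\parallelslant^{x_i}_{x_j}X(x_j)$, written as numerator minus $X(x_i)$ times denominator. Keeping the double transport $\parallelslant^{x_i}_{z_k}\parallelslant^{z_k}_{x_j}$ is, if anything, cleaner than the paper's replacement of $\Omega_{ik}\Omega_{jk}^\top$ by $\Omega_{ij}$: $T_{\epsilon,\beta,\alpha}$ is itself defined through the double transport, so you incur no $\mathcal{O}(\epsilon^{3/2})$ bias, which would become an $\mathcal{O}(\epsilon^{1/2})$ term after dividing by $\epsilon$ and is absent from the statement; also, the stated rate $\sqrt{\log n}/(n^{\gamma/2}\epsilon^{d/4+1})$ is exactly the naive $\eta/\epsilon$ with $\eta=\sqrt{\log n}/(n^{\gamma/2}\epsilon^{d/4})$, so your third-step cancellation is not needed for the theorem as stated and actually gives the sharper $\eta/\sqrt{\epsilon}$, consistent with the $\epsilon^{d/4-1/2}$ in the paper's final display.
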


The proof is deferred to Section \ref{sec:proof}. Note that all quantities on the left-hand side can be stored and computed numerically, whereas those on the right-hand side are abstract in nature; the two are linked via $u_i^{-1}$.

\section{Simulation Experiment Results}\label{sec:simu}

In this section, we use simulated data to validate the theoretical behavior of LA-VDM and compare it with the vanilla VDM. All experiments were conducted on a environment equipped with a 36-core CPU, with 60 GB of RAM. The implementation was done using Python 3.10, and the experimental design and analysis were carried out using NumPy 1.26.4. We optimized memory usage by selecting \texttt{np.float64}.

In the following simulation, $\epsilon$ is selected such that each point's $\sqrt{\epsilon}$-neighborhood contains sufficient number of neighbors (typically more than 30) to ensure stable kernel estimates. 

In addition to comparing the computational time, we also evaluate how accurately LA-VDM approximates VDM. We consider two quantities. Denote $(\lambda_l, w_l)$ as the $l$-th eigenpair of LA-VDM, and $(\mu_l, v_l)$ as the $l$-th eigenpair of VDM, where $w_l$ and $v_l$ are unit vectors. First, the $l$-th eigenvalue difference is quantified by $|\lambda_l-\mu_l|/\mu_l$. Second, how accurate the $l$-th eigenvector is recovered is quantified by the cosine similarity $\frac{w_l}{\|w_l\|} \cdot \frac{v_l}{\|v_l\|}$. 
Third, the difference of the $l$-th vector field obtained by VDM and LA-VDM is quantified by 
$$
\min_{\sigma_l\in\{1,-1\}}\sqrt{\sum_{i=1}^n | w_l[i] - \sigma_l v_l[i] |^2},
$$
where $w_l[i]$ and $v_l[i]$ denote the $i$-th block of the eigenvectors $w_l$ and $v_l$, respectively, and $\sigma_l \in \{1, -1\}$ is used to align the sign of the eigenvectors.

\subsection{Effective landmark kernel}\label{section: effective kernel numerics}

We illustrate the effective landmark kernel and show how parallel transport is well approximated via landmark using the following 2-dimensional manifold.
Consider a 2-dimensional smooth, fully asymmetric surface $\mathbb{S}$ embedded in $\mathbb{R}^3$ that is diffeomorphic to $S^2$. The surface is parametrized by $\mathbb{S}=\varphi(\mathcal{R})$, where $\mathcal{R}=[0,\pi)\times [0,2\pi)$,  
\begin{equation*}
\mathbb{S}=\left\{(x,y,z)\left\vert
\begin{aligned}
& x(u, v)=a\, \rho(u, v) \sin u \cos v\\
& y(u, v)=b\, \rho(u, v) \sin u \sin v\\
& z(u, v)=c\, \rho(u, v) \cos u \\
\end{aligned},
\right.\,  \text{ with  $u \in [0, \pi)$ and $v \in [0, 2\pi)$} \right\},
\end{equation*}
$a=1.1$, $b=1.0$, $c=0.9$, 
\[
\rho(u, v) = 1 + w(u) \left( 
    0.2 \sin(2u + v) + 
    0.15 \cos(4v + u) + 
    0.1 \sin(4u) \cos(2v) 
\right)\,,
\]
and
\[
w(u)=\exp \left(-\frac{1}{u^{2.6}(\pi-u)^{2.2}}\right)\left(\frac{1}{2}+\frac{1}{2} \cos \left(2u-\pi\right)\right)^{2.2} \,.
\]
Here, $w(u)$ acts as a cutoff function, effectively regularizing the non-smooth region around the pole, while the term inside the latter parentheses controls the rate of change so that it does not vary too rapidly.

Now, choose two points,
$s_1 = \varphi(\tfrac{\pi}{2},\, \pi+0.15)$ and
$s_2 = \varphi(\tfrac{\pi}{2},\, \pi-0.15)$ from $\mathbb{S}$,
and sample $m$ landmark points from an $\epsilon=0.3$ neighborhood around each point. The goal is to examine whether the approximation quality of the effective transport operator improves as $m$ increases. At $s_1$, we select the tangent vector $u_{s_1} = (-0.06,\; 0.59,\; -0.80)$, normalized to unit length. Using the tangent space construction and parallel transport approximation described in the last part of Section~\ref{sec:setup} (see also~\cite{singer2011}), we obtain its parallel transport to $s_2$, as $u_{s_2} = (0.04,\;-0.69,\;-0.72)$. We then evaluate how well the approximated effective transport operator, computed using different numbers of landmarks $m=20,\;40,\;60,\;80$. For each value of $m$, the experiment is repeated 30 times, and we report the median and MAD of the resulting $L^2$-norm errors. The results are summarized as follows: for increasing numbers of landmarks $m=20,\;40,\;60$, the median $L^2$ error (median $\pm $MAD) decreases from $0.145 \pm 0.0718$, $0.061 \pm 0.032$ and $0.036 \pm 0.024$, respectively. We also visualize the transported vectors for $m=20$ and $m=40$ in Figure~\ref{fig:transp_vis}. These results clearly indicate that the approximation improves as the number of landmarks increases.

\begin{figure}
    \centering
\includegraphics[trim=0 100 0 150, clip, width=0.6\linewidth]{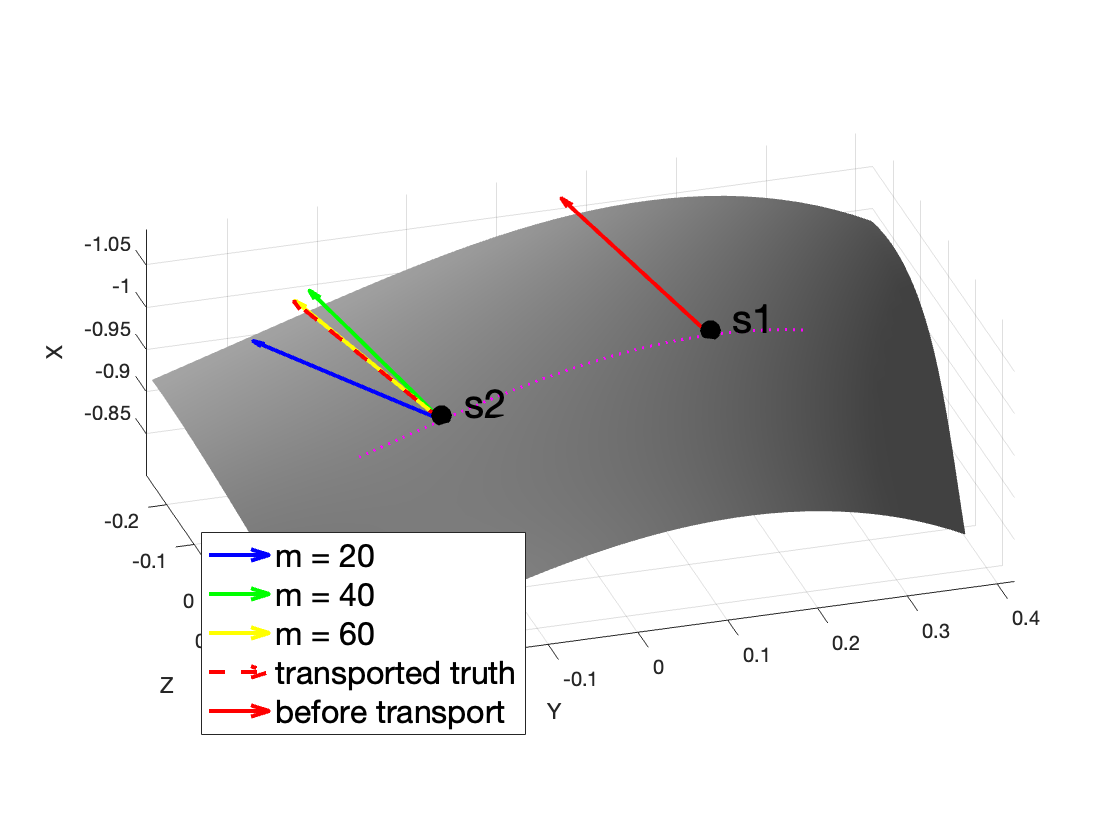}
    \caption{Illustration of parallel transport from $s_1$ to $s_2$ on the distorted sphere. The vector attached at $s_1$ is transported along the surface to $s_2$ using different size of landmark.}
    \label{fig:transp_vis}
\end{figure}

\subsection{The impact of landmark size on LA-VDM}\label{sec:klein}
We study the computational time and how well LA-VDM approximates VDM with various landmark sizes. Consider a Klein bottle, defined as 
\begin{equation*}
\mathbb{K}=\phi(\mathcal{P}),
\end{equation*}
where $\mathcal{P}:=[0,2\pi)\times [0,2\pi)$ is the parameter space of the Klein bottle,
\begin{equation*}
\phi:(u,v)\mapsto\left(
\begin{aligned}
& x(u, v)=R \left( \cos(\frac{u}{2})\cdot \cos(v) - \sin(\frac{u}{2})\cdot \sin(2 v)\right)\\
& y(u, v)=R \left( \sin(\frac{u}{2})\cdot \cos(v) - \cos(\frac{u}{2})\cdot \sin(2 v)\right)\\
& z(u, v)=P\cos(u)(1+\gamma \sin(v)) \\
& w(u, v)=P\sin(u)(1+\gamma \sin(v))
\end{aligned}
\right)\in \mathbb{R}^4\,,
\end{equation*}
and we choose 
$R=2$, $P=1$ and $\gamma=0.1$. We treat $\mathbb{K}$ as $\iota(\M)$. We uniformly sample 3500 points on $\mathbb{K}$, and collect these points as $\{s_i\}_{i=1}^{n}$. We consider different landmark sizes as $m=\lfloor 2^{i/2}\rfloor$ for $i=11,12,\cdots,20$, where $\lfloor\cdot \rfloor$ is the floor function. The above process, which includes generating sample points and landmark points, is repeated independently for 30 times. We focus on $\alpha=0$, $\beta=1/2$ and set $\epsilon=0.2$ since $(1/2, 0)$-LA-VDM gives us the best approximation of VDM according to Corollary \ref{cor:b12}. 
The results are shown in Figure \ref{fig1}. Clearly, when the landmark size increases, the computational time increases, but the top eigenvalues and eigenvectors of $(1/2, 0)$-LA-VDM better approximate those of VDM. 

To further understand how LA-VDM approximates VDM as the landmark size increases, we visualize the vector fields on the parameter space $\mathcal{P}$ of the Klein bottle; that is, for a recovered vector field $X$ on $\mathbb{K}$, we represent it on $\mathcal{P}$ via pulling it back by $\phi^{-1}_*X$ for a visual comparison. 
The pulled back first, third, and fifth eigenvector fields of LA-VDM and VDM to $\mathcal{P}$ are shown in Figure~\ref{fig:exp1_vf}. 

To more closely evaluate the differences between the outputs of VDM and LA-VDM, we consider three pointwise metrics, including the relative $L^2$ error, the cosine similarity, and the relative magnitude difference. For the $l$-th eigenvector from LA-VDM and VDM, denoted by $w_l$ and $v_l$ respectively, consider
\begin{equation*}
I_{l,2}(i):=\frac{\| w_l[i] - v_l[i] \|_2}{\| v_l[i] \|_2}, \ \
I_{l,a}(i):=\frac{ w_l[i] \cdot v_l[i]}{\| v_l[i] \|_2 \| w_l[i] \|_2}, \ \ \mbox{and}\ \ 
I_{l,m}(i):=\frac{\left| \| v_l[i] \|_2 - \| w_l[i] \|_2 \right|}{\| v_l[i] \|_2}.
\end{equation*}
To avoid numerical instability, we omit values of $I_{l,2}(i)$ and $I_{l,m}(i)$ from visualization when the denominator $\| v_l[i] \|_2$ falls below $10^{-7}$.
These indices quantify how the discrepancy between LA-VDM and VDM changes as the landmark set size increases. We plot all metrics on the $(u,v)$-plane. See Figures~\ref{fig:exp1_rel_err}, \ref{fig:exp1_mag_err}, and \ref{fig:exp1_angle_err}. Table~\ref{tab:eigen_landmark_metrics} summarizes the median and median absolute deviation (MAD) of the three metrics $I_{l,2}$, $I_{l,a}$, $I_{l,m}$ computed over all data points and grouped by eigenvector. 
To assess statistical significance, we applied the Wilcoxon signed-rank test to each pair of consecutive landmark sizes, with Bonferroni correction. Since three evaluation metrics were tested under three different landmark sizes $m$, this resulted in 18 pairwise comparisons; thus the significance level was adjusted to $\alpha/18\approx 0.0028$, where $\alpha$ is set to 0.05. Results marked with an asterisk (*) in Table~\ref{tab:eigen_landmark_metrics} indicate $p<\alpha/18$.
Across all eigenvectors and all metrics, the discrepancy between LA-VDM and VDM consistently decreases as the number of landmarks increases, fully in line with our theoretical guarantees.

\begin{figure}[h]
\begin{center}
 \includegraphics[width=1.\textwidth]{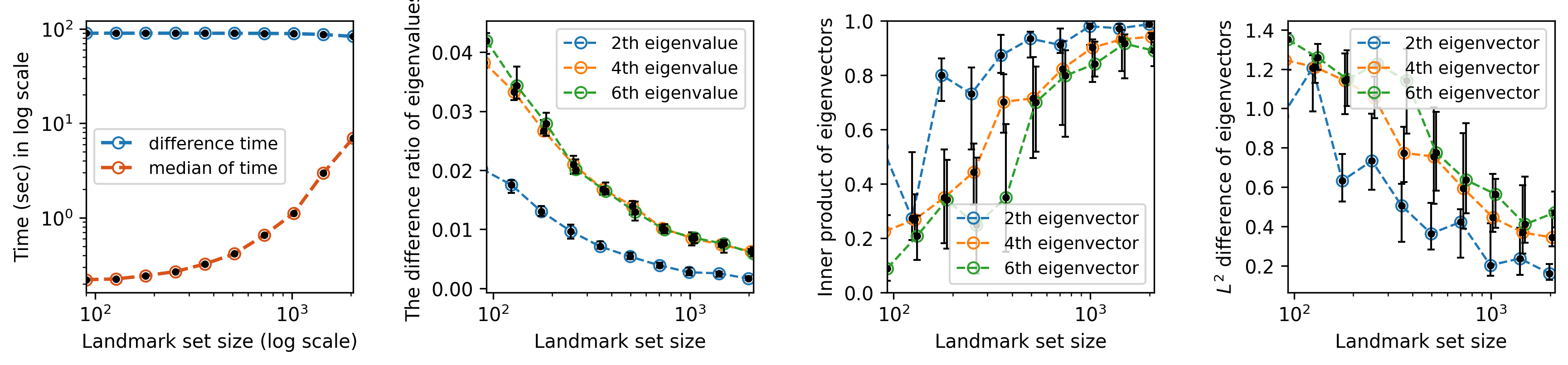} 
\end{center}
\caption{From left to right: a comparison of computational time between VDM and $(1/2,0)$-LA-VDM with different landmark sizes, he difference ratio of the 1st, 3rd, 5th eigenvalues between VDM and $(1/2,0)$-LA-VDM, the cosine similarity of the 1st, 3rd, 5th eigenvectors between VDM and $(1/2,0)$-LA-VDM.}
\label{fig1}
\end{figure}

\begin{figure}[h]
\begin{center}
 \includegraphics[width=1.\textwidth]{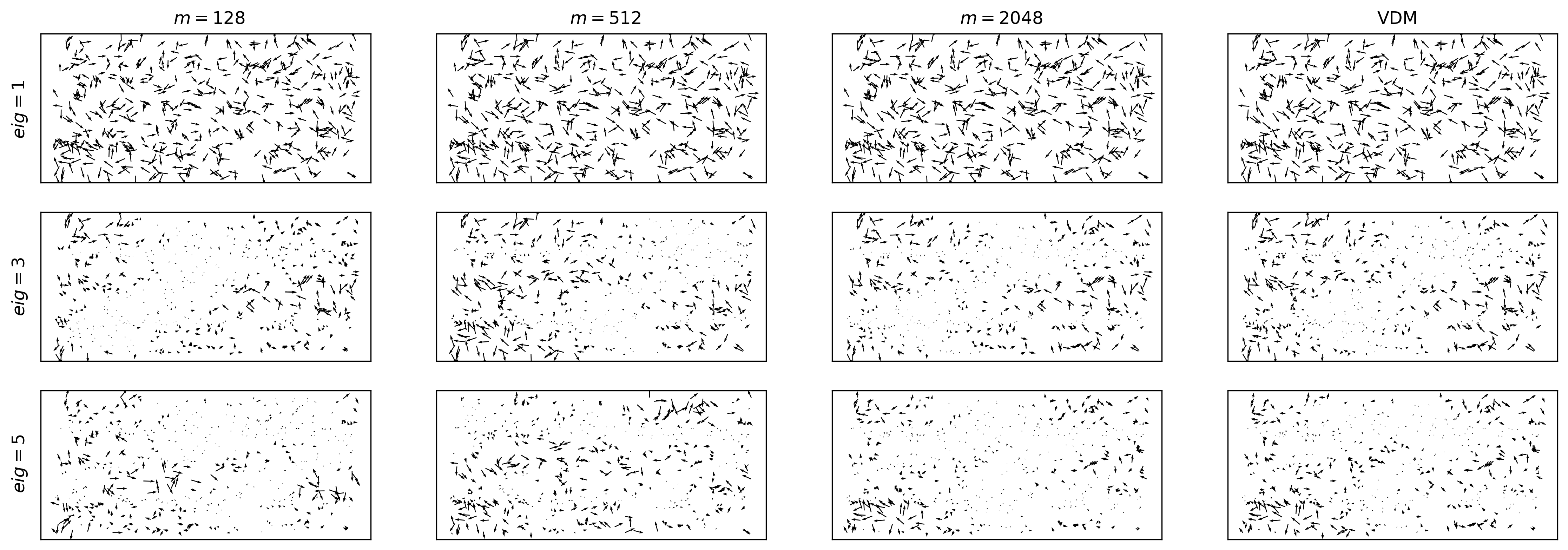} 
\end{center}
\caption{Plot of the first, third, and fifth eigenvector fields eigenvectors, for the Klein bottle on the parameter space $\mathcal{P}$. From top to bottom: first, third, and fifth eigen vector fields, and from left to right column: landmark set size 256, 1024, and 4096.}
\label{fig:exp1_vf}
\end{figure}

\begin{figure}[h]
\begin{center}
   \includegraphics[trim=5 0 735 0, clip, width=0.325\textwidth]{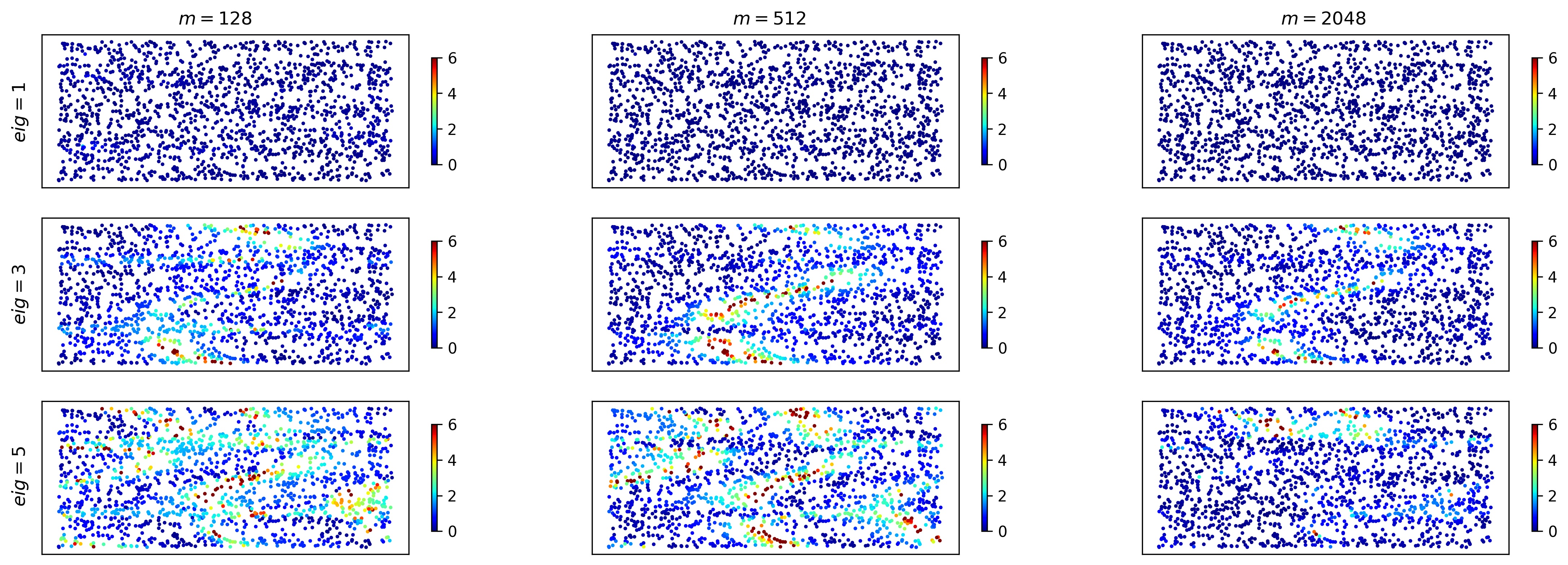} 
 \includegraphics[trim=370 0 370 0, clip, width=0.325\textwidth]{Figures/exp1_rel_error_3.png} 
 \includegraphics[trim=735 0 5 0, clip, width=0.325\textwidth]{Figures/exp1_rel_error_3.png} 
\end{center}
\caption{Plot of $I_{l,2}$, where $l=1,3,5$, for the Klein bottle on the parameter space $\mathcal{P}$; that is, plot $(\phi^{-1}(s_i), I_{l,2}(i))$. 
From top to bottom: first, third, and fifth eigen vector fields, and from left to right column: landmark set size 256, 1024, and 4096.}
\label{fig:exp1_rel_err}
\end{figure}

\begin{figure}[h]
\begin{center}
  \includegraphics[trim=5 0 735 0, clip, width=0.325\textwidth]{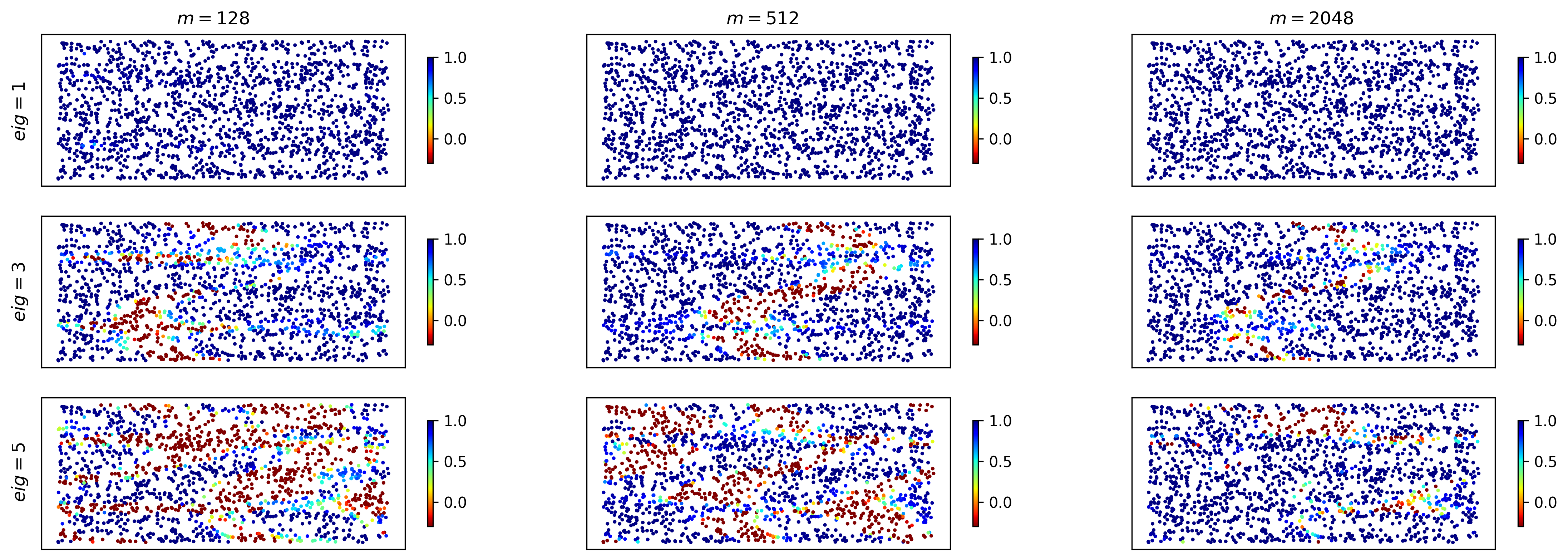} 
 \includegraphics[trim=370 0 370 0, clip, width=0.325\textwidth]{Figures/exp1_inner_prod_3.png} 
 \includegraphics[trim=735 0 5 0, clip, width=0.325\textwidth]{Figures/exp1_inner_prod_3.png} 
\end{center}
\caption{Plot of $I_{l,a}$, where $l=1,3,5$, for the Klein bottle on the parameter space $\mathcal{P}$; that is, plot $(\phi^{-1}(s_i), I_{l,a}(i))$. From top to bottom: first, third, and fifth eigen vector fields, and from left to right column: landmark set size 256, 1024, and 4096.}
\label{fig:exp1_mag_err}
\end{figure}

\begin{figure}[h]
\begin{center}
 \includegraphics[trim=5 0 735 0, clip, width=0.325\textwidth]{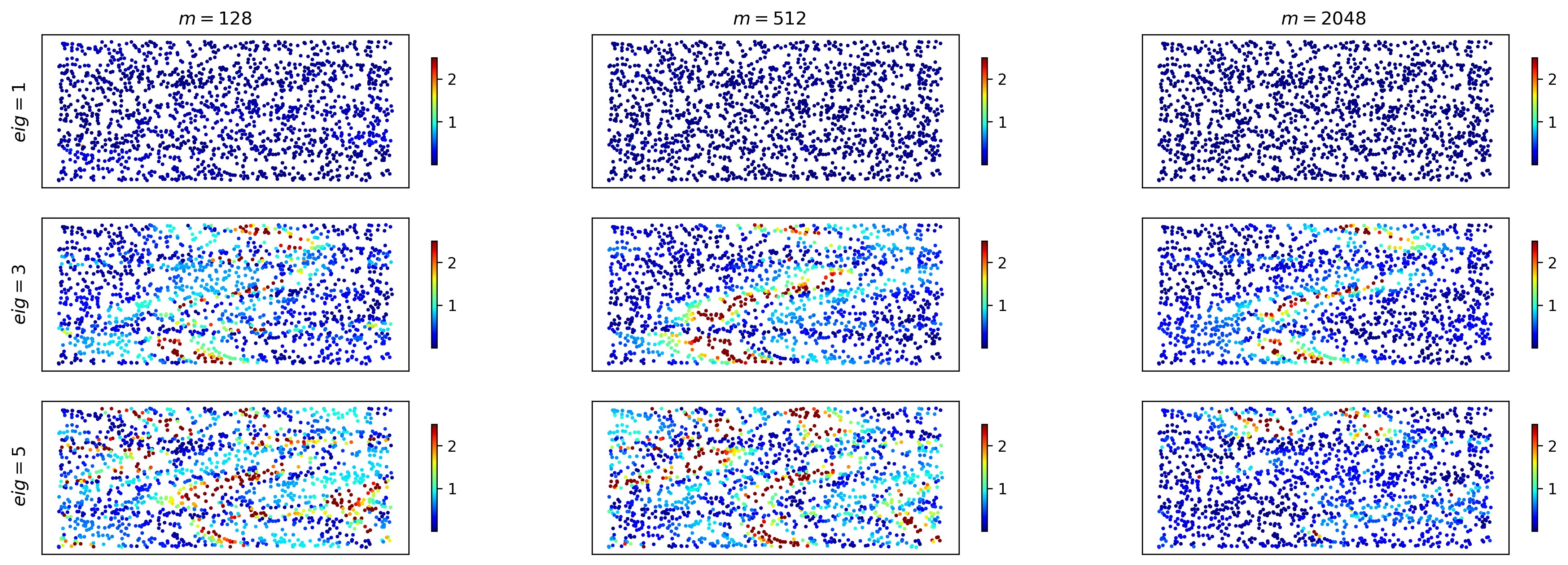}
 \includegraphics[trim=370 0 370 0, clip, width=0.325\textwidth]{Figures/exp1_mag_error_3.png}
 \includegraphics[trim=735 0 5 0, clip, width=0.325\textwidth]{Figures/exp1_mag_error_3.png}
\end{center}
\caption{Plot of $I_{l,m}$, where $l=1,3,5$, for the Klein bottle on the parameter space $\mathcal{P}$; that is, plot $(\phi^{-1}(s_i), I_{l,m}(i))$. From top to bottom: first, third, and fifth eigen vector fields, and from left to right column: landmark set size 256, 1024, and 4096.}
\label{fig:exp1_angle_err}
\end{figure}

\begin{table}[h]
\small
\centering
\begin{tabular}{
>{\centering\arraybackslash}m{1.cm} 
>{\centering\arraybackslash}m{1.cm} 
*3{>{\centering\arraybackslash}m{2.2cm} }
}
\noalign{\hrule height 1.2pt}
\multirow{2}{*}{Eigen} & \multirow{2}{*}{Metric}  & \multicolumn{3}{c}{Landmark size} \\
 && 128 & 512 & 2048 \\
\midrule
\multirow{3}{*}{1st} 
 & $I_{1,2}$ &0.108$\pm$0.045 & 0.036$\pm$0.014* & 0.026$\pm$0.010* \\
 & $I_{1,a}$  &0.639$\pm$0.378 & 0.549$\pm$0.339* & 0.342$\pm$0.204* \\
 & $I_{1,m}$  &1.262$\pm$0.686 & 1.029$\pm$0.676* & 0.303$\pm$0.179* \\
\hline
\multirow{3}{*}{3rd} 
 & $I_{3,2}$ &0.999$\pm$0.001 & 0.999$\pm$0.001* & 0.999$\pm$0.001* \\
 & $I_{3,a}$  &0.980$\pm$0.021 & 0.984$\pm$0.016 & 0.997$\pm$0.003 \\
 & $I_{3,m}$  &0.524$\pm$0.475 & 0.892$\pm$0.108* & 0.997$\pm$0.003* \\
\hline
\multirow{3}{*}{5th} 
 & $I_{5,2}$ &0.067$\pm$0.039 & 0.019$\pm$0.012* & 0.010$\pm$0.006* \\
 & $I_{5,a}$  &0.374$\pm$0.244 & 0.369$\pm$0.249* & 0.288$\pm$0.178* \\
 & $I_{5,m}$  &0.649$\pm$0.325 & 0.592$\pm$0.342* & 0.242$\pm$0.143* \\
\noalign{\hrule height 1.2pt}
\end{tabular}
\caption{Performance comparison across different eigenvector with different landmark sizes.}
\label{tab:eigen_landmark_metrics}
\end{table}

\subsection{The impact of landmark distribution and $\beta$-normalization}\label{sec:asymsphere}

To study the impact of landmark distribution, consider the distorted sphere, $\mathbb{S}$, defined in Section \ref{section: effective kernel numerics} and take $\epsilon=0.17$. We sample 3,500 points on $\mathbb{S}^2$ from an angular central Gaussian distribution with density $p(x) \propto(x^\top \Sigma^{-1} x)^{-3/2}$ where $x\in\mathbb{S}^2$ and $\Sigma=\mathtt{diag}(1,1,0.8)$, and map them to a distorted sphere $\mathbb{S}$, resulting in a non-uniform
point set $\tilde{\mathcal{X}}$.
We then uniformly select 2,200 landmark points from $\tilde{\mathcal{X}}$, denoted by $\tilde{\mathcal{Z}}$. To focus on the effect of $\beta$, we fix $\alpha = 0$ for both VDM and LA-VDM, which means no correction for sampling density is applied. However, since both methods are applied to the same dataset, the influence of sampling density remains consistent across the two embeddings. We investigate whether an appropriate choice of $\beta$ allows the $(\beta, 0)$-LA-VDM to recover the vanilla VDM, as suggested by Corollary~\ref{cor:b12}. The quantification results with various $\beta$ are illustrated in Figure \ref{fig:b12}. We observe that $(1/2,0)$-LA-VDM best approximates the embedding by VDM, which is consistent with our theoretical results.

\begin{figure}[h]
    \centering
    \includegraphics[width=0.8\linewidth]{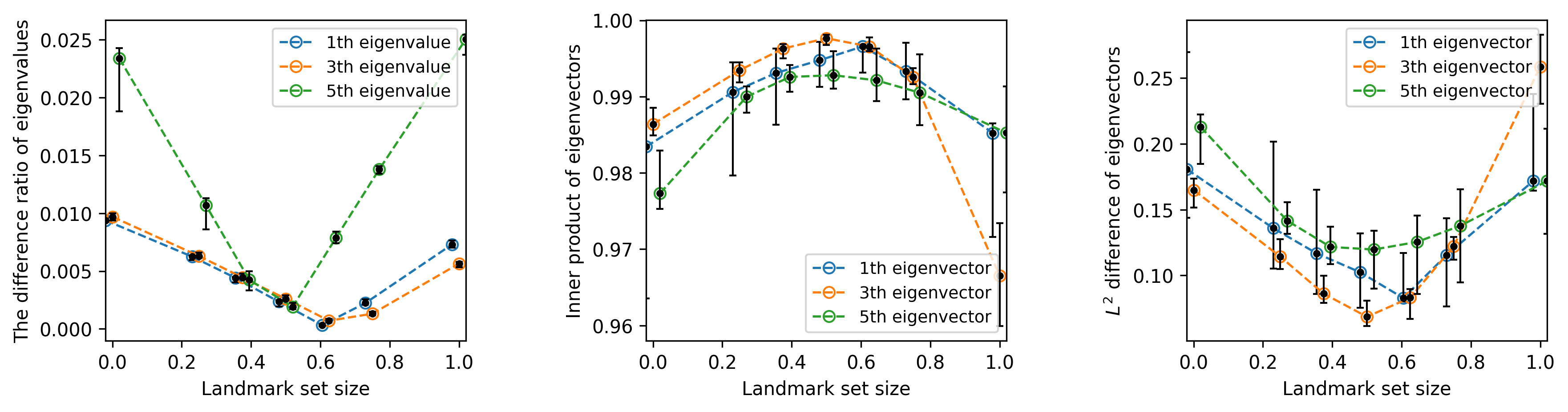}
    \caption{From left to right: the difference ratio of the 2nd, 4th, 6th non-trivial eigenvalues between VDM and $(\beta, 0)$-LA-VDM,  the inner product between the 2nd, 4th, 6th non-trivial eigenvectors of  VDM and $(\beta, 0)$-LA-VDM, the difference of the 2nd, 4th, 6th non-trivial eigenvectors of VDM and $(\beta, 0)$-LA-VDM.}
    \label{fig:b12}
\end{figure}

\subsection{The impact of $\alpha$-normalization}
To study the impact of $\alpha$-normalization, we use the same smooth fully asymmetric surface $ \mathbb{S}$. We generate a non-uniform sample of 3,500 points on $\mathbb{S}$, from which 2,500 landmark points are uniformly selected.
In this experiment, we aim to validate Corollary~\ref{cor:a1}, which states that setting $\alpha =1$ eliminates the influence of sampling density on the embedding. We fix $\alpha = 1$ in the vanilla VDM, for which the resulting embedding is known to be independent of the sampling density \cite{singer2011}. According to Corollary~\ref{cor:a1}, we set $\beta = 1/2$ in LA-VDM and vary $\alpha$ from 0 to 1. We then compare the embedding vector field produced by LA-VDM and the embedding of VDM with $\alpha = 1$. As shown in Figure~\ref{fig:a1}, the difference between the two embeddings becomes smaller as $\alpha$ approaches 1. This confirms that when $\alpha = 1$ and $\beta = 1/2$, the LA-VDM embedding becomes robust to the sampling density. Consequently, by incorporating $\alpha$-normalization, our LA-VDM successfully addresses the issue of sampling density that Roseland fails to eliminate.

\begin{figure}[h]
    \centering
    \includegraphics[width=0.8\linewidth]{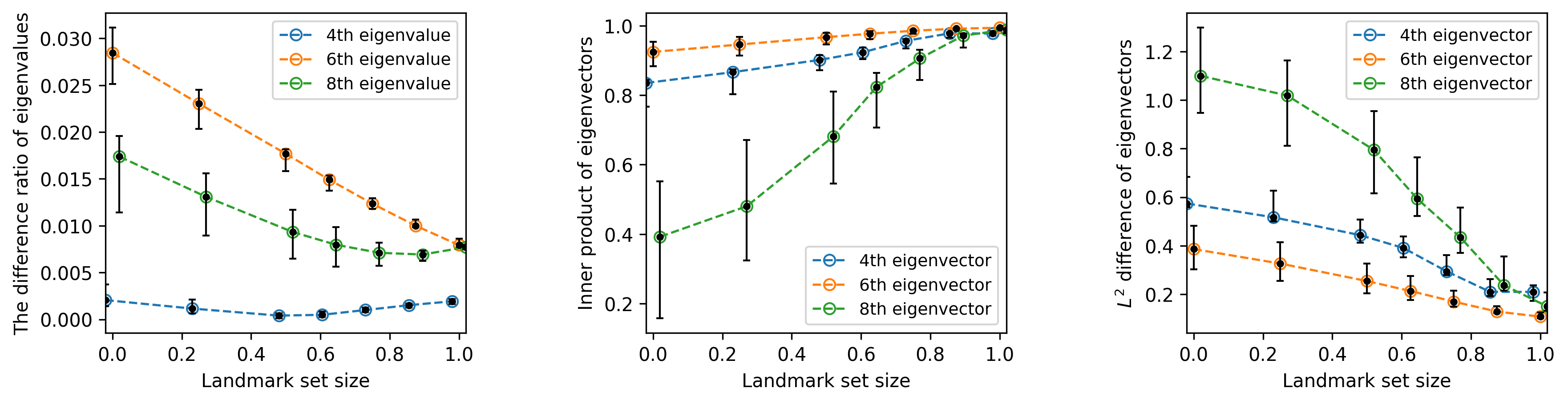}
    \caption{From left to right: the difference ratio of the 2nd, 4th, 6th non-trivial eigenvalues between VDM with $\alpha=1$ and $(1/2, \alpha)$-LA-VDM,  the inner product between the 2nd, 4th, 6th non-trivial eigenvectors of VDM with $\alpha=1$ and $(1/2, \alpha)$-LA-VDM, the difference of the 2nd, 4th, 6th non-trivial eigenvectors of VDM with $\alpha=1$ and $(1/2, \alpha)$-LA-VDM.}
    \label{fig:a1}
\end{figure}

\subsection{Eigenpair Recovery}
To further assess the recovery capability of $(\beta,\alpha)$-LA-VDM, we conducted an experiment on the distorted sphere $\mathbb{S}$ introduced in Section~\ref{sec:asymsphere}. We uniformly sampled $n=5,000$ points from $\mathbb{S}$ and randomly selected $500\approx 7\sqrt{n}$ landmark points. Set $\alpha=0$ and $\beta=1/2$. Our goal was to evaluate how well LA-VDM recovers the top 20 eigenvalues and eigenvectors, using the VDM results on the full dataset as the ground truth. 

The accuracy of recovered eigenvalues and eigenvectors is shown in Figure~\ref{fig:fig_4_recover}, where we repeated the experiment on 30 random datasets to compute the error bars. Visually, $(\beta,\alpha)$-LA-VDM successfully recover the top six eigenvectors of VDM, consistent with the theoretical guarantees. 

\begin{figure}
    \centering
    \includegraphics[width=0.8\linewidth]{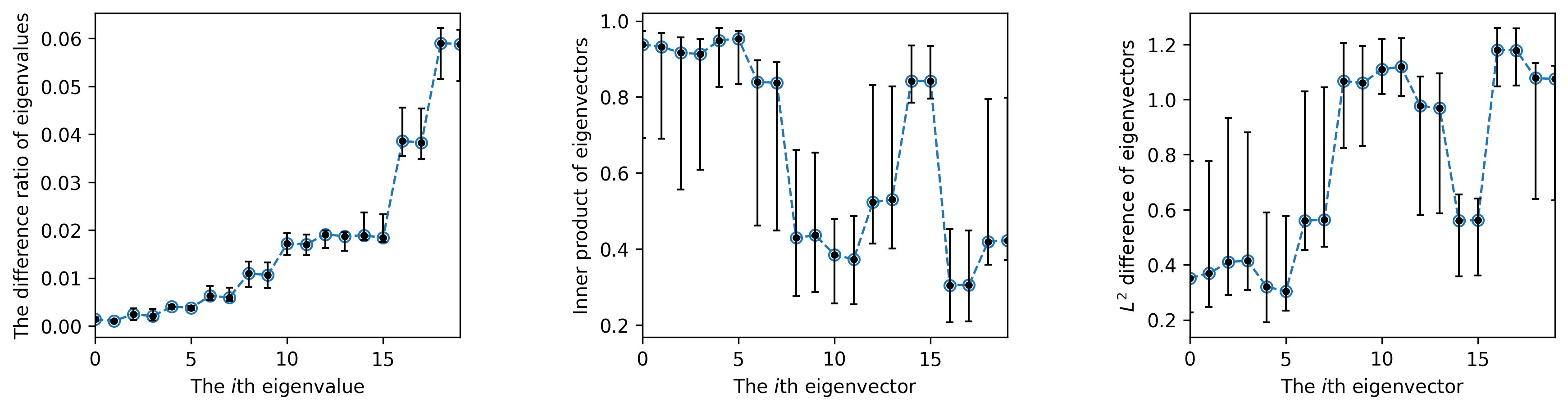}
    \caption{From left to right: the difference ratio of eigenvalues between VDM and LA-VDM, the inner product between eigenvectors of VDM  and LA-VDM, the $L^2$ difference of eigenvectors of VDM and LA-VDM.}
    \label{fig:fig_4_recover}
\end{figure}

\subsection{Results with large-scale data}

To assess the scalability of the algorithm, we conducted experiments on large datasets using the same hardware setup, Python environment, and NumPy version as described earlier. To handle the increased data size, we optimized memory usage by adopting the \texttt{np.float32} data type and employing sparse matrix representations.

We evaluated the proposed LA-VDM method on two large-scale datasets: the Klein bottle $\mathbb{K}$ and the distorted sphere $\mathbb{S}$ defined in Section~\ref{sec:klein} and \ref{sec:asymsphere}, respectively. For both datasets, we used sample sizes of 500,000 and 1,000,000. 
For the Klein bottle $\mathbb{K}$ (distorted sphere $\mathbb{S}$ respectively), we used bandwidths $\epsilon = 0.03$ and $0.025$ ($\epsilon = 0.014$ and $0.012$ respectively), the landmark sets contained 706 and 1000 points (approximately $\sqrt{n}$),
landmark affinity matrices were constructed by truncating entries beyond $5.0\epsilon$, and the computation times of LA-VDM were 330.7 s and 782.9 s (281.8 s and 710.6 s respectively).

The original VDM could not be fully tested because of prohibitive memory requirements. Even after sparsifying the affinity matrix by truncating distances above $\epsilon$, resulting in only about 0.17\% nonzeros entries, the case $n = 1,000,000$ still exceeded memory capacity. For $500{,}000$ points, the VDM eigendecomposition requires roughly 50 minutes. In contrast, LA-VDM is dramatically faster and far more memory-efficient, enabling computations that are infeasible for standard VDM.


\section*{Acknowledgments}
The successful completion of this research was made possible by the academic resources and advanced research infrastructure provided by the National Center for High-Performance Computing, National Institutes of Applied Research (NIAR), Taiwan. We gratefully acknowledge their invaluable support. This research was funded in part by National Science and Technology Council grant 112-2115-M-002-015-MY3. The authors declare no competing interests.

\bibliographystyle{siamplain}
\bibliography{main}

\appendix

\clearpage

\setcounter{page}{1}
	\setcounter{equation}{0}
	\setcounter{figure}{0}
	\renewcommand{\thepage}{SI.\arabic{page}}
	\renewcommand{\thesection}{SI.\arabic{section}}
	\renewcommand{\theequation}{SI.\arabic{equation}}
	\renewcommand{\thelemma}{SI.\arabic{lemma}}
	\renewcommand{\thetable}{SI.\arabic{table}}
	\renewcommand{\thefigure}{SI.\arabic{figure}}

\section{Necessary differential geometry background}\label{section: necessary diff geo background}
%

We mainly follow the notation in \cite{berline2003heat} and focus on connection-related topics in the principal and vector bundle setting. We assume basic knowledge of Riemannian geometry.

Let $\mathcal{F}$ and $\M$ be smooth manifolds. From now on we assume that $\M$ is compact without boundary, although the discussion extends to more general situations (e.g.\ paracompact $\M$). A fiber bundle over $\M$ with typical fiber $F$ is a pair $(\mathcal{F},\pi)$, where the \emph{canonical projection} $\pi:\mathcal{F}\to \M$ is a smooth map and $\mathcal{F}$ is locally trivial; that is, there exists an open covering $\{U_i\}$ of $\M$ and diffeomorphisms $\phi_i:\pi^{-1}(U_i)\to U_i\times F$ such that $\pi:\pi^{-1}(U_i)\to U_i$ is the composition of $\phi_i$ with the projection onto the first factor $U_i$. We call $\mathcal{F}$ the \emph{total space} and $\M$ the \emph{base}. For each $x\in \M$, the fiber $\mathcal{F}_x:=\pi^{-1}(x)$ is a closed submanifold of $\mathcal{F}$ diffeomorphic to $F$, and we call $F$ the fiber of $\mathcal{F}$.

A principal bundle with structure group $G$ (a Lie group) is a fiber bundle $(P,\pi)$ equipped with a right action of $G$ on the fibers; that is, $(p\circ g)\circ h=p\circ(gh)$ for $p\in P$ and $g,h\in G$, and $\pi(p\circ g)=\pi(p)$ for all $p\in P$ and $g\in G$. The right action is required to be free and transitive on each fiber. Equivalently, for each local trivialization $\phi_i$ we have $\phi_i(p\circ g)=\phi_i(p)\circ g$ for all $p\in \pi^{-1}(U_i)$ and $g\in G$. In particular, each fiber $\pi^{-1}(x)$ is diffeomorphic to $G$, and $\M$ can be identified with the quotient manifold $P/G$. We denote such a bundle by $P:=P(\M,G)$, and sometimes call it a $G$-bundle when no confusion is possible. If $P=\M\times G$ with $(x,g)\circ h=(x,gh)$ for $x\in \M$ and $g,h\in G$, then $P$ is called \emph{trivial}. A typical nontrivial example is $SO(3)$, which is a principal $SO(2)$-bundle over $S^2$.

A vector bundle is a fiber bundle whose typical fiber is a vector space $E$, and such that the diffeomorphisms $\phi_j\circ \phi_i^{-1}:\{x\}\times E\to \{x\}\times E$ are invertible linear maps for all $x\in U_i\cap U_j$. In this paper we consider only real vector bundles. A typical example is the tangent bundle of $S^2$, denoted by $TS^2$, whose typical fiber is $\mathbb{R}^2$.

In our work, we model complex data structure by taking into account the relationship between principal bundles and vector bundles, which encodes the familiar correspondence between bases and coordinates from linear algebra. Suppose $P$ is a principal bundle with structure group $G$ and $E$ is an $N$-dimensional vector space carrying a linear representation of $G$. Then one can form a vector bundle over $\M$ with typical fiber $E$ via the associated bundle $P\times_G E$, defined by the equivalence relation $(p\circ g,v)\sim (p,g^{-1}v)$ for all $p\in P$, $g\in G$, and $v\in E$. Geometrically, $(p,v)$ records a base point, a choice of frame, and the corresponding coordinate vector, while the equivalence relation encodes the usual change-of-coordinates rule.

Conversely, every rank-$N$ vector bundle $\mathcal{E}$ over $\M$ can be constructed from a principal bundle. Specifically, let $GL(\mathcal{E})$ be the bundle whose fiber over $x\in \M$ consists of all invertible linear maps from $\mathbb{R}^N$ to $\mathcal{E}_x$. Then $GL(\mathcal{E})$ is a principal $GL(N)$-bundle under the action $(p\circ g)(v)=p(gv)$ for $p\in GL(\mathcal{E})$, $g\in GL(N)$, and $v\in \mathbb{R}^N$, and one has an isomorphism $\mathcal{E}\cong GL(\mathcal{E})\times_{GL(N)}\mathbb{R}^N$. We call $GL(\mathcal{E})$ the \emph{frame bundle} of $\mathcal{E}$. The same geometric interpretation applies: $(p,v)\in GL(\mathcal{E})\times \mathbb{R}^N$ records a frame $p$, the base point $x=\pi(p)$, and the coordinate vector $v$, and the relation $(p\circ g)(v)=p(gv)$ encodes the change-of-coordinates rule.

We now consider a typical example used in this paper. Let $\mathcal{E}=T\M$ be the tangent bundle of a $d$-dimensional manifold $\M$. A linear frame at $x\in \M$ is an ordered basis of $T_x\M$. Let $L(\M)$ denote the set of all linear frames at all points of $\M$. Then $L(\M)$ is a principal $GL(d)$-bundle with canonical projection $\pi$ sending a frame at $x$ to $x$, and with the right action of $GL(d)$ given by $u\circ g=\sum_j g_i^j u_j$, where $g=(g_i^j)\in GL(d)$ and $u=(u_1,\ldots,u_d)$ is a basis of $T_x\M$. Thus $L(\M)=GL(T\M)$. Moreover, if $u\in L(\M)$ with $\pi(u)=x$, then $u$ defines an invertible linear map $\mathbb{R}^d\to T_x\M$ via
$$
u(v)=\sum_{j=1}^d v_j u_j,
\qquad v=(v_1,\ldots,v_d)\in \mathbb{R}^d.
$$
Hence $T\M$ is the vector bundle associated to $GL(T\M)$ with typical fiber $\mathbb{R}^d$. More general tensor bundles can be constructed similarly; for example, taking $E=\Lambda(\mathbb{R}^d)^*$ yields the bundle of exterior forms $\Lambda T^*\M$.

For a vector bundle $\mathcal{E}$ over $\M$, a map $s:\M\to \mathcal{E}$ is called a \emph{section} if $\pi(s(x))=x$ for all $x\in \M$. We denote by $\Gamma(\mathcal{E})$ the space of smooth sections, and by $C^k(\mathcal{E})$ the space of $k$-times differentiable sections ($k\ge 0$). Let $C(\mathcal{E}):=C^0(\mathcal{E})$ be the space of continuous sections. Note that one can define sections of a principal bundle $P(\M,G)$ in the same way; a smooth global section $\sigma:\M\to P$ exists if and only if $P$ is trivial (i.e.\ $P\cong \M\times G$).

A metric on a vector bundle $\mathcal{E}$, denoted by $g^{\mathcal{E}}$, is a smooth family of non-degenerate inner products on the fibers of $\mathcal{E}$. By smooth we mean that for smooth sections $s_1,s_2$, the function $g^{\mathcal{E}}(s_1,s_2)$ is smooth on $\M$. In general such a metric exists when $\M$ is paracompact, which is satisfied here since $\M$ is compact. Equivalently, $\mathcal{E}$ is associated to an $O(N)$-principal bundle $O(\mathcal{E})\subset GL(\mathcal{E})$ encoding the inner product. With a fiber metric, one can define $L^p(\mathcal{E})$ ($1\le p<\infty$), the space of $L^p$-integrable sections, and $L^\infty(\mathcal{E})$ similarly.

If $\mathcal{E}$ is a vector bundle over $\M$, a \emph{covariant derivative} on $\mathcal{E}$ is a differential operator
$$
\nabla^{\mathcal{E}}: \Gamma(\mathcal{E}) \to \Gamma(T^*\M\otimes \mathcal{E})
$$
satisfying the Leibniz rule: for $s\in \Gamma(\mathcal{E})$ and $f\in C^\infty(\M)$,
$$
\nabla^{\mathcal{E}}(fs)=df\otimes s+f\nabla^{\mathcal{E}}s.
$$
For a vector field $X\in \Gamma(T\M)$, the covariant derivative along $X$ is denoted $\nabla_X^{\mathcal{E}}:=\iota(X)\nabla^{\mathcal{E}}$, where $\iota(X)$ is contraction. Clearly $\nabla^{\mathcal{E}}_{fX}=f\nabla^{\mathcal{E}}_X$ for $f\in C^\infty(\M)$. We mention that $\nabla^{\mathcal{E}}$ corresponds to a connection on the frame bundle $GL(\mathcal{E})$, which provides a smooth splitting of $T(GL(\mathcal{E}))$ into horizontal and vertical subspaces; see \cite[p.~19 and Proposition~1.16]{berline2003heat}.

Let $\nabla^{\mathcal{E}}$ be a covariant derivative on $\mathcal{E}$. Given a $C^1$ curve $\gamma:[0,1]\to \M$, the \emph{parallel transport} along $\gamma$ is the family of linear maps $\parallelslant_{\gamma(0)}^{\gamma(t)}\in \mathrm{Hom}(\mathcal{E}_{\gamma(0)},\mathcal{E}_{\gamma(t)})$ obtained by solving the ODE
$$
\nabla^{\mathcal{E}}_{\dot\gamma(t)} U(t)=0,\qquad U(0)=u\in \mathcal{E}_{\gamma(0)}.
$$
(Here $\nabla^{\mathcal{E}}_{\dot\gamma(t)}$ is defined using the pull-back connection on $\gamma^*\mathcal{E}$.)
For convenience, we set $\parallelslant_y^x u=0$ for $u\in \mathcal{E}_y$ if $y$ lies in the cut locus of $x$.
Moreover, for $s\in C^1(\mathcal{E})$, the covariant derivative at $x=\gamma(0)$ in the direction $v=\gamma'(0)$ can be written as
$$
\nabla^{\mathcal{E}}_{v}s(x)
=
\lim_{h\to 0}\frac{1}{h}\Bigl[\parallelslant^{\gamma(0)}_{\gamma(h)}s(\gamma(h))-s(\gamma(0))\Bigr].
$$
This limit is independent of the choice of $\gamma$ with $\gamma'(0)=v$. See Example~\ref{Example: parallel transport S^2} for a concrete computation.

A covariant derivative $\nabla^{\mathcal{E}}$ preserves the metric $g^{\mathcal{E}}$ if for any $s_1,s_2\in \Gamma(\mathcal{E})$,
$$
d\,g^{\mathcal{E}}(s_1,s_2)
=
g^{\mathcal{E}}(\nabla^{\mathcal{E}} s_1,s_2)
+
g^{\mathcal{E}}(s_1,\nabla^{\mathcal{E}} s_2),
$$
where $d$ is the exterior differential. In this paper we consider only metric-compatible connections.
For the tangent bundle $T\M$ of a Riemannian manifold $(\M,g)$, there is a unique metric-compatible torsion-free
connection, the \emph{Levi--Civita connection}, which we denote by $\nabla$.

Given the Levi--Civita connection $\nabla$ on $T\M$ and a connection $\nabla^{\mathcal{E}}$ on $\mathcal{E}$,
the induced connection on $T^*\M\otimes \mathcal{E}$ is defined by
$$
\nabla^{T^*\M\otimes \mathcal{E}}(\phi\otimes s)
=
\nabla^{T^*\M}\phi\otimes s+\phi\otimes \nabla^{\mathcal{E}}s,
$$
for $\phi\in \Gamma(T^*\M)$ and $s\in \Gamma(\mathcal{E})$.
The \emph{connection Laplacian} $\nabla^2:\Gamma(\mathcal{E})\to \Gamma(\mathcal{E})$ is defined by
$$
\nabla^2:=-\operatorname{tr}\bigl(\nabla^{T^*\M\otimes \mathcal{E}}\nabla^{\mathcal{E}}\bigr),
$$
where $\operatorname{tr}$ denotes contraction over cotangent indices using $g$.
This is the main geometric target of VDM, and hence of LA-VDM. To distinguish it from the Laplace--Beltrami operator
$\Delta:C^\infty(\M)\to C^\infty(\M)$, we reserve $\Delta$ for the latter.

For self-containedness, we now present a concrete example on $\mathbb{S}^2$ illustrating why landmark-constrained
(two-step) parallel transport requires additional care.

\begin{example}\label{Example: parallel transport S^2}

Consider the canonical 2-dim sphere $\mathbb{S}^2$ and we specialize to the tangent bundle and use the Levi-Civita connection associated with the standard Riemannian metric, expressed in terms of Christoffel symbols in a coordinate basis. See Figure \ref{fig:parallel estimation LAVA}.

Consider an open chart of $U\subset \mathbb{S}^2$ with the spherical coordinate $\varphi:(\theta, \phi)\in (0, \pi)\times (-\pi, \pi)\to(\sin(\theta)\cos(\phi),\sin(\theta)\sin(\phi),\cos(\theta))$, where $\theta$ denotes the colatitude and $\phi$ denotes the longitude. The Riemannian metric on $\mathbb{S}^2$ induced from the isometric embedding in $\mathbb{R}^3$ is $g = d\theta^2 + \sin^2\theta\, d\phi^2$.
Denote $\partial_\theta:=\varphi_*e_1=(\cos(\theta)\cos(\phi),\cos(\theta)\sin(\phi),-\sin(\theta))$ and $\partial_\phi:=\varphi_*e_2=(-\sin(\theta)\sin(\phi),\sin(\theta)\cos(\phi),0)$ as coordinate vector fields, where $e_i$ form a canonical basis of $\mathbb{R}^2$, which at each point $x\in U$ form a basis of the tangent space $T_x\mathbb{S}^2$. 
Let the initial tangent vector be $u_0 = a\, \partial_\phi|_x$ at $x = \varphi(\pi/2, 0)$, where $a > 0$ is a constant. Consider three geodesics:
\begin{itemize}
\item $\gamma_1(t) = (\cos(t), \sin(t), 0)$ for $t \in [0, \pi/2]$,
\item $\gamma_{21}(t) = (\cos(t), 0, \sin(t))$ for $t \in [0, \pi/4]$,
\item $\gamma_{22}(t) = \left( \frac{\cos(t)}{\sqrt{2}}, \sin(t), \frac{\cos(t)}{\sqrt{2}} \right)$ for $t \in [0, \pi/2]$.
\end{itemize}
Define two paths from $x$ to $y = \varphi(\pi/2, \pi/2)$:
\begin{itemize}
\item Path $\Gamma_1$ directly connects $x$ to $y$ along $\gamma_1$, which is a geodesic.
\item Path $\Gamma_2$ is composed of $\gamma_{21}$ from $x$ to $z = \varphi(\pi/4, 0)$, followed by $\gamma_{22}$ from $z$ to $y$, which is not a geodesic.
\end{itemize}
By a direct calculation, the Christoffel symbol associated with the spherical coordinate is $\Gamma^\theta_{\phi,\phi}(\varphi(\theta,\phi))=-\sin(\theta)\cos(\theta)$, $\Gamma^{\phi}_{\phi,\theta}(\varphi(\theta,\phi))=\Gamma^\phi_{\theta,\phi}(\varphi(\theta,\phi))=\cot(\theta)$ and other terms are $0$. 
Then, recall that the parallel transport of a vector $V_0\in T_x\mathbb{S}^2$ along a curve $\gamma$ such that $\gamma(0)=x$ and $\gamma'(t)=\dot{\gamma}^\theta(t)\partial_\theta|_{\gamma(t)}+\dot{\gamma}^\phi(t)\partial_\phi|_{\gamma(t)}$, denoted as a vector field $V(\gamma(t))=V^\phi(t)\partial_\phi|_{\gamma(t)}+V^\theta(t)\partial_\theta|_{\gamma(t)}$, in general satisfies $\nabla_{\gamma'}V=0$, which can be spelled out to be $\frac{dV^\theta(t)}{dt}+\Gamma^{\theta}_{\phi,\phi}(\gamma(t))\dot{\gamma}^\phi(t)V^\phi(t)=0$ and $\frac{dV^\phi(t)}{dt}+\Gamma^{\phi}_{\theta,\phi}(\gamma(t))\dot{\gamma}^\theta(t)V^\phi(t)+\Gamma^{\phi}_{\phi,\theta}(\gamma(t))\dot{\gamma}^\phi(t)V^\theta(t)=0$.

First, consider the parallel transport of $X_0\in T_x\M$ along $\Gamma_1$. Along $\gamma_1(t)$, $\partial_\theta|_{\gamma_1(t)}=(0,0,1)$ and $\partial_\phi|_{\gamma_1(t)}=(-\sin(t),\cos(t),0)$, and hence $\dot{\gamma}_1^\theta(t)=0$, $\dot{\gamma}_1^\phi(t)=1$. Denote $X(\gamma_1(t)) = X_1^\theta(t) \partial_\theta|_{\gamma_1(t)} + X_1^\phi(t) \partial_\phi|_{\gamma_1(t)}$ to be the parallel transported vector field along $\Gamma_1$, which satisfies $\frac{d X^\theta_1(t)}{d t}= \frac{d X^\phi_1(t)}{d t}=0$
with initial value $X_1^{\theta}(0)=0$ and $X_1^\phi(0)=a$. Therefore, the parallel transported vector at $y$ is $u_1:=a\partial_\phi|_y$. 

Next, we consider parallel transporting $u_0$ along the composite path $\Gamma_2$. Along $\gamma_{21}$, $\partial_\theta|_{\gamma_{21}(t)}=(\sin(t),0,-\cos(t))$ and $\partial_\phi|_{\gamma_{21}(t)}=(0,\cos(t),0)$, and hence $\dot{\gamma}_{21}^\theta(t)=-1$, $\dot{\gamma}_{21}^\phi(t)=0$. Note that $\theta=t+\pi/2$ in the parametrization. Denote $X(\gamma_{21}(t)) = X_{21}^\theta(t) \partial_\theta|_{\gamma_{21}(t)} + X_{21}^\phi(t) \partial_\phi|_{\gamma_{21}(t)}$ to be the parallel transported vector field along $\gamma_{21}$, which satisfies 
$\frac{d X_{21}^\theta(t)}{d t}=0$ and $\frac{d X_{21}^\phi(t)}{d t}=-\tan (t) X^\phi_{21}$ with the initial value $X_{21}^\theta(0)=0$ and $X_{21}^\phi(0)=a$.
The solution is $X_{21}^\theta(t)= 0$ and $X_{21}^\phi(t)=\frac{a}{\cos(t)}$ for $t\in [0,\pi/2)$. Hence, the parallel transport of $u_0$ to $z=\gamma_{21}(\pi/4)$ is $\sqrt{2}a\partial_\phi|_z$. Continue with $\gamma_{22}(t)$ to parallel transport $\sqrt{2}a\partial_\phi|_z$ to $y$. Note that for $\gamma_{22}(t)$, by a calculation, we know $\gamma_{22}(t)=\varphi(\theta(t),\phi(t))$, where $\theta(t)=\cos^{-1}\left(\frac{\cos t}{\sqrt{2}}\right)$ and $\phi(t)=\tan^{-1}\left(\sqrt{2}\tan t\right)$. We can then express $\gamma_{22}'(t)=\theta'(t)\,\partial_\theta|_{\gamma_{22}(t)}+\phi'(t)\,\partial_\phi|_{\gamma_{22}(t)}$, where $\theta'(t)=\frac{\sin t}{\sqrt{2-\cos^2 (t)}}$ and $\phi'(t)=\frac{\sqrt{2}}{1+\sin^2 t}$, and hence $\dot{\gamma}_{22}^\theta(t)=\frac{\sin t}{\sqrt{2-\cos^2 (t)}}$ and $\dot{\gamma}_{22}^\phi(t)=\frac{\sqrt{2}}{1+\sin^2 t}$. On the other hand, $\Gamma^\theta_{\phi,\phi}(\gamma_{22}(t))=\frac{-1}{2}\cos(t)\sqrt{2-\cos^2(t)}$ and $\Gamma^\phi_{\phi,\theta}(\gamma_{22}(t))=\frac{\cos(t)}{\sqrt{2-\cos^2(t)}}$.
Denote $X(\gamma_{22}(t)) = X_{22}^\theta(t) \partial_\theta|_{\gamma_{22}(t)} + X_{22}^\phi(t) \partial_\phi|_{\gamma_{22}(t)}$ to be the parallel transported vector field along $\gamma_{22}$, which satisfies 
$\frac{dX_{22}^\theta(t)}{dt}-\frac{\cos(t)}{\sqrt{4-2\cos^2(t)}}X_{22}^\phi(t)=0$ and $\frac{dX_{22}^\phi(t)}{dt}+\frac{\sqrt{2}\cos(t)}{(2-\cos^2(t))^{3/2}}X_{22}^\theta(t)+\frac{\sin(t)\cos(t)}{2-\cos^2(t)}X_{22}^\phi(t)=0$ with the initial value $X_{22}^\theta(0)=0$ and $X_{22}^\phi(0)=\sqrt{2}a$. As a result, the solution is 
\begin{equation*}
X_{22}^\theta(t)= \frac{a\sin t}{\sqrt{2-\cos^2 (t)}}\,\quad X_{22}^\phi(t)=\frac{\sqrt{2}a}{1+\sin^2 t}\,,
\end{equation*}

where $t\in [0,\pi/2]$. Hence, the parallel transport of $\sqrt{2}a\partial_\phi|_z$ from $z$ to $y$ is $u_2:=\frac{\sqrt{2}a}{2}\partial_\phi|_y+\frac{\sqrt{2}a}{2}\partial_\theta|_y:=X_2$. 
Thus, we find that $\parallelslant_x^yu_0=u_1 \neq u_2=\parallelslant_z^y\parallelslant_x^zu_0$.
\end{example}

\section{Proof of Main Theorems}\label{sec:proof}
We prove Theorem \ref{thm:bias} and \ref{thm:var} in this section.

\subsection{Some background material for the proof}
We collect some lemmas that are needed for the proof in this subsection.
First, the following is the standard geometric expansions that will be used repeatedly in the proofs.

\begin{lemma}[Lemma B.6 in \cite{singer2011}]
In the normal coordinate around $x \in \mathcal{M}$, when $\|v\|_{g} \ll 1, v \in T_x \mathcal{M}$, the Riemannian measure satisfies
\begin{equation*}
\mathrm{d} V\left(\exp _x v\right)=\left(1-\frac{1}{6} \operatorname{Ric}_{k l}(x) v^k v^l+O\left(\|v\|^3\right)\right) \mathrm{d} v^1 \wedge \mathrm{d} v^2 \ldots \wedge \mathrm{d} v^d
\end{equation*}
where $\operatorname{Ric}$ is the Ricci curvature associated with riemannian metric $g$.
\end{lemma}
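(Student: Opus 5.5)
The plan is to compute the Riemannian volume density in normal coordinates from a second-order expansion of the metric coefficients $g_{ij}(v)$ around $v=0$. Writing $\exp_x^*dV=\sqrt{\det g_{ij}(v)}\,dv^1\wedge\cdots\wedge dv^d$, the claim reduces to
\begin{equation*}
g_{ij}(v)=\delta_{ij}-\tfrac13 R_{ikjl}(x)v^kv^l+O(\|v\|^3),
\end{equation*}
followed by an elementary determinant expansion.

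\textbf{Step 1: the metric via Jacobi fields.} Fix $v\in T_x\M$ small and the geodesic $\gamma(t)=\exp_x(tv)$. For $w\in T_x\M$ we have $d(\exp_x)_v(w)=J(1)$, where $J$ is the Jacobi field along $\gamma$ with $J(0)=0$ and $J'(0)=w$. Hence
\begin{equation*}
g_{ij}(v)=\langle J_i(1),J_j(1)\rangle,
\end{equation*}
where $J_i$ is the Jacobi field with $J_i'(0)=E_i$. I would work in a parallel orthonormal frame along $\gamma$, in which the Jacobi equation $J''+R(J,\dot\gamma)\dot\gamma=0$ becomes a linear ODE with smooth coefficients.

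Taylor expanding $J$ in $t$ uses $J(0)=0$, $J'(0)=w$, and $J''(0)=-R(J(0),v)v=0$. Differentiating the Jacobi equation once gives $J'''(0)=-R(w,v)v$. Therefore
\begin{equation*}
J(t)=tw-\tfrac{t^3}{6}R(w,v)v+O(t^4\|v\|^3\|w\|).
\end{equation*}
Setting $t=1$ and pairing gives
\begin{equation*}
g_{ij}(v)=\delta_{ij}-\tfrac16\bigl(\langle R(E_i,v)v,E_j\rangle+\langle R(E_j,v)v,E_i\rangle\bigr)+O(\|v\|^3).
\end{equation*}
By the symmetries of the curvature tensor this equals $\delta_{ij}-\tfrac13 R_{ikjl}v^kv^l+O(\|v\|^3)$, with the index convention chosen so that $R_{ikil}$ summed over $i$ is $\operatorname{Ric}_{kl}$.

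\textbf{Step 2: the determinant.} Write $g=I+H$ with $H=O(\|v\|^2)$. Then
\begin{equation*}
\sqrt{\det(I+H)}=1+\tfrac12\operatorname{tr}H+O(\|H\|^2).
\end{equation*}
Since $\operatorname{tr}H=-\tfrac13\operatorname{Ric}_{kl}v^kv^l+O(\|v\|^3)$ and $O(\|H\|^2)=O(\|v\|^4)$, we obtain $1-\tfrac16\operatorname{Ric}_{kl}(x)v^kv^l+O(\|v\|^3)$, which is the claimed formula.

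\textbf{Main obstacle.} The delicate point is controlling the remainder in the Jacobi-field expansion uniformly. I would handle it by rescaling, writing $v=s\hat v$ with $\|\hat v\|=1$. The coefficients $R(\cdot,\dot\gamma)\dot\gamma$ in the parallel frame are then smooth in $(s,\hat v,t)$ and bounded in $C^1$ by the curvature and its first covariant derivative. A Taylor expansion with integral remainder, or Gr\"onwall applied to the linear ODE, then gives a remainder bounded by $C\|v\|^3$, with $C$ depending on $\|\nabla R\|$ on $\M$. By compactness of $\M$ this constant is uniform in $x$, which is what the later kernel-integral estimates require.
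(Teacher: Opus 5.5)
Your proof is correct. The paper gives no proof of its own and only cites Singer--Wu; your route is the standard one behind that citation: you derive the normal-coordinate metric expansion $g_{ij}(v)=\delta_{ij}-\frac13 R_{ikjl}(x)v^kv^l+O(\|v\|^3)$ from the Jacobi-field Taylor expansion, with the remainder uniformly controlled by $\|R\|$ and $\|\nabla R\|$, and then apply $\sqrt{\det(I+H)}=1+\frac12\operatorname{tr}H+O(\|H\|^2)$.
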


We next relate the ambient Euclidean distance in $\mathbb{R}^p$ to intrinsic distances on $\mathcal{M}$.
\begin{lemma}[\cite{singer2011} Lemma B.7 and \cite{talmon2019} Lemma A.2]
\label{lem:same_metric}
Fix $x \in \mathcal{M}$ and $y=\exp _x(v)$, where $v \in T_x \mathcal{M}$ with $\|v\|_{g} \ll 1$. We have
\begin{equation*}
\iota(y)=\iota(x)+\mathrm{d} \iota(v)+Q_2(v)+Q_3(v)+O\left(\|v\|_{g}^4\right),
\end{equation*}
where $\Pi$ is the second fundamental form of $\iota, Q_2(v):=\frac{1}{2} \Pi(v, v)$ and $Q_3(v):=\frac{2}{6} \nabla_v \Pi(v, v)$.

Further, for $z=\exp _x(u)$, where $u \in T_x \mathcal{M}$ with $\|u\|_{g} \ll 1$, we have
\begin{equation*}
\begin{aligned}
\left\|\iota(z)-\iota(y)\right\|= & \|u-v\|_{g}+\tilde{Q}_2(u,v)+\tilde{Q}_3(u,v)+O\left(\|u\|_{g}^4,\|v\|_{g}^4\right)
\end{aligned}
\end{equation*}
where
\begin{align*}
\tilde{Q}_2(u,v)&=\frac{\left\|Q_2(u)-Q_2(v)\right\|^2}{2\|u-v\|_{g}} \\
\tilde{Q}_3(u,v)&=\frac{\left\langle\mathrm{d} \iota(u-v), Q_3(u)-Q_3(v)\right\rangle}{\|u-v\|_{g}}\,.
\end{align*}
\end{lemma}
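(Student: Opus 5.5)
The plan is to obtain both expansions by Taylor expanding the embedding along geodesics from $x$. Since both $u$ and $v$ are taken in the same tangent space $T_x\mathcal{M}$, the second claim will then follow from the first applied twice.

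\textbf{Step 1 (expansion of $\iota(y)$).} Set $\gamma(t)=\exp_x(tv)$ for $t\in[0,1]$ and expand $c(t):=\iota(\gamma(t))\in\mathbb{R}^p$ to third order at $t=0$:
\[
c(1)=c(0)+c'(0)+\tfrac12 c''(0)+\tfrac16 c'''(0)+O(\|v\|_g^4).
\]
\begin{itemize}
\item The first-order term is $c'(0)=\mathrm{d}\iota(v)$.
\item For the second-order term, write the ambient acceleration with the Gauss formula as $\bar\nabla_{\dot\gamma}\dot\gamma=\nabla_{\dot\gamma}\dot\gamma+\Pi(\dot\gamma,\dot\gamma)$. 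The first term vanishes because $\gamma$ is a geodesic, so $c''(0)=\Pi(v,v)$ and hence $\tfrac12 c''(0)=Q_2(v)$.
\item For the third-order term, differentiate $c''(t)=\Pi(\dot\gamma,\dot\gamma)$ once more. Splitting via the Gauss and Weingarten formulas gives $(\nabla_{\dot\gamma}\Pi)(\dot\gamma,\dot\gamma)$, using again $\nabla_{\dot\gamma}\dot\gamma=0$, plus a tangential shape-operator term $-A_{\Pi(v,v)}v$.
\item Collecting the normal part at order three gives the term written $Q_3(v)$.
\end{itemize}
I would follow the normalization of \cite{singer2011}, Lemma B.7, for the constant in $Q_3$. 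I would also check whether the tangential piece should be kept separately or absorbed; for the distance expansion below it contributes only through its inner product with $\mathrm{d}\iota(u-v)$, so it can be tracked alongside $\tilde Q_3$. The remainder is uniform in $x$ by compactness and smoothness of $\iota$.

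\textbf{Step 2 (difference of two points).} Apply Step 1 to $z=\exp_x(u)$ and $y=\exp_x(v)$ and subtract:
\[
\iota(z)-\iota(y)=\mathrm{d}\iota(u-v)+\bigl(Q_2(u)-Q_2(v)\bigr)+\bigl(Q_3(u)-Q_3(v)\bigr)+O(\|u\|_g^4+\|v\|_g^4).
\]
Now take the squared norm.
\begin{itemize}
\item Since $\iota$ is isometric, $\|\mathrm{d}\iota(u-v)\|^2=\|u-v\|_g^2$.
\item Since $Q_2$ takes values in the normal space at $x$, the cross term $\langle \mathrm{d}\iota(u-v),Q_2(u)-Q_2(v)\rangle$ vanishes.
\item The surviving terms of order up to four are $\|Q_2(u)-Q_2(v)\|^2$ and $2\langle \mathrm{d}\iota(u-v),Q_3(u)-Q_3(v)\rangle$. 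All other terms are of higher order.
\end{itemize}
Taking a square root via $\sqrt{a^2+b}=a+\frac{b}{2a}+O(b^2/a^3)$ with $a=\|u-v\|_g$ produces exactly $\tilde Q_2$ and $\tilde Q_3$.

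\textbf{Main obstacle.} The delicate point is Step 2's square-root expansion. The error $O(b^2/a^3)$ and the division by $\|u-v\|_g$ are only controlled when $\|u-v\|_g$ is not too small compared with $\|u\|_g,\|v\|_g$. I would first handle the regime $\|u-v\|_g\gtrsim\max(\|u\|_g,\|v\|_g)$ directly.

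For the near-diagonal regime, I would use two facts:
\begin{itemize}
\item $Q_2(u)-Q_2(v)=\tfrac12\Pi(u-v,u+v)$ is $O(\|u-v\|\,(\|u\|+\|v\|))$;
\item $Q_3(u)-Q_3(v)$ is similarly divisible by $u-v$.
\end{itemize}
Consequently $\tilde Q_2$ and $\tilde Q_3$ stay bounded by $\|u-v\|_g$ times powers of $\|u\|_g+\|v\|_g$. Applying the same factorization to the fourth-order remainder (via the mean value theorem on the smooth map $(u,v)\mapsto \iota(\exp_x u)-\iota(\exp_x v)$) keeps the error at the stated order uniformly.
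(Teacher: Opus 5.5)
Your route is the standard argument behind the cited results; the paper itself gives no proof and only cites them. That route is a third-order Taylor expansion of $c(t)=\iota(\exp_x(tv))$ via the Gauss and Weingarten formulas, then subtraction of the two expansions based at the same point $x$, then a square-root expansion of the squared chord length. Your Step~2 is sound, including the uniform control near the diagonal by factoring out $u-v$.

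The step that fails as written is the identification at the end of Step~1: ``collecting the normal part at order three gives $Q_3(v)$.'' Suppose $Q_3$ is only the normal piece $\frac16(\nabla^{\perp}_v\Pi)(v,v)$. Then the first display is false, because the discarded term $-\frac16 A_{\Pi(v,v)}v$ is a genuine cubic term. For example, on a round sphere of radius $R$ the normal piece vanishes identically, while $c'''(0)=-\|v\|^2v/R^2$. Moreover, $\tilde Q_3$ would then vanish identically, by exactly the argument you used to kill $\langle \mathrm{d}\iota(u-v),Q_2(u)-Q_2(v)\rangle$. So the tangential piece is not something to be ``tracked alongside'' $\tilde Q_3$; it \emph{is} all of $\tilde Q_3$.

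The fix is to set $Q_3(v):=\frac16 c'''(0)=\frac16\bigl[(\nabla^{\perp}_v\Pi)(v,v)-A_{\Pi(v,v)}v\bigr]$. Equivalently, read $\nabla_v\Pi(v,v)$ as the Euclidean derivative of $t\mapsto \Pi(\dot\gamma(t),\dot\gamma(t))$ at $t=0$. Then the first display holds with an $O(\|v\|_g^4)$ remainder. Using $\langle A_\xi X,Y\rangle=\langle \Pi(X,Y),\xi\rangle$, one gets $\langle \mathrm{d}\iota(u-v),Q_3(u)-Q_3(v)\rangle=-\frac16\bigl[\langle \Pi(u,u-v),\Pi(u,u)\rangle-\langle \Pi(v,u-v),\Pi(v,v)\rangle\bigr]$.

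A sanity check at $v=0$: differentiating $\langle c',c''\rangle=0$ gives $\langle \mathrm{d}\iota(u),Q_3(u)\rangle=-\frac16\|\Pi(u,u)\|^2$. Hence $\tilde Q_2+\tilde Q_3=-\frac{1}{24}\|\Pi(u,u)\|^2/\|u\|_g$, which matches the chord $2R\sin\bigl(\|u\|_g/(2R)\bigr)$ on the sphere. The normal-only reading instead gives $+\frac18\|\Pi(u,u)\|^2/\|u\|_g$, which has the wrong sign.

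Relatedly, there is no normalization of the constant to borrow from a reference. Taylor's theorem forces the factor $\frac16$. The stated formula for $\tilde Q_3$, namely the cross term $2\langle \mathrm{d}\iota(u-v),Q_3(u)-Q_3(v)\rangle$ divided by $2\|u-v\|_g$, is consistent only with that factor. With the $\frac26$ printed in the statement, the same check at $v=0$ would give $-\frac{5}{24}$ instead of $-\frac{1}{24}$. So read the statement with $\frac16$.
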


With above notations, the following lemma states the the asymptotical behavior of the landmark kernel. Please refer to Equation (46) in \cite{shen2022} or Lemma A.6 and Theorem 3.2 \cite{talmon2019} for more detail.
\begin{lemma}[Lemma SM 1.5 in \cite{yeh2024landmark}]\label{lem:refkernel}
Take $F \in C^3(\mathcal{M})$, $0<\gamma<1 / 2$ and $x, y \in \mathcal{M}$ so that $y=\exp _x v$, where $v \in T_x \mathcal{M}$ and $\|v\|_{g} \leq 2 \epsilon^\gamma$. Then, when $\epsilon$ is sufficiently small, the following holds:
\begin{equation*}
\begin{aligned}
& \int_{\mathcal{M}} K_\epsilon(x, z) K_\epsilon\left(z, y\right) F(z) \mathrm{d} V(z) \\
= &\, \epsilon^{d / 2}\big[F(x) A_{0,\epsilon}(v)+\epsilon^{1/2} A_{1,\epsilon}(F, v)+\epsilon A_{2,\epsilon}(F, v)\big]+O(\epsilon^{d/2+3 / 2})\,,
\end{aligned}
\end{equation*}
where
\begin{align*}
A_{0,\epsilon}(v)&\,:=\int_{\mathbb{R}^d} K\left(\left\| w\right\|\right) K(\|w-v / \sqrt{\epsilon}\|) \mathrm{d} w\,,\\
A_{1, \epsilon}(F, v)&\,:=\int_{\mathbb{R}^d} K\left(\left\| w\right\|\right) K(\|w-v / \sqrt{\epsilon}\|) \nabla_w F(x) \mathrm{d} w\,,\\
A_{2, \epsilon}(F, v)&\,:=F(x)B_{21,\epsilon}(v)+F(x)B_{20,\epsilon}(v)+B_{22,\epsilon}(F, v)\nonumber\,,
\end{align*}
where
\begin{align*}
B_{21,\epsilon}\,(v):= &\int_{\mathbb{R}^d}\big[K\big]^{\prime}\left(\left\| w\right\|\right) K(\|w-v / \sqrt{\epsilon}\|) \tilde{Q}_3(v) \mathrm{d} w \\
 &\hspace*{0.5cm}+\int_{\mathbb{R}^d} K\left(\left\| w\right\|\right)\big[K\big]^{\prime}(\|w-v / \sqrt{\epsilon}\|) \tilde{Q}_3(w, v / \sqrt{\epsilon}) \mathrm{d} w \nonumber\\
B_{20,\epsilon}(v):= &\int_{\mathbb{R}^d} K\left(\left\| w\right\|\right) K(\|w-v / \sqrt{\epsilon}\|) \operatorname{Ric}_{i j}(x) w^i w^j \mathrm{~d} w \,,\\
B_{22,\epsilon}(F, v):=&\int_{\mathbb{R}^d} K\left(\left\| w\right\|\right) K(\|w-v / \sqrt{\epsilon}\|) \frac{\nabla_{w, w}^2 F(x)}{2} \mathrm{d} w \,,
\end{align*}
where $\operatorname{Ric}$ is the Ricci curvature and $\Pi$ is the second fundamental associated with $g$. Note that $B_{21,\epsilon}(v)$ depends on the first derivative of kernel functions, $B_{20,\epsilon}(v)$ depends on Ricci curvature, and $B_{22,\epsilon}(F,v)$ depends on the second derivative of $F$. Moreover, $A_{0,\epsilon}(v)=A_{0,\epsilon}(-v)$, $A_{1,\epsilon}(F,v)=-A_{1,\epsilon}(F,-v)$ and $A_{2,\epsilon}(F,v)=A_{2,\epsilon}(F,-v)$.
\end{lemma}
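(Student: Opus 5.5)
The statement immediately above is Lemma \ref{lem:refkernel}, the expansion of the two-kernel integral $\int_{\mathcal{M}} K_\epsilon(x,z)K_\epsilon(z,y)F(z)\,dV(z)$. The plan is to reduce it to a Euclidean integral in normal coordinates at $x$ and Taylor expand every factor up to order $\epsilon$ relative to the leading term.

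\emph{Step 1 (localization).} By the exponential decay in Assumption \ref{asp:2}, restrict the integral to $z$ with $d(x,z)\le \epsilon^{\gamma}$. Because $\gamma<1/2$, the discarded region contributes $O(\epsilon^{\infty})$. Inside the ball, write $z=\exp_x(u)$ and $y=\exp_x(v)$ with $\|v\|\le 2\epsilon^\gamma$, then substitute $u=\sqrt{\epsilon}\,w$. This produces the prefactor $\epsilon^{d/2}$.

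\emph{Step 2 (expansions).} Expand each factor in turn.
\begin{itemize}
\item[(a)] The volume form, using Lemma B.6: $dV=(1-\tfrac16\mathrm{Ric}_{kl}\epsilon w^kw^l+O(\epsilon^{3/2}|w|^3))\,\epsilon^{d/2}dw$.
\item[(b)] The function: $F(\exp_x\sqrt\epsilon w)=F(x)+\sqrt\epsilon\nabla_wF(x)+\tfrac{\epsilon}{2}\nabla^2_{w,w}F(x)+O(\epsilon^{3/2}|w|^3)$.
\item[(c)] The ambient distances, via Lemma \ref{lem:same_metric}. Since $\|Q_2(u)\|^2=O(|u|^4)$, we get $\|\iota(z)-\iota(x)\|/\sqrt\epsilon=\|w\|+\epsilon\,\tilde Q_3$-type terms $+O(\epsilon^{3/2})$. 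The distance $\|\iota(z)-\iota(y)\|/\sqrt\epsilon$ is treated the same way, with $v/\sqrt\epsilon$ in place of $0$.
\item[(d)] The kernels: a first-order Taylor expansion of $K$ turns each kernel into $K(\|w\|)+K'(\|w\|)\cdot(\text{correction})$, and likewise for the second kernel.
\end{itemize}

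\emph{Step 3 (collecting terms).} Multiply the expansions and sort by powers of $\sqrt\epsilon$.
\begin{itemize}
\item[(i)] The order-one term gives $F(x)A_{0,\epsilon}(v)$.
\item[(ii)] The order-$\sqrt\epsilon$ term gives $A_{1,\epsilon}$.
\item[(iii)] The order-$\epsilon$ terms give $B_{20}$ (Ricci, with the constant absorbed), $B_{21}$ (kernel derivatives times $\tilde Q_3$) and $B_{22}$ (Hessian of $F$).
\end{itemize}
Remainders are integrable against the Gaussian-type decay, so they stay $O(\epsilon^{d/2+3/2})$ uniformly in $v$.

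\emph{Step 4 (symmetries).} Change variables $w\mapsto v/\sqrt\epsilon-w$, or $w\mapsto -w$ combined with $v\mapsto -v$, and use that $\nabla_wF$ is odd in $w$ while the quadratic forms are even. This gives $A_0(v)=A_0(-v)$, $A_1(F,-v)=-A_1(F,v)$ and $A_2(F,-v)=A_2(F,v)$.

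\emph{Main obstacle.} The delicate part is uniformity in $v$ when $\|v\|/\sqrt\epsilon$ is large, up to $2\epsilon^{\gamma-1/2}$. The remainders must be bounded by $|w|^3+|v/\sqrt\epsilon|^3$ times the product of the kernels. I would handle this by using $K(\|w\|)K(\|w-v/\sqrt\epsilon\|)\le c\,e^{-c'(|w|^2+|v|^2/(4\epsilon))}$ to absorb the polynomial growth in $v/\sqrt\epsilon$. The singular denominator $\|u-v\|$ in $\tilde Q_2,\tilde Q_3$ also needs checking: it is harmless because $\tilde Q_2=O(|u|^3+|v|^3)$ after using $Q_2(u)-Q_2(v)=O((|u|+|v|)|u-v|)$.
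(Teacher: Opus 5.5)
Your proposal is correct and follows the same approach the paper relies on. The paper gives no proof of its own; it imports the lemma from \cite{yeh2024landmark}, whose argument (following Lemma A.6 of \cite{talmon2019} and \cite{shen2022}) is the same localization, normal-coordinate rescaling, expansion of the volume form, $F$ and ambient distances via Lemma~\ref{lem:same_metric}, and collection of powers of $\sqrt{\epsilon}$. One bookkeeping point: carried out honestly, your computation puts $-\tfrac16\operatorname{Ric}_{ij}(x)w^iw^j$ into $B_{20,\epsilon}$ and adds a $\tilde Q_2$ contribution to $B_{21,\epsilon}$ at the same order as $\tilde Q_3$, since $\tilde Q_2$ is cubic and survives at order $\epsilon$ after rescaling. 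These discrepancies with the stated formulas are slips in the statement, not constants you can ``absorb''.
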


We remark that if the chosen kernels are both Gaussian, the leading term of effective kernel defined in Proposition \ref{prop:eff} is Gaussian. Specifically, when kernel $K$ is normalized Gaussian ($\mu_{1,0,1}=1$), that is, $K(t)=e^{-t^2} / \sqrt{\pi}$, we have
\begin{equation*}
A_0(v)=(\pi/2)^{d/2} e^{-\frac{\left\|v\right\|^2 }{ 2\epsilon}},
\end{equation*}
which satisfies the exponential decay property of the kernel functions.

Finally, we state the truncation lemma \cite{coifman2006,singer2015} in bundle version. For convenience, denote
\begin{equation*}
\tilde{B}_h(x):=\exp _x\left(B_h\right),
\end{equation*}
where $B_h=\left\{u \in T_x \mathcal{M} \mid\|u\|_{g} \leq h\right\} \subset T_x \mathcal{M}$ is a $d$-dim disk with the center 0 and the radius $h>0$.
\begin{lemma}[Lemma B.1. in \cite{singer2015}]\label{lem:trunc}
Suppose $X \in$ $L^{\infty}(\mathcal{E})$ and $0<\gamma<1 / 2$. Then, when $\epsilon$ is small enough, for all $x \in \mathrm{M}$ the following holds:
\begin{equation*}
\left|\int_{\mathcal{M} \backslash \widetilde{B}_{\epsilon }^{\gamma}(x)} \epsilon^{-d / 2} K_\epsilon(x, y)\parallelslant_y^x X(y) \mathrm{d} V(y)\right|=O\left(\epsilon^2\right),
\end{equation*}
where $O\left(\epsilon^2\right)$ depends on $\|X\|_{L^{\infty}}$.
\end{lemma}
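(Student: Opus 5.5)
The statement immediately above is Lemma \ref{lem:trunc}, the bundle version of the truncation lemma. The plan is to reduce it to a scalar tail estimate: parallel transport is an isometry, and the kernel is exponentially small outside an $\epsilon^\gamma$-ball.

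First, bound the integrand pointwise. Since $\nabla^{\mathcal{E}}$ is metric compatible, $\parallelslant_y^x$ is an isometry $\mathcal{E}_y\to\mathcal{E}_x$ along the minimizing geodesic. By the convention in Section \ref{section: necessary diff geo background}, it is zero on the cut locus. Hence $\|\parallelslant_y^x X(y)\|_{g^{\mathcal{E}}}\le \|X\|_{L^\infty}$ for all $y$. Moving the norm inside the integral gives
\[
\Bigl|\int_{\mathcal{M}\setminus\tilde B_{\epsilon^\gamma}(x)}\epsilon^{-d/2}K_\epsilon(x,y)\parallelslant_y^x X(y)\,dV(y)\Bigr|\le \|X\|_{L^\infty}\,\epsilon^{-d/2}\int_{\mathcal{M}\setminus\tilde B_{\epsilon^\gamma}(x)}K_\epsilon(x,y)\,dV(y),
\]
which reduces the problem to a scalar one.

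Next, compare ambient and geodesic distance uniformly. Since $\mathcal{M}$ is compact and $\iota$ is a smooth embedding, there is a constant $c>0$, depending only on $\iota$, with $\|\iota(x)-\iota(y)\|\ge c\,\min\{d_g(x,y),1\}$. This holds near the diagonal by Lemma \ref{lem:same_metric}, and away from it by compactness and injectivity. For $y\notin\tilde B_{\epsilon^\gamma}(x)$ we have $d_g(x,y)\ge\epsilon^\gamma$ (for small $\epsilon$, below the injectivity radius). Assumption \ref{asp:2} then gives
\[
K_\epsilon(x,y)\le c_1\exp\bigl(-c_2c^2\min\{\epsilon^{2\gamma},1\}/\epsilon\bigr)=c_1e^{-c'\epsilon^{2\gamma-1}}.
\]
Multiplying by $\mathrm{vol}(\mathcal{M})\epsilon^{-d/2}$, the bound is $C\epsilon^{-d/2}e^{-c'\epsilon^{2\gamma-1}}$. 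Because $2\gamma-1<0$, this decays faster than any power of $\epsilon$, so it is $O(\epsilon^2)$. All constants are uniform in $x$, and the implied constant depends only on $\|X\|_{L^\infty}$ and the geometry.

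The only delicate step is the uniform lower bound relating $\|\iota(x)-\iota(y)\|$ to $d_g$. The expansion in Lemma \ref{lem:same_metric} is local, so I will argue by contradiction and compactness. Suppose sequences $x_k,y_k$ satisfy $\|\iota(x_k)-\iota(y_k)\|/\min\{d_g(x_k,y_k),1\}\to0$. If $d_g(x_k,y_k)$ stays bounded below, extracting a convergent subsequence contradicts injectivity of $\iota$. If $d_g(x_k,y_k)\to 0$, the ratio tends to $1$ by the local expansion, which is also a contradiction.
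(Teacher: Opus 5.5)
Your argument is correct and is essentially the standard proof behind the cited Singer--Wu lemma; the paper itself gives no proof, only the citation. The idea in both is that metric compatibility makes $\parallelslant_y^x$ a fiber isometry (or zero on the cut locus), which reduces the claim to the scalar kernel tail, bounded uniformly in $x$ by $C\epsilon^{-d/2}e^{-c'\epsilon^{2\gamma-1}}=O(\epsilon^2)$; your compactness argument for $\|\iota(x)-\iota(y)\|\ge c\,d_g(x,y)$ supplies a step the original leaves implicit.
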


In addition, to approximate the parallel transport on the bundle, we recall the following lemma, where the parallel transport is derived from the connection on the tangent bundle. To distinguish it from the parallel transport generated by the connection in the principal bundle defined earlier, we denote parallel transport derived from the connection on the tangent bundle from $y$ to $x$ as $P_y^x$. More information can be found in \cite[Theorem 3]{pennec2019curvature}.

\begin{lemma}\cite[Equation (3)]{gavrilov2007double}\label{lem:double_exp}
Let $(\mathcal M,g)$ be a Riemannian manifold and fix $x\in\mathcal M$.
Let $u,v\in T_x\mathcal M$ be sufficiently small and set
\[
z:=\exp_x(u),\qquad y:=\exp_x(v).
\]
Let $w\in T_z\mathcal M$ be the unique vector (for $\|u\|,\|v\|$ small) such that
\[
y=\exp_z(w).
\]
Then the parallel transport of $w$ from $z$ back to $x$ along the (minimizing) geodesic
$\gamma(t):=\exp_x(tv)$ satisfies
\[
P_z^x w
=
v-u+\frac{1}{6}R(u,v)(u-2v)
+
\mathcal{O}\bigl((\|u\|+\|v\|)^4\bigr),
\]
where $R$ is the curvature tensor (with the convention used in \cite{gavrilov2007double}),
and the implied constant depends on bounds for the first covariant derivative of $R$
in a neighborhood of $x$.
\end{lemma}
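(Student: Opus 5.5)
The plan is to work entirely in Riemannian normal coordinates centred at $x$ and to Taylor-expand two ODEs: the geodesic boundary-value problem from $z$ to $y$, and the parallel-transport equation from $z$ back to $x$. Each contributes a cubic curvature correction, and these are then summed. I read the transport path as the radial geodesic $t\mapsto\exp_x(tu)$ joining $x$ and $z$, since $z=\exp_x(u)$ is where $w$ lives. The key input is the standard expansion in normal coordinates $x^i$,
\begin{equation*}
g_{ij}=\delta_{ij}-\tfrac13 R_{ikjl}x^kx^l+\mathcal{O}(|x|^3),\qquad
\Gamma^k_{ij}(x)=-\tfrac13\bigl(R_{kijl}+R_{kjil}\bigr)x^l+\mathcal{O}(|x|^2).
\end{equation*}
The $\mathcal{O}(|x|^2)$ remainder in $\Gamma$ is controlled by $\nabla R$ near $x$, which is the source of the stated dependence of the implied constant. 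In these coordinates $z$ and $y$ have coordinates $u$ and $v$.

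\textbf{Step 1: the initial velocity $w$.} Let $c(t)$, $t\in[0,1]$, be the geodesic with $c(0)=u$ and $c(1)=v$, so that in coordinates $w=\dot c(0)$. Write $c(t)=u+t(v-u)+r(t)$ with $r(0)=r(1)=0$. The geodesic equation is $\ddot r^k=-\Gamma^k_{ij}(c)\dot c^i\dot c^j$. Its right-hand side is cubic in $(u,v)$ to leading order: $\Gamma$ is linear in $c\approx u+t(v-u)$, and $\dot c\approx v-u$. Solving the Dirichlet problem with the Green's function of $d^2/dt^2$ on $[0,1]$ gives
\begin{equation*}
\dot r(0)=-\int_0^1(1-t)\,\ddot r(t)\,dt,
\end{equation*}
which is an explicit cubic expression plus $\mathcal{O}((\|u\|+\|v\|)^4)$. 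An a priori bound $|r|,|\dot r|=\mathcal{O}((\|u\|+\|v\|)^3)$, obtained by a Gr\"onwall or fixed-point argument, justifies discarding the higher-order terms.

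\textbf{Step 2: transport to $x$.} Along $\sigma(s)=(1-s)u$, running from $z$ to $x$, the transported field $W(s)$ solves $\dot W^k=-\Gamma^k_{ij}(\sigma)\dot\sigma^iW^j$ with $W(0)=w$. Since $\Gamma(\sigma)$ is linear in $(1-s)u$ and $\dot\sigma=-u$, integrating once gives
\begin{equation*}
P_z^xw=W(1)=w-\tfrac12\,\Gamma_{\text{lin}}(u)(u,\,v-u)+\mathcal{O}(4),
\end{equation*}
where $\Gamma_{\text{lin}}$ denotes the linear part of $\Gamma$ and the argument $v-u$ comes from replacing $w$ by its leading term. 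At $x$ the coordinate frame is orthonormal, so this coordinate vector is the tangent vector in $T_x\mathcal M$.

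\textbf{Step 3: simplify.} Add the cubic terms from Steps 1 and 2 and simplify them using the antisymmetries of $R$ and the first Bianchi identity, aiming to reach $\tfrac16R(u,v)(u-2v)$. Two sanity checks help here. If $u\parallel v$ the correction must vanish, and $R(u,v)=0$ then. If $u=0$ the formula must reduce to $w=v$, which is consistent because $R(0,v)=0$.

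The main obstacle I anticipate is Step 3 together with the integrals in Step 1. The symmetrized Christoffel terms produce several index contractions of the form $R_{kijl}a^ib^jc^l$ with $a,b,c\in\{u,\,v-u\}$ and time-dependent weights. Getting the coefficients $\tfrac16$ and $-\tfrac13$ requires tracking signs consistently with the curvature convention of \cite{gavrilov2007double}. I would first verify the result on the round $\mathbb{S}^2$, for instance against Example~\ref{Example: parallel transport S^2}, to fix the sign convention before doing the general algebra.
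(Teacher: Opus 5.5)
Your route differs from the paper's. The paper never works in coordinates. It takes Gavrilov's double-exponential expansion
\[
\exp_x^{-1}\bigl(\exp_{\exp_x(a)}(P_x^{\exp_x(a)}b)\bigr)=a+b+\tfrac16R(b,a)a+\tfrac13R(b,a)b+\mathcal{O}\bigl((\|a\|+\|b\|)^4\bigr)
\]
as a black box, sets $a=u$ and $b=P_z^xw$ (so the left side equals $v$), and inverts this implicit relation perturbatively. Deriving the two cubic contributions directly from the normal-coordinate Christoffel symbols is a legitimate, self-contained alternative. It also shows where the $\nabla R$-dependence of the remainder comes from. Your reading of the transport path as $t\mapsto\exp_x(tu)$ is correct: the paper's proof uses exactly this, and $\exp_x(tv)$ in the statement is a typo. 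As written, however, Step 2 has a sign error that would make Step 3 fail.

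Work in the convention $R(X,Y)=\nabla_X\nabla_Y-\nabla_Y\nabla_X-\nabla_{[X,Y]}$. There your expansion reads, invariantly, $\Gamma(p)(a,b)=-\tfrac13\bigl(R(a,p)b+R(b,p)a\bigr)+\mathcal{O}(|p|^2)$.

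\textbf{Step 1 is fine.} It gives $\ddot r=\tfrac23R(v,u)(v-u)+\mathcal{O}(4)$, which is constant in $t$. Hence $w=v-u-\tfrac13R(v,u)(v-u)+\mathcal{O}(4)$.

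\textbf{Step 2 has the wrong sign.} Since $\dot\sigma=-u$, the transport equation is, at leading order, $\dot W=+(1-s)\,\Gamma_{\mathrm{lin}}(u)(u,W)$. Hence
\[
P_z^xw=W(1)=w+\tfrac12\,\Gamma_{\mathrm{lin}}(u)(u,w)+\mathcal{O}(4)=w-\tfrac16R(v,u)u+\mathcal{O}(4).
\]
Your expression $w-\tfrac12\Gamma_{\mathrm{lin}}(u)(u,v-u)$ instead equals $w+\tfrac16R(v,u)u$. With your sign the cubic terms total $\tfrac16R(v,u)(3u-2v)$. No choice of curvature convention turns this into $\pm\tfrac16R(u,v)(u-2v)$, because $R(v,u)u$ and $R(v,u)v$ are independent in general. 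With the corrected sign they total $\tfrac16R(v,u)(u-2v)$.

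\textbf{Your sanity checks cannot catch this.} The cases $u\parallel v$ and $u=0$ kill every curvature term, so the $\mathbb{S}^2$ test must use $u\perp v$. For example, take $x$ the north pole, $u=\delta e_1$, $v=\delta e_2$, and $R(X,Y)Z=\langle Y,Z\rangle X-\langle X,Z\rangle Y$. One computes exactly $P_z^xw=\frac{\theta}{\sin\theta}(-\sin\delta\cos\delta,\ \sin\delta)$ with $\cos\theta=\cos^2\delta$, i.e.\ $v-u+\tfrac{\delta^2}{6}(2u+v)+\mathcal{O}(\delta^5)$. Using $R(v,u)u=\delta^2v$ and $R(v,u)v=-\delta^2u$, this agrees with $\tfrac16R(v,u)(u-2v)$ and not with $\tfrac16R(v,u)(3u-2v)$.

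\textbf{Do not try to force the stated form in Step 3.} In the convention above, the correct cubic term is $\tfrac16R(v,u)(u-2v)=-\tfrac16R(u,v)(u-2v)$. This is also the convention in which the expansion the paper quotes from Gavrilov holds: inverting $\tfrac16R(v,u)(u-2v)$ reproduces it. The statement's $\tfrac16R(u,v)(u-2v)$ is correct only for the opposite sign convention for $R$ (e.g.\ do Carmo's). The paper's proof likewise arrives at $\tfrac16R(v,u)(u-2v)$ and then swaps the arguments by appeal to convention. This is immaterial for the paper, since Lemma~\ref{lem:double_para} only uses $P_z^xw=v-u+\mathcal{O}\bigl((\|u\|+\|v\|)^3\bigr)$.
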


\begin{proof}
We follow the argument of \cite[Theorem~3]{pennec2019curvature}, starting from the two-step exponential expansion
\cite[Eq.\ (3)]{gavrilov2007double}. For $a,b\in T_x\mathcal M$ sufficiently small one has
\begin{equation}\label{eq:double-exp}
\exp^{-1}_x\!\Bigl(\exp_{\exp_x(a)}\bigl(P_x^{\exp_x(a)} b\bigr)\Bigr)
=
a+b+\frac{1}{6}R(b,a)a+\frac{1}{3}R(b,a)b
+
\mathcal{O}\bigl((\|a\|+\|b\|)^4\bigr),
\end{equation}
where $P_x^{\exp_x(a)}:T_x\mathcal M\to T_{\exp_x(a)}\mathcal M$ denotes parallel transport along the radial
geodesic $t\mapsto \exp_x(ta)$.

We apply \eqref{eq:double-exp} with $a=u$ and choose $b\in T_x\mathcal M$ so that the left-hand side equals $v$.
Let $z=\exp_x(u)$ and $y=\exp_x(v)$, and define
\[
b:=P_{z}^{x}\,\log_{z}(y)=P_z^x w,
\]
where $w:=\log_z(y)\in T_z\mathcal M
\text{ so that }y=\exp_z(w)$. Then $P_x^{z}b=w$ and hence $\exp_{\exp_x(u)}(P_x^{\exp_x(u)}b)=\exp_z(w)=y$, so indeed the left-hand side of
\eqref{eq:double-exp} is $\log_x(y)=v$. Therefore \eqref{eq:double-exp} yields the implicit relation
\begin{equation}\label{eq:implicit-b}
v
=
u+b+\frac{1}{6}R(b,u)u+\frac{1}{3}R(b,u)b
+
\mathcal{O}\bigl((\|u\|+\|b\|)^4\bigr).
\end{equation}
In particular, ignoring the curvature terms shows $b=v-u+\mathcal{O}((\|u\|+\|v\|)^2)$, so $\|b\|=O(\|u\|+\|v\|)$
and the remainder in \eqref{eq:implicit-b} may be written as $\mathcal{O}((\|u\|+\|v\|)^4)$.

Write $b=(v-u)+r_3$, where $r_3=\mathcal{O}((\|u\|+\|v\|)^3)$. Substituting into \eqref{eq:implicit-b} and canceling
the linear terms $u+(v-u)$ against $v$, we obtain
\begin{equation}\label{eq:r3-eq}
0
=
r_3+\frac{1}{6}R(b,u)u+\frac{1}{3}R(b,u)b
+\mathcal{O}\bigl((\|u\|+\|v\|)^4\bigr).
\end{equation}
Since $r_3$ is already cubic, replacing $b$ by $(v-u)$ inside the curvature terms changes the right-hand side by
$\mathcal{O}((\|u\|+\|v\|)^4)$; hence \eqref{eq:r3-eq} implies
\[
r_3
=
-\frac{1}{6}R(v-u,u)u-\frac{1}{3}R(v-u,u)(v-u)
+\mathcal{O}\bigl((\|u\|+\|v\|)^4\bigr).
\]
Using bilinearity in the first two slots and $R(u,u)=0$, we have $R(v-u,u)=R(v,u)$, and therefore
\[
r_3
=
-\frac{1}{6}R(v,u)u-\frac{1}{3}R(v,u)(v-u)
=
\frac{1}{6}R(v,u)u-\frac{1}{3}R(v,u)v
+\mathcal{O}\bigl((\|u\|+\|v\|)^4\bigr).
\]
Consequently,
\[
b
=
v-u+r_3
=
v-u+\frac{1}{6}R(v,u)u-\frac{1}{3}R(v,u)v
+\mathcal{O}\bigl((\|u\|+\|v\|)^4\bigr).
\]
Recalling that $b=P_z^x w$, this gives the claimed expansion. Finally, the cubic term can be rewritten as
$\frac{1}{6}R(u,v)(u-2v)$ by the curvature convention adopted in \cite{gavrilov2007double}. Therefore, the desired results $P_z^xw=\ell(u,v)=v-u+r_3(u,v)$ follows.
\end{proof}

\subsection{Proof of Thoerem \ref{thm:bias}}

First, we approximate the parallel transport in a sufficiently small neighborhood. We restate
Lemma~\ref{lem:double_para} for convenience and include a self-contained proof.

\begin{lemma}
Let $X\in C^{3}(\E)$ be a $C^{3}$ section of the vector bundle $\E\to\M$ endowed with a metric connection,
and let $x,y,z\in\M$ be such that $x,y,z$ lie in each other's $\sqrt{\epsilon}$-neighborhoods. Then
\[
\parallelslant_{z}^{x}\,\parallelslant_{y}^{z}X(y)
=
\parallelslant_{y}^{x}X(y)
+\mathcal{O}(\epsilon^{3/2}),
\]
where the implied constant depends only on uniform bounds for the curvature 
of the underlying connection and on the $C^{3}$-norm of $X$ on the relevant neighborhood.
\end{lemma}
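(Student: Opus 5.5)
The plan is to work in a synchronous (radially parallel) gauge centered at $x$ and reduce the comparison to an estimate on the connection one-form along short geodesic segments.

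\textbf{Gauge and coordinates.} Write $z=\exp_x(u)$ and $y=\exp_x(v)$ with $\|u\|_g,\|v\|_g\le\sqrt{\epsilon}$; note $\|u-v\|\lesssim\sqrt\epsilon$ as well. Trivialize $\E$ near $x$ by a frame $\{e_a\}$ obtained by parallel transporting an orthonormal frame of $\E_x$ along the radial geodesics $t\mapsto\exp_x(tw)$. In this gauge:
\begin{itemize}
\item the connection one-form vanishes in the radial direction;
\item by the standard Taylor expansion of a radially parallel gauge, $A_w(\xi)=\tfrac12F_x(w,\xi)+\mathcal O(\|w\|^2\|\xi\|)$ for $\xi$ tangent at $\exp_x(w)$, where $F$ is the curvature of $\nabla^{\E}$;
\item $\parallelslant_y^x$ and $\parallelslant_z^x$ are both the identity in coordinates, since they run along radial geodesics.
\end{itemize}
Hence the left-hand side equals $\parallelslant_y^z X(y)$ read in the $z$-frame, and the right-hand side equals $X(y)$ read in the $y$-frame. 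The whole claim therefore reduces to comparing the matrix of $\parallelslant_y^z$ in this gauge with the identity.

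\textbf{Expansion of the middle transport.} The transport along the geodesic $\sigma$ from $y$ to $z$ solves $\dot U=-A(\dot\sigma)U$. Its matrix is therefore $I-\int_\sigma A+\mathcal O\big((\int_\sigma|A|)^2\big)$. Using Lemma~\ref{lem:double_exp}, the geodesic $\sigma$ in normal coordinates is the straight segment from $v$ to $u$ up to a cubic correction $\mathcal O(\epsilon^{3/2})$. Integrating the expansion of $A$ along this segment gives
\[
\int_\sigma A=\tfrac12F_x(v,u-v)+\mathcal O(\epsilon^{3/2})=\tfrac12F_x(v,u)+\mathcal O(\epsilon^{3/2}).
\]
The quadratic remainder is of size $\mathcal O(\epsilon^2)$.

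\textbf{Main obstacle.} At this point the composite transport differs from the direct one by the holonomy term $-\tfrac12F_x(v,u)X(y)$. This term is bilinear in $u$ and $v$, hence \emph{a priori} of order $\epsilon$, not $\epsilon^{3/2}$. The decisive step is showing that it does not contribute at the stated order.
\begin{itemize}
\item \emph{First attempt.} Exploit the antisymmetry of $F_x$ together with the hypothesis that $x,y,z$ lie in each other's $\sqrt\epsilon$-neighborhoods. Re-center the estimate at whichever vertex makes the configuration degenerate, and check whether the enclosed area of the geodesic triangle is actually $\mathcal O(\epsilon^{3/2})$ in the regime used later in the paper.
\item \emph{If that fails.} For a genuinely non-degenerate triangle the area is of order $\epsilon$. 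In that case the honest pointwise statement carries the explicit term $-\tfrac12F(u,v)X(y)$ plus $\mathcal O(\epsilon^{3/2})$. The stated bound would then require either flatness of $\nabla^{\E}$ on the neighborhood, or that the bound be read after integration against the symmetric kernel in $S_{\epsilon,\beta}$, where the odd-in-$(u,v)$ curvature term averages out.
\end{itemize}
I expect this step to be where the argument either closes or must be sharpened. The remaining bookkeeping consists of controlling the $\mathcal O(\epsilon^{3/2})$ remainders by the bounds on $\nabla F$ and the $C^3$ norm of $X$.
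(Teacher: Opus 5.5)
The step you flag as the main obstacle cannot be closed, because the statement is false whenever $\nabla^{\E}$ is not flat. Your computation is right, and your first attempt must fail. The discrepancy is controlled by the holonomy of the geodesic triangle,
$\parallelslant_z^x\parallelslant_y^z\parallelslant_x^y=I+\tfrac12R^{\E}_x(u,v)+\mathcal{O}(\epsilon^{3/2})$, where $R^{\E}(a,b)=\nabla^{\E\,2}_{a,b}-\nabla^{\E\,2}_{b,a}$. This is your $-\tfrac12F_x(v,u)$.

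Re-centering cannot help. Holonomies based at different vertices are conjugate by parallel transport, and parallel transport is an isometry for a metric connection, so the size of the holonomy does not depend on the base vertex. That size is curvature times the area $\tfrac12\|u\wedge v\|$. For any non-degenerate triangle with sides $\lesssim\sqrt\epsilon$ the area is $\asymp\epsilon$, and such triangles are exactly the configurations swept out by the landmarks $z$ in $S_{\epsilon,\beta}$.

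A concrete counterexample is the paper's own $S^2$ Example, shrunk. Take the unit sphere with its Levi-Civita connection and an equilateral geodesic triangle of side $\tfrac12\sqrt\epsilon$. By Gauss--Bonnet, the holonomy is a rotation by the triangle's area, about $\tfrac{\sqrt3}{16}\epsilon$. Hence $\|\parallelslant_z^x\parallelslant_y^zX(y)-\parallelslant_y^xX(y)\|\approx\tfrac{\sqrt3}{16}\epsilon\|X(y)\|$, which is not $\mathcal{O}(\epsilon^{3/2})$.

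The paper's proof reaches $\mathcal{O}(\epsilon^{3/2})$ only through an error at exactly this term. After expanding both legs, it obtains the quadratic part $\tfrac12\nabla^{\E\,2}_{u,u}X+\nabla^{\E\,2}_{u,\widetilde w}X+\tfrac12\nabla^{\E\,2}_{\widetilde w,\widetilde w}X$ with $\widetilde w=v-u+\mathcal{O}(\epsilon^{3/2})$. It then asserts by ``bilinearity'' that this equals $\tfrac12\nabla^{\E\,2}_{v,v}X(x)$. Bilinearity actually gives $\tfrac12\nabla^{\E\,2}_{v,v}X(x)+\tfrac12R^{\E}(u,v)X(x)$; the step tacitly treats $\nabla^{\E\,2}$ as symmetric. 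Once corrected, the paper's computation and your radial-gauge computation agree on the true pointwise statement $\parallelslant_z^x\parallelslant_y^zX(y)=\parallelslant_y^xX(y)+\tfrac12R^{\E}_x(u,v)X(x)+\mathcal{O}(\epsilon^{3/2})$.

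So you should commit to your second option. What Proposition~\ref{prop:eff} and Theorem~\ref{thm:bias} actually need is the version averaged over $z$. Make its mechanism precise, though. The curvature term is bilinear, hence \emph{even} under $(u,v)\mapsto(-u,-v)$, so "odd in $(u,v)$" is not the reason it disappears. It averages out because, for fixed $v$, the weight $K(\|u\|)K(\|u-v\|)$ is invariant under the midpoint reflection $u\mapsto v-u$, while $R^{\E}_x(v-u,v)=-R^{\E}_x(u,v)$.

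The factor $d_{\Z}^{-\beta}p_{\Z}$ breaks this symmetry at relative order $\sqrt\epsilon$. The ambient-versus-geodesic distance and volume corrections break it only at relative order $\epsilon$. The averaged discrepancy is therefore $\mathcal{O}(\epsilon^{3/2})$ relative to the kernel mass, which suffices for the bias computation.

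The variance proof has a related problem: it applies the lemma landmark by landmark, through $\Omega_{ik}\Omega_{jk}^\top=\Omega_{ij}+\mathcal{O}(\epsilon^{3/2})$. There one must instead control how the individual $\mathcal{O}(\epsilon)$ holonomy terms fluctuate about this average.
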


\begin{proof}
Work in a normal coordinate neighborhood about $x$ small enough so that all points under consideration are joined
by unique minimizing geodesics. Write $z=\exp_x(u)$ and $y=\exp_x(v)$ with $u,v\in T_x\M$ and
$\|u\|_g,\|v\|_g=O(\sqrt{\epsilon})$. Let $w\in T_z\M$ be defined by $y=\exp_z(w)$, so $\|w\|_g=O(\sqrt{\epsilon})$
as well. We compare the two ways of transporting $X(y)\in \E_y$ to the fiber $\E_x$.

Let $\gamma(t):=\exp_x(tv)$, $t\in[0,1]$, be the geodesic from $x$ to $y$. Parallel transport $w$ from $z$ to $x$
along the geodesic from $z$ to $x$ (equivalently, apply $P_z^x$ on $T\M$) and denote
\[
\widetilde w(x):=P_z^x w\in T_x\M.
\]
By Lemma~\ref{lem:double_exp} (applied on the tangent bundle), we have the third-order expansion
\begin{equation}\label{eq:w-approx}
\widetilde w(x)=v-u+\mathcal{O}\bigl((\|u\|_g+\|v\|_g)^3\bigr)
=
v-u+\mathcal{O}(\epsilon^{3/2}),
\end{equation}
where the implied constant depends on curvature bounds.

Next we use covariant Taylor expansions for sections. Since $X\in C^3(\E)$, the section
$Z\mapsto \parallelslant_Z^{z}X(Z)\in \E_z$ admits a second-order Taylor expansion at $z$ along the geodesic
$s\mapsto \exp_z(sw)$:
\begin{equation}\label{eq:taylor-at-z}
\parallelslant_{y}^{z}X(y)
=
X(z)+\nabla^{\E}_{w}X(z)+\frac12\,\nabla^{\E\,2}_{w,w}X(z)+\mathcal{O}(\|w\|_g^{3}).
\end{equation}
Applying $\parallelslant_z^x$ and using that parallel transport commutes with the covariant derivative along a
geodesic (equivalently, $\parallelslant_z^x(\nabla^{\E}_{w}X(z))=\nabla^{\E}_{P_z^x w}\bigl(\parallelslant_z^x X(z)\bigr)$
and similarly at second order), we obtain
\begin{equation}\label{eq:transport-expand}
\parallelslant_z^x\parallelslant_y^zX(y)
=
\parallelslant_z^xX(z)
+\nabla^{\E}_{\widetilde w(x)}\bigl(\parallelslant_z^xX(z)\bigr)
+\frac12\,\nabla^{\E\,2}_{\widetilde w(x),\widetilde w(x)}\bigl(\parallelslant_z^xX(z)\bigr)
+\mathcal{O}(\|w\|_g^{3}).
\end{equation}
Now expand the intermediate term $\parallelslant_z^xX(z)$ at $x$ along the geodesic $t\mapsto \exp_x(tu)$:
\begin{equation}\label{eq:taylor-at-x}
\parallelslant_z^xX(z)
=
X(x)+\nabla^{\E}_{u}X(x)+\frac12\,\nabla^{\E\,2}_{u,u}X(x)+\mathcal{O}(\|u\|_g^{3}).
\end{equation}
Substituting \eqref{eq:taylor-at-x} into \eqref{eq:transport-expand} and keeping all terms up to second order in
$u$ and $\widetilde w(x)$ yields
\begin{align*}
\parallelslant_z^x\parallelslant_y^zX(y)
&=
X(x)
+\nabla^{\E}_{u}X(x)
+\nabla^{\E}_{\widetilde w(x)}X(x)
+\frac12\,\nabla^{\E\,2}_{u,u}X(x)
+\nabla^{\E\,2}_{u,\widetilde w(x)}X(x)
+\frac12\,\nabla^{\E\,2}_{\widetilde w(x),\widetilde w(x)}X(x)  \\
&\qquad\qquad
+\mathcal{O}\!\left(\|u\|_g^{3}+\|\widetilde w(x)\|_g^{3}+\|w\|_g^{3}\right).
\end{align*}
Using \eqref{eq:w-approx} and the bilinearity of $\nabla^{\E\,2}$ in its directional arguments,
\[
u+\widetilde w(x)=u+(v-u)+\mathcal{O}(\epsilon^{3/2})=v+\mathcal{O}(\epsilon^{3/2}),
\]
and therefore the quadratic part above is exactly $\frac12\,\nabla^{\E\,2}_{v,v}X(x)$ up to an error
$\mathcal{O}(\epsilon^{3/2})$, while the linear part is $\nabla^{\E}_{v}X(x)$ up to the same order.
Since $\|u\|_g,\|v\|_g,\|w\|_g=O(\sqrt{\epsilon})$, the remainder term is $\mathcal{O}(\epsilon^{3/2})$.
Consequently,
\begin{equation}\label{eq:two-step-expansion}
\parallelslant_z^x\parallelslant_y^zX(y)
=
X(x)+\nabla^{\E}_{v}X(x)+\frac12\,\nabla^{\E\,2}_{v,v}X(x)+\mathcal{O}(\epsilon^{3/2}).
\end{equation}
On the other hand, applying the same covariant Taylor expansion directly along the geodesic $\gamma(t)=\exp_x(tv)$ gives
\[
\parallelslant_y^xX(y)
=
X(x)+\nabla^{\E}_{v}X(x)+\frac12\,\nabla^{\E\,2}_{v,v}X(x)+\mathcal{O}(\|v\|_g^{3})
=
X(x)+\nabla^{\E}_{v}X(x)+\frac12\,\nabla^{\E\,2}_{v,v}X(x)+\mathcal{O}(\epsilon^{3/2}).
\]
Comparing this with \eqref{eq:two-step-expansion} proves the claim.
\end{proof}

The following lemma generalizes Lemma SM 1.6 in \cite{yeh2024landmark}, extending the discussion from trivial bundles to principal bundles.
\begin{lemma}\label{lem:double_kernel}
Fix $x \in \mathcal{M}$ and pick $F,G \in C^3(\mathcal{M})$ and $X\in C^3(\E)$. Then, when $\epsilon$ is sufficiently small, we have
\begin{align*}
&\int_{\mathcal{M}} \int_{\mathcal{M}} K_\epsilon(x, z) K_\epsilon\left(z, y\right) F(z) \mathrm{d} V(z) G\left(y\right) \parallelslant_y^x X(y)d V\left(y\right)\\
=&F(x) G(x) X(x)+\epsilon F(x)G(x)X(x)W(x)+\epsilon \frac{\mu_{2,0,1}}{d} F(x) G(x) \nabla^2X(x) \\
&+\epsilon \frac{\mu_{2,0,1}}{d} F(x) X(x) \Delta G(x)+\epsilon\frac{\mu_{2,0,1}}{2 d} G(x) X(x) \Delta F(x)\\
& +\epsilon \frac{2\mu_{2,0,1}}{d} F(x) \nabla^{\E} X(x) \cdot \nabla G(x)+\epsilon \frac{\mu_{2,0,1}}{d} G(x) \nabla^{\E} X(x) \cdot \nabla F(x) \\
& +\epsilon \frac{\mu_{2,0,1}}{d} X(x) \nabla F(x) \cdot \nabla G(x)_\mathcal{O}(\epsilon^{3/2})
\end{align*}
where
\begin{align*}
W(x)=\frac{3\mu_{2,0,1}}{d}s(x)+\int_{\mathbb{R}^d}B_{21,\epsilon}(\sqrt{\epsilon}v)\mathrm{d}v\,,
\end{align*}
where $s$ is the scalar curvature associated with $g$ and $B_{21,\epsilon}$ is defined in Lemma \ref{lem:refkernel}. Thus, $W$ depends on the scalar curvatures and the second fundamental form.
\end{lemma}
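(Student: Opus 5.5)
The plan is to follow the scalar argument of Lemma SM 1.6 in \cite{yeh2024landmark} and insert the bundle structure only through the covariant Taylor expansion of $\parallelslant_y^x X(y)$.

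\textbf{Truncation.} First apply Lemma~\ref{lem:trunc} and the exponential decay in Assumption~\ref{asp:2}. This lets us restrict both integrals to $\tilde B_{\epsilon^\gamma}(x)$ for some $0<\gamma<1/2$, at cost $O(\epsilon^2)$. On this ball we write $y=\exp_x v$ and integrate first in $z$.

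\textbf{Inner integral.} Lemma~\ref{lem:refkernel} gives the $z$-integral as
\[
\epsilon^{d/2}\bigl[F(x)A_{0,\epsilon}(v)+\epsilon^{1/2}A_{1,\epsilon}(F,v)+\epsilon A_{2,\epsilon}(F,v)\bigr]+O(\epsilon^{d/2+3/2}).
\]
The $A_{2,\epsilon}$ term contributes $\tfrac12\nabla^2_{w,w}F$, the Ricci term and the $B_{21}$ term.

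\textbf{Outer integral.} Multiply by $G(y)\parallelslant_y^xX(y)$ and pass to normal coordinates with
\[
dV(y)=\bigl(1-\tfrac16\mathrm{Ric}_{kl}v^kv^l+O(\|v\|^3)\bigr)dv.
\]
Then expand
\[
G(y)=G(x)+\nabla_vG+\tfrac12\nabla^2_{v,v}G+O(\|v\|^3),
\]
\[
\parallelslant_y^xX(y)=X(x)+\nabla^{\E}_vX(x)+\tfrac12\nabla^{\E\,2}_{v,v}X(x)+O(\|v\|^3).
\]
These are the same covariant Taylor expansions used in the proof of Lemma~\ref{lem:double_para}. Next substitute $v=\sqrt\epsilon\,\tilde v$. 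The $\epsilon^{d/2}$ factor from the inner integral and $dv=\epsilon^{d/2}d\tilde v$ combine with the kernel normalization $\epsilon^{-d}$ that is implicit in the stated leading term $F(x)G(x)X(x)$.

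\textbf{Collecting orders.} We group terms by powers of $\epsilon^{1/2}$, using the parities $A_0(v)=A_0(-v)$, $A_1(F,v)=-A_1(F,-v)$ and $A_2(F,v)=A_2(F,-v)$.
\begin{itemize}
\item Odd terms integrate to zero. The pairings (even$\times$even) and (odd$\times$odd) survive at order $\epsilon$.
\item The leading term is $F G X\int A_0$. Since $A_0=K*K$, we get $\int A_0=\mu_{0,0,1}^2=1$.
\item The second moment of $A_0$ is twice $\mu_{2,0,1}$, because the variance of a convolution doubles. This produces the coefficients $\tfrac{\mu_{2,0,1}}{d}\nabla^2X$, $\tfrac{\mu_{2,0,1}}{d}X\Delta G$ and $\tfrac{2\mu_{2,0,1}}{d}\nabla^{\E}X\cdot\nabla G$. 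Here the cross term $\nabla_vG\,\nabla^{\E}_vX$ carries a factor $2$ from the product rule.
\item The $F$-derivatives arise from $A_1$ and $A_2$, and each brings only one kernel's second moment. This accounts for the halved coefficients: $\tfrac{\mu_{2,0,1}}{2d}\Delta F$, and $\tfrac{\mu_{2,0,1}}{d}$ for the products $\nabla F\cdot\nabla G$ and $\nabla F\cdot\nabla^{\E}X$. These come from pairing $A_1$ with the first-order terms in $G$ and $X$.
\item The Ricci contributions from $dV(y)$ and from $B_{20}$ are traced to $s(x)$ times moments. The $B_{21}$ term is kept as $\int B_{21,\epsilon}(\sqrt\epsilon v)dv$. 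Together these form $\epsilon FGXW$.
\end{itemize}

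\textbf{Main obstacle.} The main difficulty is the bookkeeping in the step that evaluates the mixed moments of $A_{1,\epsilon}$ against linear factors in $v$. One has to compute $\int\!\!\int K(\|w\|)K(\|w-\tilde v\|)\,w^i\tilde v^j\,dw\,d\tilde v$. I would change variables to $(w,\tilde v-w)$, which factorizes it as $\mu_{2,0,1}\delta_{ij}/d$. All remaining error terms would be bounded by cubic moments, using exponential decay; this gives $O(\epsilon^{3/2})$.
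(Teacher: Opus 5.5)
Your proposal is correct and follows the paper's proof essentially step for step. Both use truncation, Lemma~\ref{lem:refkernel} for the inner $z$-integral, covariant Taylor expansions of $G(y)$ and $\parallelslant_y^xX(y)$ with the volume-form correction, and the parities of $A_{0,\epsilon},A_{1,\epsilon},A_{2,\epsilon}$ to isolate the order-$\epsilon$ terms. Your explicit factorization of the mixed $A_{1,\epsilon}$ moment and your remark on the suppressed $\epsilon^{d}$ normalization are, if anything, more careful than the paper's sketch.

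The only coefficient left unchecked, by you and by the paper alike, is the Ricci constant inside $W$. Carrying the $-\tfrac16$ volume factors through carefully does not seem to produce exactly $\tfrac{3\mu_{2,0,1}}{d}$. This is harmless, since the $\epsilon FGXW$ term cancels between numerator and denominator in the proof of Theorem~\ref{thm:bias}.
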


\begin{proof}
This proof is very similar to Lemma SM 1.6 in \cite{yeh2024landmark}, with the only difference being the inclusion of the bundle field $X\in C^3(\E)$. Consequently, this proof will provide a brief overview of the main arguments, while the detailed calculations can be referenced in Lemma SM 1.6 in \cite{yeh2024landmark} or Lemma A.6 in \cite{talmon2019} for further information.

By Lemma \ref{lem:trunc} and Lemma \ref{lem:refkernel}, let $y=\exp_xv$, where $v\in T_x\M$ and we directly compute
\begin{align*}
&\int_{\mathcal{M}} \int_{\mathcal{M}} K_\epsilon(x, z) K_\epsilon\left(z, y\right) F(z) \mathrm{d} V(z) G\left(y\right)\parallelslant_y^x X(y) \mathrm{d} V\left(y\right)\\
=& \epsilon^{d / 2}\int_{\mathbb{R}^d} \left[F(x) A_{0,\epsilon}(v)+\epsilon^{1/2} A_{1,\epsilon}(F, v)+\epsilon A_{2,\epsilon}(F, v)+\mathcal{O}(\epsilon^{3 / 2})\right] \\
& \times\left[X(x)+\nabla_v^\E X(x)+\frac{\nabla_{vv}^{\E} X(x)}{2}+\mathcal{O}\left(\|v\|^3\right)\right] \\
& \times\left[G(x)+\nabla_v G(x)+\frac{\nabla_{vv} G(x)}{2}+\mathcal{O}\left(\|v\|^3\right)\right] \\
& \times\left[1-\operatorname{Ric}_{i j}(x) v^i v^j+\mathcal{O}\left(\|v\|^3\right)\right] \mathrm{d} v+\mathcal{O}\left(\epsilon^{d +3/2}\right) \\
:=\,&\epsilon^{d/2}[I_0+I_{11}+ \epsilon^{1/2}I_{12}+ \epsilon^{1/2}I_{13}+I_{21}+I_{22}+\epsilon I_{23}+ I_{24}]+\mathcal{O}(\epsilon^{d+3/2})\,,
\end{align*}
where
\begin{align*}
I_0 &=F(x)G(x)X(x)\int_{\mathbb{R}^d} A_{0,\epsilon}(v)dv\\
I_{11} &=F(x)\int_{\mathbb{R}^d} A_{0,\epsilon}(v)\nabla_v^{\E} X\cdot\nabla_v G(x)dv,\\ I_{12}& =G(x)\int_{\mathbb{R}^d} A_{1,\epsilon}(F, v)\nabla_v^{\E}X(x)dv,\\
I_{13} &=X(x)\int_{\mathbb{R}^d} A_{1,\epsilon}(F, v)\nabla_vG(x)dv
\end{align*}
and
\begin{align*}
I_{21}&=F(x)G(x)\int_{\mathbb{R}^d} A_{0,\epsilon}(v)\frac{\nabla_{vv}^{\E}X(x)}{2}dv, \quad I_{22}=F(x)X(x)\int_{\mathbb{R}^d} A_{0,\epsilon}(v)\frac{\nabla_{vv}G(x)}{2}dv,\\
I_{23} &=G(x)X(x)\int_{\mathbb{R}^d} A_{2,\epsilon}(F,v)dv,\quad I_{24} = F(x)G(x)X(x)\int_{\mathbb{R}^d} A_{0,\epsilon}(v)\operatorname{Ric}_{i j}(x) v^i v^jdv\,.
\end{align*}

Based on the straightforward computation and $\mu_{1,0,1}=1$, with particular attention to the differentiation of the bundle, we can obtain the following:
\begin{align*}
I_0 &=\epsilon^{d/2}F(x)G(x)X(x)\\
I_{11} &=\epsilon^{d/2+1}F(x)\frac{2\mu_{2,0,1}}{d}\nabla^{\E}X(x)\cdot\nabla G(x),\\
I_{12}& =\epsilon^{d/2+1/2}G(x)\frac{\mu_{2,0,1}}{d}\nabla^{\E}X(x)\cdot\nabla F(x),\\
I_{13} &=\epsilon^{d/2+1/2}X(x)\frac{\mu_{2,0,1}}{d}\nabla F(x)\cdot\nabla G(x)
\end{align*}
and
\begin{align*}
I_{21}&=\epsilon^{d/2+1}F(x)G(x)\frac{\mu_{2,0,1}}{d}\nabla^2 X(x), \quad I_{22}=\epsilon^{d/2+1}F(x)X(x)\frac{\mu_{2,0,1}}{d}\Delta G(x),\\
I_{23} &=G(x)X(x)\frac{\mu_{2,0,1}}{2d}\Delta F(x)+\epsilon^{d/2+1}\frac{2\mu_{2,0,1}}{d}F(x)G(x)X(x)s(x),\\
&\quad\quad+\epsilon^{d/2+1}F(x)G(x)X(x)\int_{\mathbb{R}^d}B_{21,\epsilon}(\sqrt{\epsilon}v)\mathrm{d}v\\
I_{24} &= \epsilon^{d/2+1}\frac{\mu_{2,0,1}}{d}F(x)G(x)X(x)s(x)\,.
\end{align*}
where $s$ is the scalar curvature associated with $g$ and $B_{21,\epsilon}$ is defined in Lemma \ref{lem:refkernel}. Remark that the function $B_{21,\epsilon}$ depends on the second fundamental form. For convenience, we denote
\begin{equation*}
W(x)=\frac{3\mu_{2,0,1}}{d}s(x)+\int_{\mathbb{R}^d}B_{21,\epsilon}(\sqrt{\epsilon}v)\mathrm{d}v\,,
\end{equation*}
Putting everything together, the desired result follows.
\end{proof}

With above lemmas, we prove Theorem \ref{thm:bias}. First, we approximate the piecewise parallel transport based on Lemma \ref{lem:double_para}, and then we integrate the numerator and denominator separately according to Lemma \ref{lem:double_kernel}. The proof is as follows.
\begin{proof}
Let $X\in C(\E)$. By definition of landmark $(\beta,\alpha)$-normalized diffusion operator, we have
\begin{equation*}
T_{\epsilon,\beta,\alpha}^{(R)}X(x)=\int_{\mathcal{M}} \frac{S_{\epsilon,\beta,\alpha}^{(R)}(x, y)}{d_{\epsilon,\beta,\alpha}^{(R)}(x)}  X(y) d \nu(y)\,.
\end{equation*}
We first compute the numerator. By Lemma \ref{lem:double_para}, we have
\begin{align}
&d_{\exnormd,\beta}(x)^{\alpha}\epsilon^{-\alpha\beta d+\alpha d}\int_{\mathcal{M}} S_{\epsilon,\beta,\alpha}^{(R)}(x, y)  X(y) d \nu(y)\notag\\
&\quad=\epsilon^{-\alpha\beta d+\alpha d}\int_{\M}\int_{\mathcal{M}} K_\epsilon(x,z) K_\epsilon\left(z,y\right)d_{\innormd}(z)^{-\beta}\parallelslant^x_z\parallelslant^z_y d \nu^{\Z}(z)X(y)d_{\exnormd,\beta}(y)^{-\alpha}d\nu(y)\notag\\
&\quad=\epsilon^{-\alpha\beta d+\alpha d}\int_{\M}\int_{\mathcal{M}} K_\epsilon(x,z) K_\epsilon\left(z,y\right)d_{\innormd}(z)^{-\beta} d \nu^{\Z}(z)\parallelslant^x_yX(y)d_{\exnormd,\beta}(y)^{-\alpha}d\nu(y)+\mathcal{O}(\epsilon^{3/2})\notag\\
&\quad=\int_{\M}K^{(R)}_{\epsilon,\beta}(x,y)\parallelslant^x_yX(y)\frac{p(y)}{q_\beta(y)^{\alpha}p(y)^{\alpha}(1+\epsilon\bar{E}_2(y)+\mathcal{O}(\epsilon^2))^{\alpha}}dV(y)+\mathcal{O}(\epsilon^{3/2})\notag\\
&\quad=\int_{\M}K^{(R)}_{\epsilon,\beta}(x,y)\parallelslant^x_yX(y)\rho_{\beta,\alpha}(y)dV(y)\label{eq:bias_up}\\
&\quad\quad\quad-\epsilon\alpha\int_{\M}K^{(R)}_{\epsilon,\beta}(x,y)\parallelslant_y^x X(y)\rho_{\beta,\alpha}(y)\bar{E}_2(y)dV(y)+\mathcal{O}(\epsilon^{3/2})\label{eq:bias_down}
\end{align}
where $\rho_{\beta,\alpha}(x)=q_\beta(x)^{-\alpha}p(x)^{1-\alpha}$. Now, focus on (\ref{eq:bias_up}) with $\epsilon^{\beta d}$,
\begin{align*}
&\epsilon^{\beta d}\int_{\M}K^{(R)}_{\epsilon,\beta}(x,y)\parallelslant^x_yX(y)\rho_{\beta,\alpha}(y)dV(y)\\
&\quad=\int_{\M}\int_{\mathcal{M}} K_\epsilon(x,z) K_\epsilon\left(z,y\right)\frac{p_\Z(z)}{p(z)^\beta p_\Z(z)^\beta(1+\epsilon \bar{E}_1(z)+\mathcal{O}(\epsilon^2))^\beta} d V(z)\\
&\hspace{2.5cm}\parallelslant_y^xX(y)\rho_{\beta,\alpha}(y)dV(y)\\
&\quad=\int_{\M}\int_{\mathcal{M}} K_\epsilon(x,z) K_\epsilon\left(z,y\right)q_\beta(z)dV(z)\parallelslant_y^xX(y)\rho_{\beta,\alpha}(y)dV(y)\\
&\quad\quad-\epsilon\beta\int_{\M}\int_{\mathcal{M}} K_\epsilon(x,z) K_\epsilon\left(z,y\right)q_\beta(z)\bar{E}_1(z)dV(z)\parallelslant_y^xX(y)\rho_{\beta,\alpha}(y)dV(y)+\mathcal{O}(\epsilon^2)
\end{align*}
where $q_\beta(x)=p(x)^{-\beta}p_{\Z}(x)^{1-\beta}$. By the similar argument, (\ref{eq:bias_down}) can be expressed as follows:
\begin{align*}
&\epsilon\alpha\int_{\M}K^{(R)}_{\epsilon,\beta}(x,y)\parallelslant_y^xX(y)\rho_{\beta,\alpha}(y)\bar{E}_2(y)dV(y)\\
&\quad=\epsilon\alpha\int_{\M}\int_{\mathcal{M}} K_\epsilon(x,z) K_\epsilon\left(z,y\right)q_\beta(z)dV(z)\parallelslant_y^xX(y)\rho_{\beta,\alpha}(y)\bar{E}_2(y)dV(y)+\mathcal{O}(\epsilon^2)
\end{align*}
It is sufficient to apply Lemma \ref{lem:double_kernel} on the expressions of (\ref{eq:bias_up}) and (\ref{eq:bias_down}),
\begin{equation}\label{eq:nume}
\begin{aligned}
&d_{\exnormd,\beta}(x)^{\alpha}\epsilon^{-\alpha\beta d+\alpha d+\beta d}\int_{\mathcal{M}} S_{\epsilon,\beta,\alpha}^{(R)}(x, y)  X(y) d \nu(y)\\
&\quad=q_{\beta}(x) \rho_{\beta,\alpha}(x) X(x)+\epsilon q_{\beta}(x)\rho_{\beta,\alpha}(x)X(x)W(x)+\epsilon \frac{\mu_{2,0,1}}{d} q_{\beta}(x) \rho_{\beta,\alpha}(x) \nabla^2X(x) \\
&\quad\quad+\epsilon \frac{\mu_{2,0,1}}{d} q_{\beta}(x) X(x) \Delta \rho_{\beta,\alpha}(x)+\epsilon\frac{\mu_{2,0,1}}{2 d} \rho_{\beta,\alpha}(x) X(x) \Delta q_{\beta}(x)\\
&\quad\quad+\epsilon \frac{2\mu_{2,0,1}}{d} q_{\beta}(x) \nabla^{\E} X(x) \cdot \nabla \rho_{\beta,\alpha}(x)+\epsilon \frac{\mu_{2,0,1}}{d} \rho_{\beta,\alpha}(x) \nabla^{\E} X(x) \cdot \nabla q_{\beta}(x) \\
&\quad\quad+\epsilon \frac{\mu_{2,0,1}}{d} X(x) \nabla q_{\beta}(x) \cdot \nabla \rho_{\beta,\alpha}(x)\\
&\quad\quad-\epsilon\beta q_{\beta}(x) \rho_{\beta,\alpha}(x) X(x)\bar{E}_1(x)-\epsilon\alpha q_{\beta}(x) \rho_{\beta,\alpha}(x) X(x)\bar{E}_2(x)+\mathcal{O}(\epsilon^{3/2})
\end{aligned}
\end{equation}
where
\begin{align*}
W(x)=\frac{3\mu_{2,0,1}}{d}s(x)+\int_{\mathbb{R}^d}B_{21,\epsilon}(\sqrt{\epsilon}v)\mathrm{d}v\,.
\end{align*}
By the similar arguments, the denominator of $T_{\epsilon,\beta,\alpha}^{(R)}X(x)$ can be organized as
\begin{align*}
&d_{\exnormd,\beta}(x)^{\alpha}\epsilon^{-\alpha\beta d+\alpha d+\beta d}d_{\epsilon,\beta,\alpha}^{(R)}(x)\\
&\quad=\int_{\M}\int_{\mathcal{M}} K_\epsilon(x,z) K_\epsilon\left(z,y\right)d_{\innormd}(z)^{-\beta}d \nu^{\Z}(z)d_{\exnormd,\beta}(y)^{-\alpha}d\nu(y)\notag\\
&\quad=q_{\beta}(x) \rho_{\beta,\alpha}(x) +\epsilon q_{\beta}(x)\rho_{\beta,\alpha}(x)W(x)+\epsilon \frac{\mu_{2,0,1}}{d} \nabla q_{\beta}(x) \cdot \nabla \rho_{\beta,\alpha}(x)\\
&\quad\quad+\epsilon \frac{\mu_{2,0,1}}{d} q_{\beta}(x)  \Delta \rho_{\beta,\alpha}(x)+\epsilon\frac{\mu_{2,0,1}}{2 d} \rho_{\beta,\alpha}(x)  \Delta q_{\beta}(x)\\
&\quad\quad-\epsilon\beta q_{\beta}(x) \rho_{\beta,\alpha}(x)\bar{E}_1(x)-\epsilon\alpha q_{\beta}(x) \rho_{\beta,\alpha}(x)\bar{E}_2(x)+\mathcal{O}(\epsilon^{3/2})
\end{align*}
Let normal coordinates around $x$ associated with $g$ and denote $\{E_i\}_{i=1}^d \subset T_x \mathcal{M}$ to be an orthonormal basis associated with $g$. Putting everything together, the desired results follows,
\begin{align*}
T_{\epsilon,\beta,\alpha}^{(R)} X(x)=X(x)&+\frac{\epsilon \mu_{2,0,1}}{d} \nabla^{2} X(x)\\
&+\frac{\epsilon \mu_{2,0,1}}{d}\sum^{d}_{i=1}\left(\frac{2 \nabla_{E_i} \rho_{\beta,\alpha}(x)}{\rho_{\beta,\alpha}(x)}+\frac{\nabla_{E_i} q_\beta(x)}{q_\beta(x)}\right) \nabla^{\mathcal{E}}_{E_i} X(x)+\mathcal{O}\left(\epsilon^{3/2}\right)\,,
\end{align*}
where $q_\beta(x)=p(x)^{-\beta}p_{\Z}(x)^{1-\beta}$, $\rho_{\beta,\alpha}(x)=q_\beta(x)^{-\alpha}p(x)^{1-\alpha}$.
\end{proof}

\subsection{Proof of Thoerem \ref{thm:var}}
Before we prove Theorem \ref{thm:var}, we define some functions for convenience:
\begin{align*}
\hat{K}_{\epsilon,0}^{(R)}(z_k,z_l)&:=\frac{1}{n} \sum_{i=1}^n \epsilon^{-d / 2} K_\epsilon\left(z_k, x_i\right) K_\epsilon\left(x_i, z_l\right),\\
\hat{d}_{\innormd,\epsilon,m}(z_k)&:=\frac{1}{m} \sum_{l=1}^m \epsilon^{-d / 2} \hat{K}_{\epsilon,0}^{(R)}(z_k,z_l),\\
\hat{K}_{\epsilon,\beta,m}^{(R)}(x_i,x_j)&:=\frac{1}{m} \sum_{k=1}^m \epsilon^{-d / 2} K_\epsilon\left(x_i, z_k\right) K_\epsilon\left(z_k, x_j\right)\hat{d}_{\innormd,\epsilon,m}(z_k)^{-\beta},\\
\hat{d}_{\exnormd,\epsilon,\beta,n}(x_i)&:=\frac{1}{n} \sum_{j=1}^n \epsilon^{-d / 2} \hat{K}_{\epsilon,\beta,m}^{(R)}(x_i,x_j),\\
\hat{K}_{\epsilon,\beta,\alpha,n}^{(R)}(x_i,x_j)&:= \hat{K}_{\epsilon,\beta,m}^{(R)}\left(x_i, x_j\right) \hat{d}_{\exnormd,\epsilon,\beta,n}(x_j)^{-\alpha},\\
\hat{d}^{(R)}_{\epsilon,\beta,\alpha,n}(x_i)&:=\frac{1}{n} \sum_{j=1}^n \epsilon^{-d / 2} \hat{K}_{\epsilon,\beta,\alpha,m}^{(R)}(x_i,x_j)\,.
\end{align*}

\begin{proof}
Fix $x_i\in\M$ and $\alpha,\beta\in [0,1]$. Recall that $\Omega_{ik}=u^{-1}_i\parallelslant_k^iu_k$. Then, by Lemma \ref{lem:double_para}, we have
\begin{equation*}
\Omega_{ik}\Omega_{jk}^\top=u^{-1}_i\parallelslant_k^i\parallelslant_j^ku_j=u^{-1}_i\parallelslant_j^iu_j+\mathcal{O}(\epsilon^{3/2})=\Omega_{ij}+\mathcal{O}(\epsilon^{3/2})
\end{equation*}
For convenience, we define
\begin{equation*}
[L\boldsymbol{f}][i]:=\frac{1}{\epsilon}\left[\left(I_{nq}-\left(\mathbf{D}_{\beta,\alpha}^{(R)}\right)^{-1} \mathbf{S}_{\beta,\alpha}^{(R)}\right) \mathbf{X}\right][i]
\end{equation*}
By above statement,
\begin{align*}
\epsilon[L\boldsymbol{f}](i)&=\frac{\frac{1}{n}\sum_{j=1}^n \hat{K}_{\epsilon,\beta,\alpha,m}^{(R)}(x_i, x_j) (\mathbf{X}[i]-\Omega_{ij}\mathbf{X}[j]) }{\frac{1}{n}\sum_{j=1}^n \hat{K}_{\epsilon,\beta,\alpha,m}^{(R)}(x_i, x_j) }+\mathcal{O}(\epsilon^2)\\
&=\frac{\frac{1}{n}\sum_{j=1}^n \hat{K}_{\epsilon,\beta,m}^{(R)}(x_i, x_j)\hat{d}_{\exnormd,\epsilon,\beta,n}(x_j)^{-\alpha} (\mathbf{X}[i]-\Omega_{ij}\mathbf{X}[j]) }{\frac{1}{n}\sum_{j=1}^n \hat{K}_{\epsilon,\beta,m}^{(R)}(x_i, x_j)\hat{d}_{\exnormd,\epsilon,\beta,n}(x_j)^{-\alpha} }+\mathcal{O}(\epsilon^{3/2})
\end{align*}
By Lemma SM 1.8, SM 1.9 in \cite{yeh2024landmark} and the process in proof of Theorem 4.10 in \cite{yeh2024landmark}, we can obtain that with probability $1-\mathcal{O}(n^{-2})$,
\begin{align*}
\epsilon[L\boldsymbol{f}](i)=&\frac{\frac{1}{n}\sum_{j=1}^n K_{\epsilon,\beta,m}^{(R)}(x_i, x_j)\hat{d}_{\exnormd,\epsilon,\beta,n}(x_j)^{-\alpha} (\mathbf{X}[i]-\Omega_{ij}\mathbf{X}[j]) }{\frac{1}{n}\sum_{j=1}^n K_{\epsilon,\beta,m}^{(R)}(x_i, x_j)\hat{d}_{\exnormd,\epsilon,\beta,n}(x_j)^{-\alpha} }\\
&+\mathcal{O}(\epsilon^{3/2})+\mathcal{O}\left(\frac{\sqrt{\log(n)}}{n^{\gamma/2}\epsilon^{d/4}}\right)
\end{align*}
By proof of Theorem 4.10 in \cite{yeh2024landmark}, we can obtain that with probability $1-\mathcal{O}(n^{-2})$,
\begin{equation*}
\epsilon^{-d(1-\beta)}d_{\exnormd,\epsilon,\beta,n}(x_i)=\hat{d}_{\exnormd,\epsilon,\beta,n}(x_i)+\mathcal{O}\left(\frac{\sqrt{\log(n)}}{n^{1/2}\epsilon^{d/4}}\right)
\end{equation*}
Hence, we have with probability $1-\mathcal{O}(n^{-2})$,
\begin{align*}
\epsilon[L\boldsymbol{f}](i)=&\frac{\frac{1}{n}\sum_{j=1}^n K_{\epsilon,\beta,m}^{(R)}(x_i, x_j)d_{\exnormd,\epsilon,\beta,n}(x_j)^{-\alpha} (\mathbf{X}[i]-\Omega_{ij}\mathbf{X}[j]) }{\frac{1}{n}\sum_{j=1}^n K_{\epsilon,\beta,m}^{(R)}(x_i, x_j)d_{\exnormd,\epsilon,\beta,n}(x_j)^{-\alpha} }\\
&+\mathcal{O}(\epsilon^{3/2})+\mathcal{O}\left(\frac{\sqrt{\log(n)}}{n^{\gamma/2}\epsilon^{d/4}}\right)
\end{align*}
Now, we focus on the numerator. Let the random variables
\begin{equation*}
F:=K_{\epsilon,\beta,m}^{(R)}(x_i, x_j)d_{\exnormd,\epsilon,\beta,n}(x_j)^{-\alpha} (u^{-1}_{i}X(x_i)-u_i^{-1}\parallelslant_{Y}^{x_i}X(Y))
\end{equation*}
where $Y\sim p$. By Equation \eqref{eq:nume}
\begin{align*}
\mathbb{E}[F]&=u^{-1}_i\bigg[\epsilon\frac{\mu_{2,0,1}}{d} q_{\beta}(x_i) \rho_{\beta,\alpha}(x_i) \nabla^2\left(X(x)-X(x_i)\right)|_{x=x_i} \\
&\quad+\epsilon \frac{2\mu_{2,0,1}}{d} \left(q_{\beta}(x_i) \nabla \rho_{\beta,\alpha}(x_i)+ \rho_{\beta,\alpha}(x_i) \nabla q_{\beta}(x_i)\right)\cdot\nabla^{\E} \left(X(x)-X(x_i)\right)|_{x=x_i} \bigg]\\
\end{align*}
For the second moment, we focus on the $\ell$-th entry for $\ell=1,\cdots,q$:
\begin{align*}
&\mathbb{E}[F(\ell)^2]\\
&=\epsilon^{-2d}\int_{\M}K_{\epsilon,\beta,m}^{(R)}(x_i, x_j)^2d_{\exnormd,\epsilon,\beta,n}(x_j)^{-2\alpha} (u^{-1}_{i}X(x_i)-u_i^{-1}\parallelslant_{Y}^{x_i}X(Y))(\ell)^2p(y)dV(y)\\
&=\epsilon^{-d} \int \epsilon^{-\frac{d}{2}} A_0^{2}(v) q_\Z^2(x_i)\left[u_i^{-1} \nabla_v X\left(x_i\right)\right](\ell)^2 p(x_i)^{1-2\alpha}d v\\
&=2 \epsilon^{1-d} q_\Z^2(x)\left(x_i\right) p(x_i)^{1-2\alpha} \frac{\mu_{2,0,1}\mu_{2,0,2}}{d} \sum_{j=1}^d\left(u_i^{-1}\nabla_{E_j} X\left(x_i\right)\right)(\ell)^2
\end{align*}
By the similar work, we can control the denominator. Thus, with probability $1-\mathcal{O}(n^{-2})$, we have
\begin{align}\label{eq:event_4}
&\left[\left(I_{nq}-\left(\mathbf{D}_{\beta,\alpha}^{(R)}\right)^{-1} \mathbf{S}_{\beta,\alpha}^{(R)}\right) \mathbf{X}\right][i]\\
=&\,u^{-1}_i\left[X(x_i)-T_{\epsilon,\beta,\alpha}^{(R)}X(x_i)\right]+\mathcal{O}\left(\frac{\sqrt{\log n}}{n^{\gamma/2}\epsilon^{d/4-1/2}}\right)+\mathcal{O}(\epsilon^{3/2})\,,\nonumber
\end{align}
where the implied constant in the first term depends on $\sum_{j=1}^d\left(u_i^{-1}\nabla_{E_j} X\left(x_i\right)\right)(l)^2$ for all $\ell=1,\cdots,q$.
\end{proof}

\subsection{Proof of Proposition \ref{prop:eff}}
In Lemma \ref{lem:refkernel}, we analyzed the leading term of the effective kernel without normalized setting, corresponding to the case $\alpha=\beta=0$. Here, we consider the normalization factors. Following the calculations in \cite{shen2022}, we show Proposition \ref{prop:eff}.

\begin{proof}
We assume the kernel is Gaussian (see Equation \eqref{eq:kernel}). We can calculate directly,

\begin{equation*}
K_\epsilon\left(x,z\right) K_\epsilon\left(z,y\right)= e^{-\frac{2\left\|z-\left(x+y\right) / 2\right\|^2}{\epsilon}} e^{-\frac{\left\|x-y\right\|^2}{2 \epsilon}}\,.
\end{equation*}

Now, we consider the effective kernel,

\begin{equation*}
\begin{aligned}
K_{\epsilon,\beta,\alpha}\left(x,y\right) & =\int_{\M} K_\epsilon\left(x,z\right) K_\epsilon\left(z, y\right) p_\Z(z) d V(z) \\
& =e^{-\frac{\left\|x-y\right\|^2}{2 \epsilon}} \frac{1}{d_{\mathcal{X},\epsilon,\beta}(x)^{\alpha}d_{\mathcal{X},\epsilon,\beta}(y)^{\alpha}}\int_{\M}e^{-\frac{2\left\|z-\left(x+y\right) / 2\right\|^2}{\epsilon}} \frac{ p_{\Z}(z)}{d_{\Z,\epsilon}(z)^\beta} d V(z)\\
&=\epsilon^{-d\alpha+d\beta\alpha-d\beta+d/2}c_{\beta,\alpha}(x,y)e^{-\frac{\left\|x-y\right\|^2}{2 \epsilon}}
\end{aligned}
\end{equation*}
where $c_{\beta,\alpha}$ only depends on depends on $x$ and $y$ and is of order 1 since $\M$ is compact.

Next, let $u\in E_y$. By Lemma \ref{lem:double_para}, we have
\begin{equation*}
\begin{aligned}
S_{\epsilon,\beta,\alpha}\left(x,y\right)u & =\int_{\M} K_\epsilon\left(x,z\right) K_\epsilon\left(z, y\right) \parallelslant_z^x\parallelslant_y^z u\, p_\Z(z) d V(z) \\
& =\parallelslant_y^x u\int_{\M} K_\epsilon\left(x,z\right) K_\epsilon\left(z, y\right)\, p_\Z(z) d V(z) +\mathcal{O}(\epsilon^3)\\
& =\epsilon^{-d\alpha+d\beta\alpha-d\beta+d/2}c_{\beta,\alpha}(x,y)e^{-\frac{\left\|x-y\right\|^2}{2 \epsilon}}\parallelslant_y^x u +\mathcal{O}(\epsilon^3)\,.
\end{aligned}
\end{equation*}
\end{proof}


\end{document}